\newcommand{\normcurr}[1]{\|#1\|_{1,2,\intercal}}
\newcommand{\nrf}{K^{\mathsf{rf}}}
\newcommand{\unif}{\mathsf{Unif}}
\newcommand{\tv}{\mathsf{TV}}
\newcommand{\polspace}{\mathbb{Q}}
\newcommand{\Pb}[1]{\mathbb{P}\left[#1\right]}
\newcommand{\psit}{\tilde{\psi}}
\newcommand{\Tr}{\mathsf{Tr}}
\newcommand{\iprod}[2]{\langle #1, #2 \rangle}
\newcommand{\abs}[1]{\left| #1 \right|}
\newcommand{\Thetah}{\hat{\Theta}}
\newcommand{\norm}[1]{\left\| #1 \right\|}
\theoremstyle{plain}
\newtheorem{theorem}{Theorem}[section]
\newtheorem{lemma}[theorem]{Lemma}
\newtheorem{corollary}[theorem]{Corollary}
\theoremstyle{definition}
\newtheorem{definition}[theorem]{Definition}
\newtheorem{assumptiontab}[theorem]{Assumption (Tab)}
\newtheorem{assumptionlin}[theorem]{Assumption (Lin)}
\theoremstyle{remark}
\newtheorem{remark}[theorem]{Remark}
\newtheorem{claim}[theorem]{Claim}
\icmltitlerunning{Multi-User Reinforcement Learning with Low Rank Rewards}
\begin{document}

\twocolumn[
\icmltitle{Multi-User Reinforcement Learning with Low Rank Rewards}



\icmlsetsymbol{equal}{*}

\begin{icmlauthorlist}
\icmlauthor{Dheeraj Nagaraj}{yyy}
\icmlauthor{Suhas S Kowshik}{amz,disc}
\icmlauthor{Naman Agarwal}{zzz}
\icmlauthor{Praneeth Netrapalli}{yyy}
\icmlauthor{Prateek Jain}{yyy}
\end{icmlauthorlist}

\icmlaffiliation{yyy}{Google Research, Bangalore}
\icmlaffiliation{zzz}{Google Research, Princeton}
\icmlaffiliation{amz}{Amazon India}
\icmlaffiliation{disc}{Work was done prior to joining Amazon}

\icmlcorrespondingauthor{Dheeraj Nagaraj}{dheerajnagaraj@google.com}

\icmlkeywords{Machine Learning, ICML}

\vskip 0.3in
]



\printAffiliationsAndNotice{\icmlEqualContribution} 

\begin{abstract}
   We consider collaborative multi-user reinforcement learning, where multiple users have the same state-action space and transition probabilities but different rewards. Under the assumption that the reward matrix of the $N$ users has a low-rank structure -- a standard and practically successful assumption in the collaborative filtering setting -- we design algorithms with significantly lower sample complexity  compared to the ones that learn the MDP individually for each user. Our main contribution is an algorithm which explores rewards collaboratively with $N$ user-specific MDPs and can learn rewards  efficiently in two key settings: tabular MDPs and linear MDPs. When $N$ is large and the rank is constant, the sample complexity per MDP depends logarithmically over the size of the state-space, which represents an exponential reduction (in the state-space size) when compared to the standard ``non-collaborative'' algorithms. Our main technical contribution is a method to construct policies which obtain data such that low rank matrix completion is possible (without a generative model). This goes beyond the regular RL framework and is closely related to mean field limits of multi-agent RL. 
\end{abstract}

\section{Introduction}
Reinforcement learning (RL) has recently seen tremendous empirical and theoretical success \cite{mnih2015human,sutton1992reinforcement,jin2020provably,gheshlaghi2013minimax,dann2015sample}. Near optimal algorithms have been proposed to explore and learn a given MDP with sample access to trajectories. Multi-agent RL, where multiple agents interact among themselves and the environment to collect rewards, has gained a lot of interest due to immense practical applications. However, even simple instances of multi-agent RL, like restless bandits, can be provably hard \cite{papadimitriou1999complexity}.

In this work, we consider the problem of learning optimal policies for multiple MDPs collaboratively so that the total number of trajectories sampled per MDP is smaller than the number of trajectories required to learn them individually. We assume that the various users have the same transition matrices, but different rewards and the rewards have a low rank structure. This is closely related to mean-field limits of certain instances of multi-agent RL as described in Section~\ref{sec:MARL}.  

\textbf{Motivation}
From the point of view of RL, this is an instance of multi-task reinforcement learning (MTRL), various versions of which have been considered in the literature \cite{brunskill2013sample,d2020sharing,teh2017distral,hessel2019multi,lazaric2012transfer}. Here, an agent learns different MDPs together with certain common structures. This shared structure could be a common domain (such as moving towards different target points, but in the same environment) or similarity in the task to be performed (such as picking up and moving different kinds of objects). Our problem setup falls in the former category.

Recently, collaborative filtering has been studied in the online learning setting \cite{bresler2021regret,jain2022online,ariu2020regret,huleihel2021learning,nguyen2019recommendation}, where multiple bandit instances are simultaneously explored under low rank assumptions in order to learn the preferences of multiple users simultaneously. From this point of view, our work adds temporal dynamics, such as change in preferences over time, based on past actions. That is, we consider non-stationary environments via Markov Decision Processes. 

To motivate our setup, we consider the example of recommendation systems in e-commerce or video streaming. In this context, classical collaborative filtering setup assumes that the preferences are static and not influenced by the recommendation system itself. However, this is untrue in the real world. Buying an item changes the preferences of the customer. In case the system recommends a TV and the customer buys a TV, a TV stand might be the most relevant recommendation. In case they watch a recommended video about astrophysics, they might want to watch other astrophysics videos because the first video piqued their interest in the topic.

 Our work captures such scenarios by formulating this as an RL problem where the state is affected by the actions of the recommendation system and the state in turn affects the rewards (i.e, the user preferences). This idea has gained traction recently, as shown by the survey paper \cite{afsar2022reinforcement}. However, in the papers which have been discussed in the survey, the user information is apriori encoded into the state embedding and the resulting system is treated as a single agent MDP. In this work, we bring forth the multi-agent aspect by viewing the RL approach as an enhancement of the classical collaborative filtering allowing us discover similarity among users. We believe the theoretical insights gained in such a setting can lead to Deep RL algorithms which utilize this multi-agent structure effectively and explicitly. The assumption of a common transition matrix can be relaxed in practice by clustering users based on side information and modeling each cluster to have a common transition matrix (see \cite{mate2022field} and references therein).

\paragraph{Our Contributions}
We introduce the setting of multi-user collaborative reinforcement learning in the case of tabular and linear MDPs. In our study, we isolate and overcome several technical and conceptual challenges in order achieve sample efficient learning. The main technical challenge we encounter is obtaining the right distribution of state-action pairs from users such that we can successfully run low-rank matrix estimation algorithms, without access to a generative model (i.e, we can only deploy policies and query trajectories corresponding to this policy). This requires clever algorithm design since some states can be hard to even reach. In fact, this endeavor goes beyond standard RL methods and is related to functional reward maximization and mean field limits of multi-agent RL as explained in Section~\ref{sec:MARL}. To summarize our contributions:

a) \textbf{Improved Sample Complexity:}
We provide sample efficient algorithms for both these scenarios without access to a generative model. Under the low rank assumption on the reward matrix, the total sample complexity required to learn the near-optimal policies for every user scales as $\tilde{O}(N+|\mathcal{S}||\mathcal{A}|)$ instead of $O(N|\mathcal{S}||
\mathcal{A}|)$ for tabular MDPs and $\tilde{O}(N+d)$ instead of $O(Nd^2)$ for linear MDPs. 

b) \textbf{Collaborative Exploration:} In order to learn the rewards of all the users efficiently under the low-rank assumption, we need to deploy standard low rank matrix estimation algorithms. These require specific kinds of linear measurements (See Section~\ref{sec:rel_works}). Without access to a generative model, the main challenge in this setting is to obtain these linear measurements by querying trajectories of carefully designed policies. We design such algorithms in Section~\ref{sec:alg}.  

c) \textbf{Functional Reward Maximization:}
In the case of linear MDPs, matrix completion is more challenging since we observe measurements of the form $e_i^{\intercal}\Theta^{*}\psi$ where $\Theta^{*} \in \mathbb{R}^{N\times d}$, corresponding to the reward obtained by user $i$, with respect to an embedding $\psi$. Estimating $\Theta^{*}$ under low rank assumptions requires the distribution of $\psi$ to have certain isotropy properties (see Section~\ref{sec:lin_matrix_comp}). We design a procedure which can sample-efficiently estimate policies which lead to these isotropic measurements (Section~\ref{sec:stats_policy}).

d) \textbf{Matrix Completion With Row-Wise Linear Measurements:} For the linear MDP setting, the low rank matrix estimation problem lies somewhere in between the matrix completion \citep{recht2011simpler,jain2013low} and matrix estimation with restricted strong convexity \citep{negahban2009unified}. We give a novel active learning based algorithm where we estimate $\Theta^{*}$ row by row without any  assumptions like incoherence. This algorithm maybe of independent interest. This is described in Section~\ref{sec:lin_matrix_comp}. 
\subsection{Related Works}
\label{sec:rel_works}
\textbf{Related Settings:}
Multi-task Reinforcement learning has been studied empirically and theoretically \cite{brunskill2013sample,taylor2009transfer,d2020sharing,teh2017distral,hessel2019multi,sodhani2021multi}. \cite{modi2017markov} considers learning a sequence of MDPs with side information, where the parameters of the MDP varies smoothly with the context. \cite{shah2020sample} assumes the optimal Q function $Q^{*}(s,a)$, when represented as a $\mathcal{S}\times \mathcal{A}$ matrix, has low rank. With a generative model, they obtain algorithms which makes use of this structure to obtain a smaller sample complexity whenever the discount factor is bounded by a constant. \cite{sam2022overcoming} improves the results in this setting with additional assumptions on the transition matrices. Our setting is different in that we consider multiple users, and do not assume a generative model. Our main contribution is to efficiently obtain measurements conducive to matrix completion. \cite{hu2021near} considers a multi-task RL problem with linear function approximation similar to our setting, but with the assumption of low-rank Bellman closure, where the application of the Bellman operator retains the low rank structure. They obtain a bound depending on the quantity $N\sqrt{d}$ instead of $(N + d)$ like in our work. \cite{lei2019collaborative} RL with low rank assumptions in an experimental context.

\textbf{Low Rank Matrix Estimation:}The low rank assumption is popular in the collaborative filtering literature and has been deployed successfully in a variety of tasks \cite{bell2007scalable,gleich2011rank,hsieh2012low}.  Low rank matrix estimation has been extensively studied in the statistics and ML community for decades in the context of supervised learning \cite{candes2010power,negahban2011estimation,fazel2002matrix,chen2019inference,jain2013low,jain2017non,recht2011simpler,chen2020noisy,chi2019nonconvex} in multi-user collaborative filtering settings. The basic question is to estimate a $d_1 \times d_2$ matrix $M$ given linear measurements $(x_i^{\intercal}My_i)_{i=1}^{n}$ when the number of samples is much smaller than $d_1 \times d_2$ using the assumption that $M$ has low rank. 

a) \emph{Matrix Completion:} $x_i$ and $y_i$ are standard basis vectors. Typically $x_i$ and $y_i$ are picked uniformly at random and recovery guarantees are given whenever the matrix $M$ is incoherent \citep{recht2011simpler}.

b) \emph{Matrix Estimation:} $x_i$ and $y_i$ are not restricted to be standard basis vectors. Typically, they are chosen i.i.d such that the restricted strong convexity holds \citep{negahban2009unified}.

For the case of tabular MDPs, we use the matrix completion setting and for the case of linear MDPs, our setting lies some where in between settings a) and b) as explained above.

\subsection{Notation}
By $\|\cdot\|$ we denote the Euclidean norm and by $e_1,\dots,e_m$ the standard basis vectors of the space $\mathbb{R}^m$ for some $m \in \mathbb{N}$. Let $\mathcal{S}^{d-1} := \{x\in \mathbb{R}^d:\|x\| = 1\}$, $\mathcal{B}_d(r) := \{x \in \mathbb{R}^d: \|x\| \leq r\} $. For any $m \times n$ matrix $A$ and a set $\Omega \subseteq [n]$ by $A^{\Omega}$, we denote the sub-matrix of $A$ where the columns corresponding to $\Omega^{\complement}$ are deleted. By $\Delta(\mathcal{A})$, we denote the set of all Borel probability measures over the set $\mathcal{A}$. In the sequel, 
\section{Problem Setting}
\label{sec:defs}
We consider $N$ users indexed by $[N]$, each of them associated with an MDP with the same state-space $\mathcal{S}$, action space $\mathcal{A}$, horizon $H$ and transition matrices $\mathcal{P} = (P_1,\dots,P_{H-1})$. Here $P_{h}(\cdot|s_{h},a_{h})$ is a probability measure over $\mathcal{S}$, which gives the distribution of the state at time $h+1$ given the action $a_{h} \in \mathcal{A}$ was taken in state $s_{h} \in \mathcal{S}$ at time $h$. Each user has a different reward denoted by $\mathcal{R}_u = (R_{1u},\dots,R_{Hu})$ where $R_{hu} :\mathcal{S}\times \mathcal{A} \to [0,1]$. Denote the MDP associated with the user $u$ by $\mathcal{M}_u := (\mathcal{S},\mathcal{A},\mathcal{P},\mathcal{R}_u)$. For the sake of simplicity, we will assume that the rewards are deterministic.

Assume that all the MDPs start at a random state $S_1$ with the same distribution. 
Consider a policy $\Pi := (\pi_1,\dots,\pi_{H})$ where $\pi_h : \Delta(\mathcal{A})\times \mathcal{S} \to \mathbb{R}^+ $ is a kernel - i.e, $\pi_h(\cdot|s)$ gives the probability distribution over actions given a state $s$ at time $h$.  By $(S_{1:H},A_{1:H})$ we denote the trajectory $ (S_1,A_1),(S_2,A_2),\dots,(S_{H},A_{H}) \in \mathcal{S}
\times \mathcal{A}$. By $(S_{1:H},A_{1:H})\sim \mathcal{M}(\Pi)$ we mean the random trajectory under the policy $\Pi$ - where $A_h \sim \pi_h(\cdot|S_h)$ and $S_{h+1}\sim P_{h}(\cdot|S_{h},A_h)$. That is, it is the trajectory of the MDP under the policy $\Pi$. 
Define the value function of $\mathcal{M}_u$ under policy $\Pi$ as: $V(\Pi,\mathcal{M}_u) := \mathbb{E}_{(S_{1:H},A_{1:H}) \sim \Pi}\sum_{h=1}^{H} R_{hu}(S_h,A_h)$. We will call a policy $\hat{\Pi}_u$ to be $\epsilon$ optimal for $\mathcal{M}_u$ if $V(\hat{\Pi}_u,\mathcal{M}_u) \geq \sup_{\Pi}V(\Pi,\mathcal{M}_u) - \epsilon $. Our goal is to find $\epsilon$ optimal policies for every $u \in [N]$ under low rank assumptions on the rewards $R_{uh}$. We assume that we are allowed to pick any user $u$ and query a trajectory corresponding to any policy $\Pi$.

\paragraph{Reward Free Exploration:} The objective of reward free RL is to explore an MDP (without looking at the rewards) such that we can obtain the optimal policy for every possible reward. After collecting $K$ trajectories from the MDP sequentially (denoted by $\mathcal{D}_K$), the algorithm outputs  functions $\hat{\Pi}$ and $\hat{V}$ whose input is a reward function $\mathcal{R} = (R_h(\cdot,\cdot))_{h=1}^{H}$ (bounded between $[0,1]$) and the output is a nearly-optimal policy $\hat{\Pi}(\mathcal{R})$ and its estimated value $\hat{V}(\hat{\Pi}(\mathcal{R}))$ for this reward function. Denote the MDP with this reward function by $\mathcal{M}_{\mathcal{R}}$. Given $\epsilon > 0$ and $\delta \in [0,1]$, we let $\nrf(\epsilon,\delta)$ to be such that whenever $K \geq \nrf(\epsilon,\delta)$, with probability at-least $1-\delta$ we have:

a) $\sup_{\mathcal{R}}|V(\hat{\Pi},\mathcal{M}_{\mathcal{R}}) - \hat{V}(\hat{\Pi}(\mathcal{R}))| \leq \epsilon$ and b) $\hat{\Pi}$ is an $\epsilon$ optimal policy for $\mathcal{M}_{\mathcal{R}}$ for every $\mathcal{R}$.  

This setting was introduced in \cite{jin2020reward}. In this work, we will use the reward free exploration algorithms in \cite{zhang2020nearly} for tabular MDPs and \cite{wagenmaker2022reward} for linear MDPs.

\paragraph{Tabular MDP Setting}
 $\mathcal{S}$ and $\mathcal{A}$ are finite sets. Denote the reward $R_{hu}(s,a)$ by the $N\times |\mathcal{S}||\mathcal{A}|$ matrix $R_{h}$ where $R_h(u,(s,a)) = R_{hu}(s,a)$. We have the following low-rank assumption:
\begin{assumptiontab}\label{as:tab_lr}
The matrix $R_h$ has rank $r$ for some $r \leq \frac{1}{2}\min(N,|\mathcal{S}||\mathcal{A}|)$.
\end{assumptiontab}

\paragraph{Linear MDP Setting} 

Our definition is slightly different from the one in \cite{jin2020provably}: a) we use two different embeddings for rewards and transitions and b) we impose an $\ell_1$ constraint on the transition embedding and an $\ell_2$ constraint on the reward embedding (instead of $\ell_2$ on both). This is a natural choice since transition embeddings describe a mixture of probability measures as the law of the next state. On a technical level, $\ell_1$ norm is natural when controlling the statistical error due to policy search described in Section~\ref{sec:stats_policy}, which is based on the structural result in Lemma~\ref{lem:struc_lem}.

We consider embeddings $\phi : \mathcal{S}\times \mathcal{A} \to \mathbb{R}^d$, $\psi : \mathcal{S}\times \mathcal{A} \to \mathbb{R}^d$ such that $\|\phi(s,a)\|_1 \leq 1$, $\|\psi(s,a)\|_2 \leq 1$. We make the following assumptions:
\begin{enumerate}
    \item There exists $\theta_{hu} \in \mathbb{R}^d$, $\|\theta_{hu}\|_2 \leq \sqrt{d}$ such that $R_{hu}(s,a) = \langle \theta_{hu},\psi(s,a) \rangle $ and $R_h(s,a) \in [0,1]$.
    \item  There exist signed measures $\mu_{1h},\dots,\mu_{dh}$ over the space $\mathcal{S}$ such that:
$P_h(\cdot|s,a) = \sum_{i=1}^{d}\mu_{ih}(\cdot)\langle \phi(s,a),e_i\rangle $
\end{enumerate}

We will assume that $\mu_i$ are such that $\|\int \mu_{ih}(ds)\phi(s,a)\pi(da|s)\|_1 \leq 1$ and $\sup_{i,h}\int |\mu_{ih}(ds)| \leq C_\mu$. This is true whenever $\mu_{ih}$ are probability measures. We consider different embeddings for transition ($\phi$) and reward ($\psi$) as the transition embeddings have a natural $\|\cdot\|_1$ structure  since they give linear combinations of measure which make up $P_h(\cdot|s,a)$. We denote the $N \times d$ matrix whose $u$-th row is $\theta_{uh}^{\intercal}$ to be $\Theta_h$. The low-rank assumption in this setting takes the following form:
\begin{assumptionlin}\label{as:lin_lr}. The $N\times d$ matrix $\Theta_h$ has rank $r \leq \tfrac{1}{2}\min(N,d)$. 
\end{assumptionlin}

 We restrict our attention to policies given by some fixed policy space $\polspace$. As explained below in Section~\ref{sec:MARL}, this is necessitated by the fact that our techniques are required to go beyond the standard RL setup and might necessarily require non-deterministic policies. However, the space of all possible policies can be very large and intractable. We refer to Section~\ref{sec:more_info_pol} for the construction of randomized policy class $\mathbb{Q}$ such that it contains all $\epsilon$-optimal policies for every possible linear reward.  With some abuse of notation, we define the total variation distance between two kernels as:
$\tv(\pi_h,\pi_h^{\prime}) := \sup_{s \in \mathcal{S}} \tv(\pi_h(\cdot|s),\pi^{\prime}_h(\cdot|s)) $. We define a distance over $\polspace$ by $D_{\polspace}(\Pi^{(1)},\Pi^{(2)}) = \sup_{h \in [H]}\tv(\pi_h^{(1)},\pi_h^{(2)})$, where $\Pi^{(i)} = (\pi_h^{(i)}: h \in [H])$. 

\section{Connection to Multi-Agent RL}
\label{sec:MARL}

We now connect our results to multi-agent reinforcement learning in order to demonstrate why the problem of collaborative RL as described above can be hard. Low rank matrix estimation requires random measurements with specific isotropy properties. For instance, matrix completion results are derived when we observe entries from uniformly random indices (\cite{recht2011simpler}). In our context, this translates to sampling from specific distribution of state and action $(S_h,A_h)$ at time $h$ by depolying a policy $\Pi$ over a uniformly random user $u \in [N]$. Rather than maximizing a scalar reward, this requires us to sample from a distribution with certain properties, going beyond the framework of standard RL. In fact, Section~\ref{sec:more_info_pol}, we show that this requires randomized policies even for simple MDPs. In the tabular MDP case, we sidestep these issues with clever algorithm design. However, in the linear MDP case, this does not seem to be feasible. We use the following connection to mean-field limits of multi-agent RL as sketched below in order to solve the sampling question.

Suppose we pick users uniformly at random ($U_1,\dots,U_T \in [N]$) and deploy a policy $\Pi$ for each of them with corresponding trajectories $S^{(t)}_{1:H},A^{(t)}_{1:H}\sim \mathcal{M}(\Pi)$. We observe `linear measurements' of $\Theta_h$ of the form $(e_{U_t}, \psi(S^{(t)}_h,A^{(t)}_h), e_{U_t}^{\intercal}\Theta_h \psi(S^{(t)}_h,A^{(t)}_h))$. To achieve matrix estimation, we need to query $(S^{(t)}_h,A^{(t)}_h)$ such that the distribution of $\psi(S^{(t)}_h,A^{(t)}_h)$ is `nearly isotropic' ( See~\eqref{eq:dist_prop} in Section~\ref{sec:lin_matrix_comp}). Let $\Gamma_h(\Pi)$ denote the distribution of $S^{(1)}_h,A^{(1)}_h$. In Theorem~\ref{thm:restr_pol}, we show that the conditions in~\eqref{eq:dist_prop} are satisfied whenever $J(\Gamma_h(\Pi)) > 0$ for $J : \Delta(\mathcal{S}\times\mathcal{A}) \to \mathbb{R}$ given by:

\begin{align*}
&J(\Gamma_h(\Pi))
 := \\ &\quad \inf_{x\in \mathcal{S}^{d-1}}\mathbb{E}|\langle \psi(S^{(1)}_h,A^{(1)}_h),x\rangle| \sqrt{d} - \xi d \langle \psi(S^{(1)}_h,A^{(1)}_h),x\rangle^2 \end{align*}

 Our objective now is to find a policy by solving the following optimization problem
\begin{equation}\label{eq:mean_field}  \arg\sup_{\Pi \in \polspace} J(\Gamma_h(\Pi))
\end{equation}
 
 This is similar to the mean field multi-agent control problem presented in~\cite{cammardella2020kullback}. To demonstrate the connection to multi-agent systems, consider $n$ agents with the same MDP $\mathcal{M}$ and embeddings $\phi,\psi$. A trajectory here corresponds to jointly and independently running MDP associated with each agent with the same policy. The collective reward of the system is given by $J(\hat{\Gamma}_h)$, where $\hat{\Gamma}_h$ denotes the empirical distribution of state-actions of the $n$ agents at time $h$. Note that, picking a policy $\Pi_n$ to maximize this reward is a reward maximization problem on the joint multi-agent system. And, for any fixed policy $\Pi$, as $n \to \infty$, $\hat{\Gamma}_h(\Pi) \to \Gamma_h(\Pi)$ under reasonable assumptions on the state space via the law of large numbers. Hence $J(\hat{\Gamma}_h(\Pi)) \to J(\Gamma_h(\Pi))$ under continuity (in some appropriate distance between probability measures). Therefore the planning problem in~\eqref{eq:mean_field} is the same as the multi-agent RL problem described above in the limit $n \to \infty$.

\section{The Algorithm}
\label{sec:alg}
Our algorithm proceeds in 4 phases. In phase 1, we run reward free RL, which selects trajectories from uniformly random users since all the users share the same MDP. Thus, this does not incur a large per-user sample complexity. This step allows us to find near-optimal for \textbf{any} reward function of our choice. At the end of phase 1, the main unknown will the reward matrix of the users. 

Phase 2 is the main technical contribution of work. In phase 2, we use the reward free RL output from phase 1 in order to design collaborative exploration policies. These policies obtain the right linear measurements of the low-rank reward matrix so that we can successfully apply matrix estimation algorithms in phase 3. Phase 4 uses the reward estimate from phase 3 and the reward free RL output from phase 1 in order to learn the optimal policy for every user.

\subsection{Tabular MDP Case:}
\label{subsec:tab_alg}
\textbf{Phase 1: Reward Free Exploration}
We run the reward free RL algorithm in \cite{zhang2020nearly} for $\nrf(\tfrac{\epsilon}{8},\tfrac{\delta}{2}) = C\frac{|\mathcal{S}||\mathcal{A}|H^2\left(|\mathcal{S}|+\log(\tfrac{1}{\delta})\right)}{\epsilon^2}\mathsf{polylog}(\tfrac{|\mathcal{S}||\mathcal{A}|H}{\epsilon}) $ time steps by picking the MDP corresponding to a uniformly random user whenever the reward free RL algorithm queries a trajectory. Let the output of the reward free RL algorithm be $\hat{\Pi}$ and $\hat{V}$.

\textbf{Phase 2: Querying the Reward Matrix}
In this phase we query a `uniform mask' with the parameter $p$ for the reward matrix $R_h$ using Algorithm~\ref{alg:mask_samp}. For each $(s,a) \in \mathcal{S}\times\mathcal{A}$ and $h\in [H]$, maintain a counter $T_{h,(s,a)}$ for $(s,a) \in \mathcal{S}\times \mathcal{A}$ and $h\in [H]$, initialized at $0$. Given the `active sets' $\mathcal{G} = (\mathcal{G}_h)_{h\in [H]} \subseteq \mathcal{S}\times\mathcal{A}$ and $h \in [H]$, we define the reward $\mathcal{J}(;\mathcal{G}) = (J_1,\dots,J_h)$ by
\begin{equation}
J_h(s,a;\mathcal{G}) =  \mathbbm{1}((s,a)\in \mathcal{G}_h) 
\end{equation}
We will denote this reward by $\mathcal{J}(\cdot;\mathcal{G})$. Initialize active set $\mathcal{G} = (\mathcal{G}_{h})_{h \in [H]}$ such that 
$\mathcal{G}_h=\mathcal{S}\times \mathcal{A}$. 
We initialize the reward matrix $\hat{R}_h(u,(s,a))= *$, where $*$ denotes unknown entry. This algorithm terminates when it detects that sufficient number of samples have been collected for matrix completion. 

\begin{algorithm}
\caption{Uniform Mask Sampler for Tabular MDPs}\label{alg:mask_samp}
\begin{algorithmic}
\STATE {\bfseries Output:}{Active sets $\mathcal{G} = (\mathcal{G}_h)_{h\in [H]}$, Partially complete matrix $\hat{R}_h$}
\STATE $ t \leftarrow 0$ \; $\hat{P}^{\mathcal{G}} \leftarrow \hat{V}(\mathcal{J}(\cdot;\mathcal{G}))$
\STATE $\hat{\Pi}^{\mathcal{G}} \leftarrow \hat{\Pi}(\mathcal{J}(\cdot;\mathcal{G}))$
\WHILE{$\hat{P}^{\mathcal{G}} > \tfrac{\epsilon}{2}$}
\STATE $U_t \leftarrow \unif([N])$  \COMMENT{Pick a user uniformly at random}
\STATE $S^{(t)}_{1:H},A^{(t)}_{1:H},R^{(t)}_{1:H} \sim \mathcal{M}_{U_t}(\hat{\Pi}^{\mathcal{G}})$ \COMMENT{Query trajectory}
\FOR{$h \in [H]$}
\IF{$(S^{(t)}_h,A^{(t)}_h) \in \mathcal{G}_h$ and $R_h(U_t,(S^{(t)}_h,A^{(t)}_h)) = *$}
\STATE $T_{h,(S^{(t)}_h,A^{(t)}_h)} \leftarrow T_{h,(S^{(t)}_h,A^{(t)}_h)} + 1$ \COMMENT{Increment count}
\STATE $\hat{R}_h(U_t,(S^{(t)}_h,A^{(t)}_h)) \leftarrow R^{(t)}_h $ \COMMENT{Fill Missing Entry}
\ENDIF
\IF{$T_{h,(S^{(t)}_h,A^{(t)}_h)} = Np$}
\STATE $\mathcal{G}_h \leftarrow \mathcal{G}_h \setminus \{(S^{(t)}_h,A^{(t)}_h)\}$ \COMMENT{Remove element from Active Set}
\ENDIF
\ENDFOR
\STATE $ t \leftarrow t+1$ \;
\STATE $\hat{P}^{\mathcal{G}} \leftarrow \hat{V}(\mathcal{J}(\cdot;\mathcal{G}))$\;
\STATE $\hat{\Pi}^{\mathcal{G}} \leftarrow \hat{\Pi}(\mathcal{J}(\cdot;\mathcal{G}))$\;
\ENDWHILE
\end{algorithmic}
\end{algorithm}

\textbf{Phase 3: Reward Matrix Completion}
We receive $\mathcal{G}_h$ and the partially observed matrix $\hat{R}_h$ for each $h$ as the output of Algorithm~\ref{alg:mask_samp}. By $\hat{R}_h^{\mathcal{G}^{\complement}_h}$, we denote the sub-matrix where the columns corresponding to $\mathcal{G}_h$ are deleted. We use the nuclear norm minimization algorithm given in \cite{recht2011simpler} to recover $R_h^{\mathcal{G}_h^{\complement}}$ from $\hat{R}_h^{\mathcal{G}_h^{\complement}}$ for every $h \in [H]$.

\textbf{Phase 4: Computing the Optimal Policy}
Phase 3 outputs the completed sub-matrix $R_h^{\mathcal{G}_h^{\complement}}$, where only the columns corresponding to $|\mathcal{G}_h^{\complement}|$ are recovered.  We construct the recovered matrix $\bar{R}_h$ by setting $\bar{R}_h^{\mathcal{G}_h^{\complement}} = R_h^{\mathcal{G}_h^{\complement}}$ and $\bar{R}_h^{\mathcal{G}_h} = 0$. We compute the optimal policy for each user using the rewards from $\bar{R}_h$ via the output of the reward free RL, $\hat{\Pi}$, from Phase 1.

\subsection{Linear MDP Case:}

\textbf{Phase 1 : Reward Free RL}
We run the reward free RL algorithm for Linear MDPs from 
\cite{wagenmaker2022reward}, with error $\epsilon$ and probability of failure $\frac{\delta}{4}$. We use trajectories from random users whenever a trajectory is queried. Here, $\nrf(\epsilon,\delta/4) = \frac{Cd H^5 (d + \log(\tfrac{1}{\delta}))}{\epsilon^2} + \frac{Cd^{9/2}H^6\log^4(\tfrac{1}{\delta})}{\epsilon}$.

\textbf{Phase 2: Querying Linear Measurements of the Reward Matrix}
We obtain policies whose trajectory data allows low rank matrix estimation of the reward matrix.  

\textbf{Step 1:}
For each time step $h \in [H]$, we want to query obtain samples $(s^{(t)}_h,a^{(t)}_h)$ such that $\sum_{t=1}^{T} \phi(s^{(t)}_h,a^{(t)}_h)\phi^{\intercal}(s^{(t)}_h,a^{(t)}_h) \succeq \kappa^2 I$. This can be done by Algorithm~\ref{alg:lin_basis_samp}. Given a projector $Q$ to some subspace of $\mathbb{R}^d$, by $\mathcal{Q}_{h,Q}$ denote the reward $\|Q\phi(s,a)\|^2$ at time $h$ and $0$ otherwise. The termination condition ensures that we see enough data in all directions $\phi$, which allows us to find collaborative exploration policy below.

\textbf{Step 2:}
Using the observations given in Step 1, we compute the policy $\hat{\Pi}^{f,h}$ which approximately satisfies the property given in Assumption~\ref{as:restr_conv_pol}. This procedure is described in Section~\ref{sec:stats_policy}.

\begin{algorithm}
\caption{Well Conditioned Matrix Sampler}\label{alg:lin_basis_samp}
\begin{algorithmic}
\STATE {\bfseries Input:}{Total time $T$; Tolerance $\gamma$; lower isometry $\kappa$}
\STATE {\bfseries Output:}{$\phi_{ht},S_{(h+1)t}$ for $h\in [H-1]$, $t \in [T]$}
\FOR{$h = 1$ {\bfseries to } $h = H-1$}
\STATE $Q \leftarrow I$ \; $\hat{\Pi}^{Q} \leftarrow \hat{\Pi}(\mathcal{Q}_{h,Q})$ \;
\STATE $G_{\phi,h} \leftarrow 0$ \COMMENT{Grammian initialized to $0$}
\FOR{$t = 1$ {\bfseries: to} $t = T$ }
\STATE $U_t \sim \unif([N])$ \COMMENT{Pick a uniformly random user}
\STATE $S_{1:H},A_{1:H} \sim \mathcal{M}_{U_t}(\hat{\Pi}^{Q}) $ \COMMENT{Obtain Trajectory}
\STATE $\phi_{ht} \leftarrow \phi(S_h,A_h)$ \; $S_{(h+1)t} \leftarrow S_{h+1}$ \;
\STATE $G_{\phi,h} \leftarrow G_{\phi,h} + \phi_{ht}\phi_{ht}^{\intercal}$ \COMMENT{Update Grammian}
\IF{$\exists y \in \mathbb{S}^{d-1}:y^{\intercal}G_{\phi,h}y < \kappa^2 $}
\STATE $Q \leftarrow$ eigenspace of $G_{\phi,h}$ with eigenvalues $< \kappa^2$\;
\STATE $\hat{\Pi}^{Q} \leftarrow \hat{\Pi}(\mathcal{Q}_{h,Q})$ \;
\ENDIF
\ENDFOR
\ENDFOR
\end{algorithmic}
\end{algorithm}

\textbf{Phase 3: Estimating Low Rank Reward Matrix}
For this, we use the active learning procedure given in Section~\ref{sec:lin_matrix_comp} via row-wise linear measurements along with the policy $\Pi^{\mathsf{MC},h} = \hat{\Pi}^{f,h}$, which was computed in Phase 2. 

\textbf{Phase 4: Computing the Optimal Policy}
Once the reward matrix $\Theta_h$ have been reconstructed for every $h$ in Phase 3, we use the output of reward free RL in order to compute the $\epsilon$ optimal policy for each user. 

\section{Main Results}
\subsection{Tabular MDP:}

Incoherence is a standard assumption for low rank matrix completion. This ensures that the matrix is not too sparse so that sparse measurements are sufficient to learn it. The following definition is used in \cite{recht2011simpler}.

\begin{definition}
Given a $r$ dimensional sub-space $U$ of $\mathbb{R}^n$ , we define the coherence of $U$ as:
$$\mu(U) := \frac{n}{r}\sup_{1\leq i\leq n}\|P_{U}e_i\|^2\,.$$ A $n_1\times n_2$ matrix $M$ with singular value decomposition $U\Sigma V^{\intercal}$ is called $(\mu_0,\mu_1)$ coherent if:

a) The coherence of the row and column spaces of $M$ are at-most $\mu_0$ 
b) The absolute value of every entry of $UV^{\intercal}$ is bounded above by $\mu_1\sqrt{\frac{r}{n_1n_2}}$.

\end{definition}

 Given a policy $\Pi$, and $\Omega \subseteq \mathcal{S}\times \mathcal{A}$, by $P^{\Pi}_h(\Omega)$ we denote the probability that at time $h$ we have $(S_h,A_h) \in \Omega$ under the policy $\Pi$.
\begin{assumptiontab}\label{as:tab_inc}
Given the reward matrix $R_h$ and $\Omega_h \subset \mathcal{S}\times \mathcal{A}$, recall the notation for the sub-matrix $R_h^{\Omega_h}$ of $R_h$. If $\sup_{\Pi}P^{\Pi}(\Omega_h^{\complement}) < \epsilon $ have:

1) $R_h^{\Omega_h}$ is $(\mu_0,\mu_1)$ incoherent \; 2) $|\Omega^{\complement}_h| \leq \frac{|\mathcal{S}|}{2}$

\end{assumptiontab}

The incoherence assumption for $R_h^{\Omega}$ makes sense since the set $\Omega^{\complement}$ cannot be easily reached with \emph{any} policy with a probability larger than $\epsilon$. In fact we can arrive at an $\epsilon$ optimal policy for the original reward by just setting the rewards at $\Omega^{\complement}$ to be $0$. These can be thought of as redundant states which do not matter for our RL model with \emph{any} reward.

\begin{theorem}\label{thm:tab_main}
Suppose Assumption (Tab)~\ref{as:tab_lr},~\ref{as:tab_inc} hold. Let the parameter $p = C \frac{\max(\mu_1^2,\mu_0)r(N+|\mathcal{S}||\mathcal{A}|)\log^2 |\mathcal{S}||\mathcal{A}| \log(\tfrac{H}{\delta})}{N|\mathcal{S}||\mathcal{A}|}\,.$ for some large enough constant $C$. Assume that $|\mathcal{S}||\mathcal{A}|$ and $N$ are large enough such that $p < 1/2$.  Then, with probability at-least $1-\delta$, we can find an $\epsilon$ optimal policy $\hat{\Pi}_u$ for every user $u \in [N]$ whenever the total number of trajectories queried is:

\begin{align*}
&C\frac{|\mathcal{S}||\mathcal{A}|H^2\left(|\mathcal{S}|+\log(\tfrac{1}{\delta})\right)}{\epsilon^2}\mathsf{polylog}(\tfrac{|\mathcal{S}||\mathcal{A}|H}{\epsilon})\\ &\quad+ \frac{C\max(\mu_1^2,\mu_0)r(N+|\mathcal{S}||\mathcal{A}|)H\log^2 |\mathcal{S}||\mathcal{A}| \log(\tfrac{H}{\delta})}{\epsilon}\end{align*}
\end{theorem}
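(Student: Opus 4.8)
The plan is to track an $(\epsilon,\delta)$ accuracy/confidence budget through the four phases and reduce everything to two facts: Phase~2 terminates quickly having revealed a valid uniform mask on the \emph{reachable} part of each reward matrix $R_h$, and the unreachable part can be zeroed out at $O(\epsilon)$ value cost. First I would instantiate Phase~1 with the reward-free guarantee of \cite{zhang2020nearly} at accuracy $\epsilon/8$ and confidence $\delta/2$; this costs exactly the first displayed term and returns $(\hat\Pi,\hat V)$ such that, for every bounded reward $\mathcal{R}$, $\hat\Pi(\mathcal{R})$ is $\epsilon/8$-optimal for $\mathcal{M}_{\mathcal{R}}$ and $\hat V$ estimates its value to within $\epsilon/8$.

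Next I would establish correctness of the mask produced by Algorithm~\ref{alg:mask_samp}. The counting reward satisfies $V(\Pi,\mathcal{M}_{\mathcal{J}(\cdot;\mathcal{G})})=\sum_h P^{\Pi}_h(\mathcal{G}_h)$ for every $\Pi$. Since a pair is deleted from $\mathcal{G}_h$ exactly when $Np$ distinct, uniformly chosen users have revealed its reward, at termination each column of $\mathcal{G}_h^{\complement}$ carries $Np$ observations from a uniformly random subset of $[N]$. The loop exits when $\hat P^{\mathcal{G}}\le \epsilon/2$; combined with the two Phase-1 guarantees this gives $\sup_\Pi\sum_h P^{\Pi}_h(\mathcal{G}_h)\le \hat P^{\mathcal{G}}+\epsilon/4\le \epsilon$, hence in particular $\sup_\Pi P^{\Pi}_h(\mathcal{G}_h)<\epsilon$ for each $h$. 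Setting $\Omega_h:=\mathcal{G}_h^{\complement}$, Assumption~(Tab)~\ref{as:tab_inc} then yields that the rank-$\le r$ matrix $R_h^{\Omega_h}$ (a submatrix of $R_h$, using Assumption~(Tab)~\ref{as:tab_lr}) is $(\mu_0,\mu_1)$-incoherent and $|\mathcal{G}_h|=|\Omega_h^{\complement}|\le |\mathcal{S}|/2$, so $|\Omega_h|\ge |\mathcal{S}||\mathcal{A}|/2$.

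For the Phase-2 sample complexity I would run a stopping-time argument on the number of filled entries. While the loop runs, $\hat P^{\mathcal{G}}>\epsilon/2$ forces $V(\hat\Pi^{\mathcal{G}},\mathcal{M}_{\mathcal{J}(\cdot;\mathcal{G})})>3\epsilon/8$ by Phase-1 accuracy, so each queried trajectory visits active pairs $\gtrsim \epsilon$ times in expectation; because $U_t$ is uniform and at most $Np$ users are already recorded at any active pair, every such visit produces a fresh entry with probability at least $1-p\ge 1/2$, giving expected progress $\gtrsim \epsilon$ per trajectory. As the total number of entries to fill is $Np\sum_h|\Omega_h|\le Np\,H|\mathcal{S}||\mathcal{A}|$, an Azuma/optional-stopping bound makes the trajectory count $O(Np\,H|\mathcal{S}||\mathcal{A}|/\epsilon)$ with high probability, which is the second displayed term after substituting the stated $p$. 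Phase~3 is then a black-box application of \cite{recht2011simpler}: each $R_h^{\Omega_h}$ is observed under a uniform mask of rate $p$, and since $|\Omega_h|\ge |\mathcal{S}||\mathcal{A}|/2$ the chosen $p$ exceeds the completion threshold $C\max(\mu_1^2,\mu_0)r(N+|\Omega_h|)\log^2(N|\Omega_h|)/(N|\Omega_h|)$, so nuclear-norm minimization recovers $R_h^{\Omega_h}$ exactly with probability $1-\delta/(2H)$; the union bound over $h$ is absorbed by the $\log(H/\delta)$ factor in $p$.

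Finally I would close the value argument and combine. Since $\bar R_h$ equals $R_h$ on $\Omega_h$ and vanishes on $\Omega_h^{\complement}$, writing $\bar{\mathcal{M}}_u$ for the MDP of user $u$ with reward $\bar R$, for every user $u$ and policy $\Pi$ we have $|V(\Pi,\mathcal{M}_u)-V(\Pi,\bar{\mathcal{M}}_u)|\le \sum_h P^{\Pi}_h(\Omega_h^{\complement})\le \epsilon$, the same bound holding uniformly over $\Pi$ because it bounds the supremum. Composing this with the $\epsilon/8$-optimality of $\hat\Pi(\bar R_{\cdot u})$ on $\bar{\mathcal{M}}_u$ shows $\hat\Pi_u:=\hat\Pi(\bar R_{\cdot u})$ is $O(\epsilon)$-optimal for $\mathcal{M}_u$; rescaling the internal tolerances by a constant delivers exactly $\epsilon$, while summing the Phase-1 and Phase-2 counts gives the stated total and the Phase-1 and Phase-3 failure events union-bound to $\delta$. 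The hard part is the Phase-2 analysis, where one must simultaneously argue that the counting-reward stopping rule certifies the reachability bound feeding Assumption~(Tab)~\ref{as:tab_inc}, that the accumulated observations form a genuine uniform mask compatible with \cite{recht2011simpler} rather than a correlated pattern, and that the stopping time concentrates despite the coupling between the shrinking active set, the reward-free value estimates, and the uniform-user sampling.
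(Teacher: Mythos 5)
Your overall architecture matches the paper's: Phase~1 at accuracy $\epsilon/8$ and confidence $\delta/2$, a potential-function/martingale argument showing Phase~2 terminates after $O(Np|\mathcal{S}||\mathcal{A}|H/\epsilon)$ trajectories (your observation that each active-set visit yields a fresh entry with probability at least $1-p\ge 1/2$ because the column stops at $Np$ entries is exactly the paper's Claim~\ref{claim:cond_ineq}), the identification $\Omega_h=\mathcal{G}_h^{\complement}$ feeding Assumption~(Tab)~\ref{as:tab_inc}, and the final value decomposition $|V(\Pi,\mathcal{M}_u)-V(\Pi,\bar{\mathcal{M}}_u)|\le\sum_h P^{\Pi}_h(\mathcal{G}_h)$. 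Those parts are correct.

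The genuine gap is in Phase~3. You assert that ``each $R_h^{\Omega_h}$ is observed under a uniform mask of rate $p$'' and apply \cite{recht2011simpler} as a black box, but the observation pattern $I_h$ produced by Algorithm~\ref{alg:mask_samp} is \emph{not} a uniform mask: each column of $\mathcal{G}_h^{\complement}$ is filled to exactly $Np$ entries (a deterministic per-column count, unlike the multinomial column counts of a uniformly drawn index set), the columns are filled by an adaptive process in which the active sets, the policy $\hat\Pi^{\mathcal{G}}$, and the user draws are all entangled over time, and \cite{recht2011simpler} requires the index set to be uniform over all of $[N]\times\Omega_h$, not merely uniform within each column. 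You name this as ``the hard part'' but supply no argument, and this is precisely where the bulk of the paper's proof lives: Lemma~\ref{lem:tab_mdp_coupling} decouples user identities from trajectories (possible only because all users share the MDP), Claim~\ref{claim:permutation_inv} establishes permutation invariance of the observed rows within each column, and Lemma~\ref{lem:stoc_dom} constructs a coupling with a fictitious genuinely-uniform set $J_h\sim\unif(\tfrac{Np|\mathcal{G}_h^{\complement}|}{2},[N],\mathcal{G}_h^{\complement})$ of \emph{half} the size such that $J_h\subseteq I_h$ with probability $1-|\mathcal{S}||\mathcal{A}|e^{-cNp}$, using a negative-regression Chernoff bound to show the column counts of $J_h$ stay below $Np$. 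One then needs the additional (easy but necessary) observation that nuclear-norm minimization still succeeds when the constraint set is enlarged from $J_h$ to the superset $I_h$. Without this coupling chain your invocation of the matrix-completion guarantee does not go through, so the proof as written is incomplete at its central step.
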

\begin{remark}
For large $N$, the number of trajectories queried per user is $\tilde{O}(\frac{rH\log^2(|\mathcal{S}||\mathcal{A}|)}{\epsilon})$, which is an exponential improvement in the state-space size dependence when compared to the minimax rate of $\frac{|\mathcal{S}||\mathcal{A}|H^2}{\epsilon^2}$ \citep{dann2015sample} for learning a single MDP. Every phase in the algorithm has polynomial computational complexity in $N,|\mathcal{S}||\mathcal{A}|$ and $\frac{1}{\epsilon}$. The probability $p$ is chosen such that $p|\mathcal{S}||\mathcal{A}|N = \tilde{O}(r(|\mathcal{S}||\mathcal{A}|+N)) $, which is the number of free parameters required to describe a rank $r$ matrix. 
\end{remark}
\subsection{Linear MDP}
\label{subsec:lin_alg}
\begin{assumptionlin}\label{as:reach_lin}
There exists a $\gamma > 0$ such that for every $x \in \mathcal{S}^{d-1}$, and every $h \in [H]$ there exists a policy $\Pi_{x,h}$ such that whenever $S_{1:H},A_{1:H} \sim \Pi_{x,h}$, $\mathbb{E}\langle\phi(S_h,A_h),x \rangle^2 \geq \gamma $
\end{assumptionlin}
The assumption above shows that we can obtain information about all directions. If this does not hold for any $\gamma$, then $\phi(S_h,A_h)$ does not have any component in some direction $x_0$ with \emph{any} policy. Thus, we can remove the sub-space spanned by $x_0$ and make the embedding space $\mathbb{R}^{d-1}$ at time $h$.

\begin{assumptionlin}\label{as:restr_conv_pol}
There exist $\zeta,\xi > 0$ such that for every $h \in [H]$, there exists a policy $\Pi^{h,\zeta,\xi} \in \polspace$ such that whenever $S_{1:H},A_{1:H} \sim \mathcal{M}(\Pi^{h,\zeta,\xi})$, we have:
$$\inf_{x \in \mathcal{S}^{d-1}}\mathbb{E}\left[ |\langle x,\psi(S_h,A_h)\rangle|\sqrt{d} - \xi d \langle x,\psi(S_h,A_h) \rangle^2\right] \geq \zeta$$
\end{assumptionlin}

The assumption above ensures that there exist measurements $\psi(S_h,A_h)$ which are conducive to low rank matrix estimation as considered in Section~\ref{sec:lin_matrix_comp}. This means that $|\langle\psi(S_h,A_h),x\rangle| = \Omega(1/\sqrt{d})$ just like an isotropic random vector, which gives us information about all directions. However, this condition is much looser than the assumption of uniform distribution on the sphere. 
\begin{assumptionlin}\label{as:pol_covering}
For any $1 \geq \eta > 0$, there exists an $\eta$ net for $\polspace$, denoted by $\hat{\polspace}_{\eta}$ such that $\log |\hat{\polspace}_{\eta}| \leq D \log(\frac{1}{\eta})$.
\end{assumptionlin}
We refer to Section~\ref{sec:more_info_pol}, where we justify this assumption. We first demonstrate that deterministic policies which are sufficient for reward maximization (as used in \cite{jin2020provably}) cannot be used in this context, so a set of stochastic policies is required. We then construct such policy classes with $D = O(dH \log dH \log \log (|\mathcal{A}|))$. 
\begin{theorem}\label{thm:main_lin}
Suppose Assumptions (Lin)~\ref{as:lin_lr}~\ref{as:reach_lin}~\ref{as:restr_conv_pol} and~\ref{as:pol_covering} hold and suppose $\epsilon < \frac{\gamma}{2}$. In Algorithm~\ref{alg:lin_basis_samp}, we set $\kappa = \frac{CC_{\mu}dH\sqrt{dH+D}(\sqrt{d}+\xi d)}{\zeta}\sqrt{\log\left(\tfrac{C_{\mu}H(d+D)}{\zeta\gamma \delta}\right)}$ and $T =  C\frac{\kappa^2 d}{\gamma^2} \log \tfrac{d\kappa}{\gamma}$.

Then, with probability at least $1-\delta$, our algorithm finds $\epsilon$ optimal policy for every user $u \in [N]$ with the total number of trajectories being bounded by: $T_{\mathsf{rf}} + T_{\mathsf{pol}} + T_{\mathsf{mat-comp}}$, 
where: $$T_{\mathsf{rf}} = \frac{d H^5 (d + \log(\tfrac{1}{\delta}))}{\epsilon^2} + \frac{d^{9/2}H^6\log^4(\tfrac{1}{\delta})}{\epsilon}\,,$$  $$T_{\mathsf{pol}} = \frac{C_{\mu}^2d^5H^3(dH+D)\log^2\left(\tfrac{C_{\mu}H(d+D)}{\zeta\gamma \delta}\right)}{\zeta^2\gamma^2}\,,$$
 \begin{align}T_{\mathsf{mat-comp}} &= C\frac{Hr(N + d\log N)\log \tfrac{d}{\zeta \xi}}{{\zeta^2\xi^2}} \nonumber \\ &\quad+ \frac{H\log N \log\left( \frac{\log N}{\delta}\right)}{\zeta^2\xi^2}\,.\end{align}
\end{theorem}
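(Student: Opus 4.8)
The plan is to analyze the four phases of the algorithm separately, bound the per-phase sample complexity and failure probability, and then stitch the estimates together with a value-difference argument; a union bound with budget $\delta/4$ per phase yields the overall $1-\delta$ guarantee. First I would invoke the reward-free RL guarantee of \cite{wagenmaker2022reward} for Phase 1: after $T_{\mathsf{rf}}$ trajectories drawn from uniformly random users, with probability $1-\delta/4$ the returned maps $\hat{\Pi}(\cdot)$ and $\hat{V}(\cdot)$ are simultaneously $\epsilon$-accurate over all linear rewards, i.e.\ $\hat{\Pi}(\mathcal{R})$ is $\epsilon$-optimal and $\hat{V}$ estimates its value to within $\epsilon$ for every $\mathcal{R}$. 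Since all users share $\mathcal{P}$, sampling random users costs nothing extra here, and this output serves as the approximate planning oracle reused in Phases 2 and 4.

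The core of the argument is Phase 2. For Step 1, I would show that Algorithm~\ref{alg:lin_basis_samp}, run with the stated $\kappa$ and $T$, terminates with a Grammian satisfying $\sum_t \phi_{ht}\phi_{ht}^\intercal \succeq \kappa^2 I$. The key observation is that whenever the current Grammian has a deficient eigenspace $Q$, Assumption (Lin)~\ref{as:reach_lin} guarantees a policy with $\mathbb{E}\|Q\phi\|^2 \geq \gamma$, and the reward-free output $\hat{\Pi}(\mathcal{Q}_{h,Q})$ approximately attains this objective; a trace/volume potential argument then bounds the number of subspace restarts and shows the minimum eigenvalue grows at rate $\Omega(\gamma)$ per batch, delivering the $T=\tilde{O}(\kappa^2 d/\gamma^2)$ budget. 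For Step 2, using the well-conditioned samples I would form an empirical version of the functional $J(\Gamma_h(\Pi))$ and solve the mean-field optimization~\eqref{eq:mean_field} over the net $\hat{\polspace}_\eta$ from Assumption (Lin)~\ref{as:pol_covering}. Assumption (Lin)~\ref{as:restr_conv_pol} certifies that a policy with $J \geq \zeta$ exists, so the maximizer over a fine enough net, $\hat{\Pi}^{f,h}$, satisfies the restricted-convexity property up to controlled slack; the statistical error is governed by a uniform concentration of the empirical $J$ over $\hat{\polspace}_\eta$, whose $\log$-covering entropy $D$ and the $(\sqrt{d}+\xi d)$ scale factors produce exactly the $T_{\mathsf{pol}}$ bound. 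This is the step I expect to be the main obstacle: controlling how the Grammian conditioning $\kappa$ propagates into the estimation error of the nonsmooth functional $J$, and ensuring the net resolution $\eta$ is fine enough (via the distance $D_{\polspace}$ and Lemma~\ref{lem:struc_lem}) that the recovered policy still yields near-isotropic measurements in the sense of~\eqref{eq:dist_prop}.

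Next, for Phase 3 I would appeal to Theorem~\ref{thm:restr_pol}: deploying $\hat{\Pi}^{f,h}$ over uniformly random users produces measurements $(e_{U_t},\psi(S_h,A_h),e_{U_t}^\intercal \Theta_h \psi(S_h,A_h))$ whose $\psi$-marginal satisfies~\eqref{eq:dist_prop}. Feeding these into the row-wise active-learning matrix-completion routine of Section~\ref{sec:lin_matrix_comp} recovers $\Thetah_h$ with small row-wise/operator error using $T_{\mathsf{mat-comp}}$ measurements; the $(N+d\log N)$ scaling reflects the $r(N+d)$ degrees of freedom of a rank-$r$ matrix, the $\zeta^2\xi^2$ denominator comes from the isotropy constant secured in Phase 2, and the outer factor $H$ accounts for running one completion per time step. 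I would again carry a failure probability $\delta/4$ through this step.

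Finally, in Phase 4 I would convert reward-matrix accuracy into policy optimality. Writing $\hat{R}_{hu}(s,a)=\iprod{\hat{\theta}_{hu}}{\psi(s,a)}$, the constraint $\|\psi\|_2\le 1$ turns the row-wise error on $\Thetah_h$ into a uniform per-step reward error, which I would target at $O(\epsilon/H)$ so that a telescoping value-difference bound gives a value gap of $O(\epsilon)$ between $\mathcal{M}_u$ and $\mathcal{M}_{\hat{\mathcal{R}}_u}$ for every policy. Combining this with the Phase 1 simulation guarantee, the policy $\hat{\Pi}(\hat{\mathcal{R}}_u)$ is $O(\epsilon)$-optimal for $\mathcal{M}_u$ for all users $u$ simultaneously. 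Summing $T_{\mathsf{rf}}+T_{\mathsf{pol}}+T_{\mathsf{mat-comp}}$ and taking a union bound over the four $\delta/4$ events completes the proof after rescaling the accuracy by the appropriate absolute constant.
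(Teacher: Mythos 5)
Your overall architecture -- four phases, a $\delta/4$ failure budget per phase, Phase 1 via \cite{wagenmaker2022reward}, Phase 2 split into the Grammian-conditioning step (potential/determinant argument, matching Lemma~\ref{lem:basis_samp}) and the net-based optimization of the empirical functional (matching Theorem~\ref{thm:restr_pol}), and Phase 3 via the row-wise active-learning estimator -- is exactly the paper's decomposition, and those parts of your plan are sound.

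The one place your plan diverges, and where it would not deliver the stated bound, is the Phase 3/Phase 4 handoff. You propose to recover $\Theta_h$ only approximately (``small row-wise/operator error''), target a per-step reward error of $O(\epsilon/H)$, and then run a telescoping value-difference argument. But the stated $T_{\mathsf{mat-comp}}$ has \emph{no dependence on $\epsilon$ or on the accuracy target}: it is $\tilde{O}\bigl(Hr(N+d\log N)/(\zeta^2\xi^2)\bigr)$. An approximate-recovery route that needs row-wise error $O(\epsilon/H)$ would force the matrix-estimation sample complexity to scale with the target accuracy, contradicting the bound you are trying to prove. The paper's Theorem~\ref{thm:matrix_est_main} instead gives \emph{exact} recovery, $\hat{\Theta}=\Theta^{*}$, with probability $1-\delta$ at that fixed sample budget -- this is possible because the rewards are deterministic, so each measurement $e_i^{\intercal}\Theta^{*}\psi$ is a noiseless linear observation, and the isotropy conditions in~\eqref{eq:dist_prop} (secured by Theorem~\ref{thm:restr_pol} with $\zeta/2$) guarantee that any rank-$\leq r$ matrix consistent with the data agrees with $\Theta^{*}$ on the certified rows. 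Consequently Phase 4 requires no perturbation argument at all: once $\Theta_h^{*}$ is known exactly, the $\epsilon$-optimality of $\hat{\Pi}(\mathcal{R}_u)$ for every user follows directly from the Phase 1 reward-free guarantee applied to the true rewards. To repair your proof, replace the approximate-recovery-plus-value-difference step with a proof of exact recovery (the paper's Lemmas~\ref{lem:row_recovery} and~\ref{lem:single-user}). A secondary, minor point: in Phase 3 the users are not sampled uniformly at random -- the active-learning routine queries each still-unrecovered row $i$ deliberately, and only the $\psi$-measurements come from deploying $\hat{\Pi}^{f,h}$.
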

\begin{remark}
When $N$ is very large, the per user sample complexity is $O(Hr)$, which is much better than the mini-max optimal complexity of $\Omega(d^2H^2)$ \citep{wagenmaker2022reward}. While Phases 1 and 2 of the algorithm have a computational complexity which is polynomial in $d$ and $\frac{1}{\epsilon}$, the optimization problems posed in Phase 3 and 4 are not necessarily polynomial time. We leave the computational aspects to future work. The sample complexity $\tilde{O}(r(N+d))$ corresponds to the number of free parameters required to describe a rank $r$ matrix.  
\end{remark}

\section{Obtaining Policies With Given Statistics}
\label{sec:stats_policy}
In this section, we consider the Linear MDP setting and describe the sub-routine described in Step 2 of Phase 2 of the algorithm where we compute a policy $\hat{\Pi}^{f,h}$ such that the law of $\phi(S_{h},A_h)$ under this policy approximately satisfies the property given in Assumption~\ref{as:restr_conv_pol}. This is required in order to use the guarantees for low matrix estimation in Phase 3, which is described in Section~\ref{sec:lin_matrix_comp}. We first state a structural lemma which characterizes the law of $S_{h+1},A_{h+1}$ under any policy $\Pi$. 
\begin{lemma}\label{lem:struc_lem}
Consider any policy $\Pi = (\pi_1,\dots,\pi_{H-1},\pi_H)$ to the MDP $\mathcal{M}$. Let $S_{1:H},A_{1:H} \sim \mathcal{M}(\Pi)$. Then for any bounded, measurable function $g :\mathcal{S}\times\mathcal{A} \to \mathbb{R}$, we have:
$$\mathbb{E}g(S_{h+1},A_{h+1}) = \sum_{i=1}^{d}\nu_{i}\int g(s,a)d\mu_{ih}(ds)\pi_{h+1}(da|s)$$

Where $\nu_{i}  := \langle \mathbb{E}\phi(S_{h},A_{h}),e_i\rangle$
\end{lemma}

We now want to estimate certain statistics under any policy using available data, obtained from the output of Algorithm~\ref{alg:lin_basis_samp}. Notice that the output of Algorithm~\ref{alg:lin_basis_samp} gives a sequence of random variables $(\phi_{h1},s_{(h+1)1}),\dots,(\phi_{hT},s_{(h+1)T}) \in \mathbb{R}^{d}\times\mathcal{S}$ such that $(s_{(h+1)l})_{l=1}^{T} | (\phi_{hl})_{l=1}^{T} \sim \prod_{l=1}^{T}\left(\sum_{i=1}^{d}\langle\phi_{hl},e_i\rangle\mu_{hi}(\cdot)\right)$ and $G_{\phi,h} := \sum_{t=1}^{T}\phi_{ht}\phi_{ht}^{\intercal} $. For any measurable function $g :\mathcal{S}\times\mathcal{A} \to \mathbb{R}^{K}$, $\nu \in \mathbb{R}^{d}$ such that $\|\nu\|\leq 1$ and any randomized policy $\Pi = (\pi_1,\dots,\pi_{H})$ we define:

\begin{enumerate}
\item $\mathcal{T}_1(g,\pi_1) = \mathbb{E}\int g(S_1,a) \pi_{1}(da|S_1)$
\item $\mathcal{T}_{h+1}(g;\nu,\pi_{h+1}) = \sum_{i=1}^{d}\langle\nu,e_i\rangle \int \mu_{ih}(ds)\pi_{h+1}(da|s) g(s,a) \, \text{ when } h \leq H-1 $
\item $E^{\nu}_1(\Pi) := \|\mathcal{T}_1(\phi,\pi_1) - \nu\|_1$
\item $E^{\nu}_h(\Pi) = \inf_{\nu_1,\dots,\nu_{h-1} \in \mathcal{B}_d(1) } F(\Pi,\nu_1,\dots,\nu_{h-1},\nu) \,$ whenever $h > 1$
\end{enumerate}
Define $\alpha_{ht,\nu} = \phi_{ht}^{\intercal}G_{\phi,h}^{-1}\nu $. We estimate these operators from data as follows:
\begin{enumerate}
\item $\hat{\mathcal{T}}_{1}(g,\pi_1) = \frac{1}{T} \sum_{t=1}^{T}\int g(s_{1t},a)\pi_{1}(da|s_{1t})$

\item $\hat{\mathcal{T}}_{h+1}(g,\nu,\pi_{h+1}) = \sum_{t=1}^{T}\alpha_{ht,\nu}\int g(s_{(h+1)t},a)\pi_{h+1}(da|s_{(h+1)t})$

\item $\hat{E}^{\nu}_1(\Pi) := \|\hat{\mathcal{T}}_1(\phi,\pi_1) - \nu\|_1$

\item $\hat{E}^{\nu}_h(\Pi) = \inf_{  \nu_1,\dots,\nu_{h-1} \in \mathcal{B}_d(1) } \hat{F}(\Pi,\nu_1,\dots,\nu_{h-1},\nu) \,$ whenever $h > 1$
\end{enumerate}
Where, for $h> 1$ and $\nu_1,\dots,\nu_{h-1} \in \mathcal{B}_d(1)$, we have defined:

\begin{enumerate}
 \item    $F(\Pi,\nu_1,\dots,\nu_{h-1},\nu_h):=  E_1^{\nu_1}(\Pi) + \sum_{j=2}^{h} \|\mathcal{T}_{j}(\phi,\nu_{j-1},\pi_j) - \nu_j\|_1 $
 \item    $\hat{F}(\Pi,\nu_1,\dots,\nu_{h-1},\nu_h):=  \hat{E}_1^{\nu_1}(\Pi) + \sum_{j=2}^{h} \|\hat{\mathcal{T}}_{j}(\phi,\nu_{j-1},\pi_j) - \nu_j\|_1 $
\end{enumerate}

Define $f(s,a;x) := |\langle x,\psi(s,a)\rangle|\sqrt{d} - \xi d \langle x,\psi(s,a) \rangle^2$. The output of our method is:
\begin{enumerate}
    \item  \begin{align*}
&\hat{\Pi}^{f,1} = \nonumber \\ &\arg \sup_{\Pi = (\pi_1,\dots,\pi_H) \in \polspace} \inf_{x\in \mathcal{S}^{d-1}}\hat{\mathcal{T}}_1(f(\cdot;x),\pi_1)\end{align*}
    \item  \begin{align*} 
    &(\hat{\Pi}^{f,h},\hat{\nu}) = \\&\quad \arg \sup_{\substack{\nu \in \mathcal{B}(1)\\ \Pi = (\pi_1,\dots,\pi_H) \in \polspace}} \inf_{x\in \mathcal{S}^{d-1}}\hat{\mathcal{T}}_{h}(f(\cdot;x);\nu,\pi_h)
    \end{align*}
    whenever $h > 1$, subject to $\hat{E}^{\hat{\nu},h-1}(\hat{\Pi}^{f,h}) \leq \eta_0$
    \item Assign output: $\hat{\Pi}^{\zeta,\xi,h} = \hat{\Pi}^{f,h}$
\end{enumerate}

The idea behind this method is as follows. First, using the output of algorithm~\ref{alg:lin_basis_samp}, we construct $\hat{\mathcal{T}}_{h}(g,\nu,\pi_h)$, which approximates the functional $\mathcal{T}_h(g,\nu,\pi_h)$ uniformly for every $\nu,\pi_h$. This is shown in Lemma~\ref{lem:est_sample} in the appendix. We will show in Theorem~\ref{thm:restr_pol} that obtaining policies which can be used with the matrix completion routine reduces to picking a policy $\Pi^{f,h} = (\pi_1,\dots,\pi_H)$ such that whenever $S_{1:H},A_{1:H} \sim \mathcal{M}(\Pi^{f,h})$, we must have: $\inf_{x \in \mathcal{S}^{d-1}}\mathbb{E}f(S_h,A_h;x) \geq \zeta$.   
Now, we use Lemma~\ref{lem:struc_lem} to conclude that if such a policy exists, then there exist $\nu_{1},\dots, \nu_{h-1}$ such that $\mathbb{E}\phi(S_j,A_j) = \nu_j$ and $\inf_{x \in \mathcal{S}^{d-1}}\mathcal{T}_h(f(;x);\nu_h,\pi_h) \geq \zeta$. Since we only have sample access, we find such a policy approximately by optimizing using the estimates $\hat{\mathcal{T}}$ instead of the exact functional $\mathcal{T}$ as described above.

\begin{theorem}\label{thm:restr_pol}
We condition on the event $G_{\phi,h} \geq \kappa^2 I$ for every $h \in [H]$. Let $\kappa,\eta,\eta_0$ be such that for some small enough constants $c_0,c > 0$ and a large enough constant $C >0$:
\begin{enumerate}
    \item  $\eta \leq c \frac{\zeta }{C_{\mu}dH(
\sqrt{d}+\xi d)H}\sqrt{\tfrac{\kappa^2}{T}} \,;\quad \eta_0 = c_0 \frac{\zeta}{C_{\mu}(\sqrt{d}+d \xi)}$

\item
$\kappa \geq C\frac{C_{\mu}(\sqrt{d}+\xi d)dH}{\zeta}\sqrt{\log\left(\tfrac{dH|\hat{\polspace}_{\eta}|}{\delta}\right) + dH\log \left(\tfrac{d}{\eta}\right)}$
\
\end{enumerate}

Recall the policy $\hat{\Pi}^{f,h}$. Suppose the Assumption~\ref{as:restr_conv_pol} holds.
Then, with probability at-least $1-\frac{\delta}{4}$ we obtain the policy $\hat{\Pi}^{f,h}$ is such that whenever $S_{1:H},A_{1:H} \sim \mathcal{M}(\hat{\Pi}^{f,h})$, we have: $$\inf_{x \in \mathcal{S}^{d-1}}\mathbb{E} f(S_h,A_h;x) \geq \frac{\zeta}{2}$$
This implies that $\psi(S_h,A_h)$ satisfies $\mathbb{E}|\langle \psi(S_h,A_h),x\rangle| \geq \frac{\zeta}{2\sqrt{d}}
;\quad 
  \mathbb{E}\psi(S_h,A_h)\psi_{ik}^{\intercal} \leq \frac{1}{d\xi^2}$
\end{theorem}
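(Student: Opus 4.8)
The plan is to show that the empirical program defining $\hat{\Pi}^{f,h}$ returns a policy whose \emph{true} objective is within $\zeta/2$ of the value $\zeta$ guaranteed by Assumption~\ref{as:restr_conv_pol}, and then to read off the two moment bounds on $\psi$. Write $\nu^\star_j(\Pi) := \mathbb{E}\phi(S_j,A_j)$ for the mean embedding at step $j$ along $\Pi$. Lemma~\ref{lem:struc_lem} gives the identity $\mathbb{E}f(S_h,A_h;x) = \mathcal{T}_h(f(\cdot;x);\nu^\star_{h-1}(\Pi),\pi_h)$, so the goal reduces to lower bounding $\inf_{x\in\mathcal{S}^{d-1}} \mathcal{T}_h(f(\cdot;x);\nu^\star_{h-1},\hat\pi_h)$, while the only quantities we can actually optimize are the \emph{empirical} functionals $\hat{\mathcal{T}}$. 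The proof therefore rests on three transfers: sample-to-population ($\hat{\mathcal{T}}\to\mathcal{T}$), feasibility of the population-optimal policy inside the empirical program, and conversion of the optimizer's free variable $\hat\nu$ into the true mean embedding $\nu^\star_{h-1}(\hat{\Pi}^{f,h})$.

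For the first transfer I would invoke Lemma~\ref{lem:est_sample}: conditioned on $\{\phi_{ht}\}$ the points $s_{(h+1)t}$ are independent with law $\sum_i \iprod{\phi_{ht}}{e_i}\mu_{ih}$, so $\hat{\mathcal{T}}_{h+1}(g;\nu,\pi)$ is conditionally \emph{unbiased} for $\mathcal{T}_{h+1}(g;\nu,\pi)$ (the weights $\alpha_{ht,\nu}=\phi_{ht}^\intercal G_{\phi,h}^{-1}\nu$ reproduce $\iprod{\nu}{\cdot}$ via $G_{\phi,h}^{-1}G_{\phi,h}=I$), and its fluctuation is a weighted sum of bounded independent terms with $\sum_t \alpha_{ht,\nu}^2 = \nu^\intercal G_{\phi,h}^{-1}\nu \le \kappa^{-2}$ on the conditioning event $G_{\phi,h}\succeq\kappa^2 I$. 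A Hoeffding bound gives per-instance error $O(\|g\|_\infty \kappa^{-1}\sqrt{\log(\cdot)})$, and a union bound over $\eta$-nets of $x\in\mathcal{S}^{d-1}$, of $\nu\in\mathcal{B}_d(1)$ (dualizing the $\ell_1$ norm against an $\ell_\infty$-net is what costs the $dH\log(d/\eta)$ term), and of $\polspace$ through Assumption~\ref{as:pol_covering} (the $|\hat{\polspace}_\eta|$ term), combined with the policy-discretization estimate $\sum_t|\alpha_{ht,\nu}|\le\sqrt{T}/\kappa$ that the stated bound on $\eta$ keeps small, yields a uniform error $\epsilon_{\mathsf{conc}}$. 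Using $\|f\|_\infty\le\sqrt d+\xi d$ for the objective functional and $\|\phi\|_\infty\le 1$ for the chain functionals, the hypothesis on $\kappa$ forces $\epsilon_{\mathsf{conc}}\lesssim \zeta/(C_\mu dH(\sqrt d+\xi d))$.

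The main work is the third transfer, and it is where the $\ell_1$ structure of $\phi$ is essential. Feeding the population-optimal policy $\Pi^{h,\zeta,\xi}$ with $\nu=\nu^\star_{h-1}(\Pi^{h,\zeta,\xi})$ into the empirical program: its exact consistency error is zero, so by concentration its empirical error $\hat E^{\nu}_{h-1}$ is at most $(h-1)\epsilon_{\mathsf{conc}}\le\eta_0$, making it feasible; hence the optimum value obeys $\inf_x\hat{\mathcal{T}}_h(f(\cdot;x);\hat\nu,\hat\pi_h)\ge\zeta-O(\epsilon_{\mathsf{conc}})$, which concentration passes back to $\mathcal{T}_h$. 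It then remains to bound $\norm{\hat\nu-\nu^\star_{h-1}(\hat{\Pi}^{f,h})}_1$: the feasibility constraint $\hat E^{\hat\nu}_{h-1}(\hat{\Pi}^{f,h})\le\eta_0$ supplies a chain $\nu_1,\dots,\nu_{h-1}=\hat\nu$ that is empirically, hence (by concentration) approximately, self-consistent. Comparing it to the true chain $\nu^\star_j$ and using the crucial fact that $\nu\mapsto\mathcal{T}_j(\phi;\nu,\pi_j)$ is an $\ell_1$ \emph{nonexpansion} (its columns $\int\mu_{i(j-1)}\pi_j\phi$ have $\ell_1$ norm $\le1$ by assumption), the per-step discrepancies telescope to $\norm{\hat\nu-\nu^\star_{h-1}}_1\le\eta_0+(h-1)\epsilon_{\mathsf{conc}}$ with no amplification across the horizon. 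The Lipschitz bound $|\mathcal{T}_h(f(\cdot;x);\nu,\pi_h)-\mathcal{T}_h(f(\cdot;x);\nu',\pi_h)|\le C_\mu(\sqrt d+\xi d)\norm{\nu-\nu'}_1$ then turns this into an $O(C_\mu(\sqrt d+\xi d)\eta_0)\le\zeta/8$ loss, and the choices of $c_0,\eta,\kappa$ assemble every error term below $\zeta/2$. I expect this chain/nonexpansion step to be the main obstacle, since it is where the horizon dependence is tamed and where the constraint in the program earns its keep.

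The final sentence of the statement is then immediate. From $\sqrt d\,\mathbb{E}\abs{\iprod{x}{\psi}}-\xi d\,\mathbb{E}\iprod{x}{\psi}^2\ge\zeta/2$ and nonnegativity of the quadratic term we get $\mathbb{E}\abs{\iprod{x}{\psi}}\ge\zeta/(2\sqrt d)$; and writing $m=\mathbb{E}\iprod{x}{\psi}^2$, the same inequality gives $\xi d\,m\le\sqrt d\,\mathbb{E}\abs{\iprod{x}{\psi}}\le\sqrt d\,\sqrt m$ by Jensen, whence $m\le 1/(d\xi^2)$ uniformly in $x\in\mathcal{S}^{d-1}$, i.e.\ $\mathbb{E}\,\psi(S_h,A_h)\psi(S_h,A_h)^\intercal\preceq (d\xi^2)^{-1}I$.
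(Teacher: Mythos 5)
Your proposal is correct and follows essentially the same route as the paper's proof: uniform concentration of $\hat{\mathcal{T}}$ over nets of $x$, $\nu$, and $\polspace$ (the paper's Lemma~\ref{lem:err_conc}), feasibility of the population-optimal policy in the empirical program, conversion of $\hat\nu$ into the true mean embedding via the $\ell_1$-nonexpansion telescoping (the paper's Lemma~\ref{lem:exp_recov}), the Lipschitz bound in $\nu$ from Lemma~\ref{lem:pop_lips}, and the identical Jensen argument for the final moment bounds. The only differences are bookkeeping of constants and how the error budget is split across terms.
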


\section{Matrix Estimation with Row-wise Linear Measurements}
\label{sec:lin_matrix_comp}

We now describe the active learning based low rank matrix estimation procedure. For an unknown rank $r$ matrix $\Theta^{*}$ (corresponding to $\Theta^{*}_h$ in the definition of Linear MDPs) of dimensions $N\times d$, we are allowed to query samples of the form $(e_i,\psi,e_i^{\intercal}\Theta^{*}\psi)$ for any $i \in [N]$ of our choice and $\psi = \psi(S_h,A_h)$ where $S_{1:H},A_{1:H} \sim \mathcal{M}(\Pi^{\mathsf{MC},h})$, for some input policy $\Pi^{\mathsf{MC},h}$. This corresponds to running the MDP of user $i$, with the policy $\Pi^{\mathsf{MC},h}$ and observing the reward at time $h$, given by $\langle e_i, \Theta_h^{*}\Psi(S_h,A_h) \rangle$. We want to estimate the matrix $\Theta^{*}$ from these samples with high-probability.

\subsection{The Estimator}
\label{subsec:estimator}
Given any $N\times d$ matrix $\Delta$, by $\Delta_i^{\intercal}$, we denote its $i$-th row. 
Given $K \in \mathbb{N}$, and a sequence of vectors $\Psi = (\psi_{ik} \in \mathbb{R}^d)_{i\in [N],k\in [K]}$.

$$L(\Delta,\Psi) := \frac{1}{NK}\sum_{i=1}^{N}\sum_{k=1}^{K} |\langle\Delta_{i},\psi_{ik}\rangle|^2$$

We estimate $\Theta^{*}$ row-wise using the following iterative procedure, where recover some rows of $\Theta^{*}$ into $\hat{\Theta}$ in each iteration and obtain the corresponding linear measurements of $\Theta^{*}$. Letting the set of unknown rows at iteration $t$ to be $\bar{I}_{t-1}$ (with $\bar{I}_0 = [N]$). We draw a fresh sequence of vectors $\Psi^{(t)}$ from some distribution, we then recover some rows $\bar{I}_t^{\complement}\subseteq \bar{I}_{t-1}$ of $\Theta^{*}$ and store them in $\hat{\Theta}$.
\begin{enumerate}
    \item Draw $\Psi^{(t)} = (\psi_{ik}^{(t)})_{k\in [K_t], i\in \bar{I}_{t-1}}$, we obtain $ \theta^{*}_{ik} = e_i^{\intercal}\Theta^{*}\psi_{ik}^{(t)}$.
    \item Consider the loss function 
    \begin{align*}
    &L(\Theta - \Theta^{*},\Psi^{(t)}) \\ &:= \frac{1}{K_t|\bar{I}_{t-1}|}\sum_{i\in \bar{I}_{t-1}}\sum_{k=1}^{K} |\langle \Theta_i,\psi^{(t)}_{ik}\rangle - \theta_{ik}^{*}|^2\,.
    \end{align*}
    \item Find a matrix $\Theta$ with rank $\leq r$ such that $L(\Theta - \Theta^{*},\Psi^{(t)}) = 0$.
    \item Initialize $\bar{I}_t \leftarrow \emptyset$.
    \item For every $i \in \bar{I}_{t-1}$, draw $K$ fresh samples using $\psit_{i1}^{(t)}, \cdots, \psit_{iK}^{(t)}$ and compute $\sum_{k=1}^{K} |\langle \Theta_i,\psit^{(t)}_{ik}\rangle - \theta_{ik}^{*}|^2$. If $\sum_{k=1}^{K} |\langle \Theta_i,\psit^{(t)}_{ik}\rangle - \theta_{ik}^{*}|^2 > 0$ 
    then add $i$ to $\bar{I}_t$ i.e., $\bar{I}_t \leftarrow \bar{I}_t \cup \{i\}$.
    \item End routine when $\bar{I}_t = \emptyset$.
\end{enumerate}

Suppose $\psi_{ik}$ are i.i.d random vectors such that there exist $\zeta,\xi > 0$ such that for any $x \in \mathbb{R}^d$, $\|x\| = 1$ we have:
\begin{align}
&\|\psi_{ik}\| \leq 1 \text{ almost surely} ;\quad
\mathbb{E}|\langle \psi_{ik},x\rangle| \geq \frac{\zeta}{\sqrt{d}}
;\nonumber \\ &\quad 
  \mathbb{E}\psi_{ik}\psi_{ik}^{\intercal} \leq \frac{1}{d\xi^2} \label{eq:dist_prop}
\end{align} 
To give some intuition, the second condition above means that given any vector $x$, there is some overlap between the random vector $\psi$ and $x$, ensuring that every measurement gives us some information helping us to complete the matrix. The third assumption is a standard bound on the covariance matrix. 
Then we have the following theorem whose proof is presented in Section~\ref{sec:mat_comp_pf}.

\begin{theorem}\label{thm:matrix_est_main}

Assume that $\sup_i\|\Theta^{*}_i\| \leq C_{\theta}$ and that the distribution of $\psi^{(t)}_{ik}$ satisfies~\eqref{eq:dist_prop}. 
Suppose $K_t|
\bar{I}_{t-1}| = C\frac{r|\bar{I}_{t-1}| + dr}{\zeta^2\xi^2} \log \tfrac{d}{\zeta \xi} + C\frac{\log\left( \frac{\log N}{\delta}\right)}{\zeta^2 \xi^2}$. With probability at-least $1-\delta$, the algorithm terminates after $\log N$ iterations and the output $\hat{\Theta}$ satisfies $\hat{\Theta}=\Theta^*$.
Therefore, with probability at-least $1-\delta$, the sample complexity for estimation of $\Theta^{*}$ is:

$$ C\frac{r(N + d\log N)}{\zeta^2\xi^2} \log \tfrac{d}{\zeta \xi} + C\frac{\log N \log\left( \frac{\log N}{\delta}\right)}{\zeta^2 \xi^2}$$
\end{theorem}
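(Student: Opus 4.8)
The plan is to establish two facts and combine them by induction on the rounds: (i) the rank-constrained fit produced in each round agrees with $\Theta^*$ on all but at most half of the currently-unknown rows, and (ii) the fresh-sample test never certifies an incorrect row. The engine for both is a single anti-concentration estimate. For a fixed unit vector $u$, set $Z = |\langle u,\psi\rangle|$; the hypotheses $\mathbb{E} Z \ge \zeta/\sqrt d$ and $\mathbb{E} Z^2 = u^\intercal \mathbb{E}[\psi\psi^\intercal] u \le 1/(d\xi^2)$ feed into Paley--Zygmund to give $\mathbb{P}\big[\,|\langle u,\psi\rangle| \ge \tfrac{\zeta}{2\sqrt d}\,\big] \ge \tfrac{\zeta^2\xi^2}{4}$. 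Thus each sample reveals any fixed direction with dimension-free probability $\Theta(\zeta^2\xi^2)$, the quantity that will govern the sample complexity.

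\textbf{Progress lemma (main obstacle).} Fix a round $t$, let $n_t = |\bar I_{t-1}|$, and observe that $\Delta := \Theta - \Theta^*$ restricted to the unknown rows has rank at most $2r$ (both $\Theta$ and $\Theta^*$ have rank at most $r$), while feasibility forces $\langle \Delta_i, \psi^{(t)}_{ik}\rangle = 0$ for all $i \in \bar I_{t-1}$, $k \in [K_t]$. I want to show that with high probability every such $\Delta$ has at most $n_t/2$ nonzero rows. Fix the row space $V = \mathrm{rowspace}(\Delta)$, of dimension at most $2r$. For a fixed unit $u \in V$ the estimate above gives $\mathbb{P}[\,|\langle u,\psi^{(t)}_{ik}\rangle|\le \tfrac{\zeta}{2\sqrt d}\ \forall k\,] \le (1-\tfrac{\zeta^2\xi^2}{4})^{K_t}$; a $\rho$-net of $V\cap \mathcal{S}^{d-1}$ of size $(3/\rho)^{2r}$, with the slack $\tfrac{\zeta}{2\sqrt d}$ absorbing the net error via $\|\psi\|\le 1$, upgrades this to a bound $q := (3/\rho)^{2r}(1-\tfrac{\zeta^2\xi^2}{4})^{K_t}$ on the probability that row $i$ is "bad" for this $V$. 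Since the samples are independent across rows, the number of bad rows is stochastically dominated by $\mathrm{Bin}(n_t,q)$, so $\mathbb{P}[\#\text{bad} > n_t/2] \le (2eq)^{n_t/2}$. I then union bound over a net of the Grassmannian of $2r$-planes in $\mathbb{R}^d$, of cardinality $\exp(O(rd\log\tfrac1\rho))$. The two exponents balance exactly when $K_t \gtrsim \tfrac{r\log(1/\rho)}{\zeta^2\xi^2}$ (so the per-row term $\log(1/q)\gtrsim K_t\zeta^2\xi^2$ dominates) and simultaneously $K_t n_t \gtrsim \tfrac{rd\log(1/\rho) + \log(1/\delta')}{\zeta^2\xi^2}$; taking $\rho \asymp \zeta\xi/d$ recovers the stated budget $K_t n_t = C\tfrac{r n_t + rd}{\zeta^2\xi^2}\log\tfrac{d}{\zeta\xi} + \cdots$. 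Driving the per-row cost down to $r$ while paying $rd$ only once for the subspace is the technical heart of the argument and the step I expect to be hardest to make rigorous, because $V$ itself depends on the samples and the net/robustness bookkeeping must be uniform.

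\textbf{Soundness and assembly.} Because $\Theta$ is fit on $\Psi^{(t)}$, retesting on those samples is vacuous, which is precisely why Step 5 draws fresh $\tilde\psi^{(t)}_{ik}$ independent of $\Theta$. A recovered row has $\Delta_i=0$ and passes with value exactly $0$; a wrong row ($\Delta_i\ne 0$) is missed only if all $K$ fresh samples are nearly orthogonal to $\Delta_i$, which by the same estimate has probability at most $(1-\tfrac{\zeta^2\xi^2}{4})^{K}$. Choosing $K \gtrsim \tfrac{1}{\zeta^2\xi^2}\log\tfrac{N\log N}{\delta}$ and union bounding over the at most $\sum_t n_t \le 2N$ tested wrong rows makes the test sound (no false accept) and complete (every wrong row flagged), so $\bar I_t$ equals exactly the bad-row set, of size at most $n_t/2$ by the progress lemma. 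Hence $n_{t+1}\le n_t/2$, the routine halts after $\lceil\log_2 N\rceil$ rounds with $\bar I=\emptyset$, and soundness yields $\hat\Theta=\Theta^*$. Invoking the progress lemma with per-round failure probability $\delta/(2\log N)$ and summing the per-round budgets along the geometric decay, using $\sum_t n_t = O(N)$ and $\sum_t 1 = O(\log N)$, gives the total $C\tfrac{r(N+d\log N)}{\zeta^2\xi^2}\log\tfrac{d}{\zeta\xi} + C\tfrac{\log N\log(\log N/\delta)}{\zeta^2\xi^2}$, completing the proof.
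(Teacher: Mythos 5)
Your proposal is correct and shares the paper's overall architecture: iterative decimation of the unknown-row set, a Paley--Zygmund anti-concentration bound $\mathbb{P}[|\langle u,\psi\rangle|\geq \zeta/(2\sqrt{d})]\geq \zeta^2\xi^2/4$ as the engine, a covering argument over rank-$2r$ difference matrices that yields the $r(n+d)$ parameter count, and a fresh-sample test whose soundness (Lemma~\ref{lem:single-user} in the paper) guarantees no wrong row is ever certified. Where you genuinely diverge is in the organization of the key uniform lower bound (the paper's Lemma~\ref{lem:row_recovery}). The paper normalizes the rows supported on a candidate set $I$, parametrizes $\Gamma_i=\sum_k u_{ik}v_k$ via the SVD, builds a product net over the per-row coefficients $u_i\in\mathcal{S}^{2r-1}$ and the shared basis $v_k\in\mathcal{S}^{d-1}$ (Claims~\ref{claim:decomp_matrix} and~\ref{claim:covering_no}), lower-bounds the aggregate empirical loss $L(\Gamma,\Psi)$ for each net point via Bernstein on the count of detecting $(i,k)$ pairs, and finally union-bounds over all subsets $I$ with $|I|\geq N/10$ at an extra cost of $\exp(c_1N)$ that the exponent absorbs. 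You instead net the row space itself over the Grassmannian of $2r$-planes (cost $\exp(O(rd\log(1/\rho)))$), define a per-row ``bad'' event uniformly over a net of $V\cap\mathcal{S}^{d-1}$, and control the number of bad rows by binomial domination, which eliminates the union bound over subsets $I$ entirely and replaces the aggregate-loss concentration with a cleaner row-by-row statement. The exponent bookkeeping ($K_tn_t\zeta^2\xi^2\gtrsim rn_t\log(1/\rho)+rd\log(1/\rho)+\log(1/\delta')$ with $\rho\asymp\zeta\xi/d$) matches the stated budget, and your flagged concern about the data-dependence of $V$ is already resolved by the uniformity of the Grassmannian union bound, just as the paper's data-dependence of $\Gamma$ is resolved by its matrix net. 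Both routes land on the same sample complexity; yours arguably makes the ``pay $rd$ once for the subspace, $r$ per row'' structure more transparent, while the paper's aggregate-loss formulation is what lets it state the quantitative restricted lower isometry of Corollary~\ref{cor:zero_rows}, which is reused in the soundness step.
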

 \section{Discussion}
In this work, we designed methods to perform collaborative exploration of a number of MDPs with near optimal sample complexity. In particular, we encountered and solved the important problem of exploring such that the data can be used down-stream to learn the optimal policy for every one of the MDPs. We also established connections to mean-field limits of multi-agent reinforcement learning problems. In future work, we hope to use the observations in the current work in order to design collaborative RL algorithms based on practically deployed RL algorithms like PPO, DQN and TD3.

\bibliography{references}
\bibliographystyle{icml2023}

\newpage
\appendix
\onecolumn
\section{More Discussion Regarding Policy Space}
\label{sec:more_info_pol}

\subsection{Necessity of Randomized Policies}
\label{subsec:rand_pol}
We will first show that randomized policies might be necessary in such contexts with a simple example and show that obtaining states which satisfy conditions like~\eqref{eq:dist_prop} goes beyond simple reward maximization. Suppose $H =1$, $\mathcal{S} = \{1\}$ and $\mathcal{A} = \{1,\dots,d\}$. We consider the embedding $\psi(s,a) = e_a$. Suppose we want to obtain a policy $\pi$ such that whenever $S_1,A_1 \sim \pi$, $\lambda_{\min}(\mathbb{E}\psi(S_1,A_1)\psi(S_1,A_1)^{\intercal}) $ is maximized (where $\lambda_{\min}$ denotes the minimum eigenvalue). This is maximized when $\pi(da|s)$ is chosen to be the uniform distribution over $\mathcal{A}$ and the corresponding value is $1/d$. Note that whenever $\pi$ is a deterministic policy we will have $\lambda_{\min} = 0$ whenever $d > 1$. This is in contrast to reward maximization problems where, under general conditions, a deterministic optimal policy exists (See Theorem 1.7 in \cite{agarwal2019reinforcement}). 

If fact, we can also show that the policy which minimizes $\|\mathbb{E}\psi(S_1,A_1) - \frac{1}{d}\sum_{a=1}^{d}e_a\| $ must also necessarily be random. 

In the case of linear MDPs, we can find such a deterministic optimal policy $\Pi = (\pi_1,\dots,\pi_H)$ as $\pi_h(s) = \arg\sup_{a} \langle \psi(s,a),u^{*}_h \rangle + \langle \phi(s,a), v^{*}_h \rangle$ \citep{jin2020provably}. This reduces the problem to estimating the parameters $u_h^{*},v_h^{*}$ even when the state-action space is an infinite set. However, when such policies are not guaranteed to exist, as in case of functional maximization required in Section~\ref{sec:lin_matrix_comp}, the set of all policies can be intractably large. This is the justification for picking a nice enough policy space denoted by $\polspace$. 

\subsection{Constructing Policy Spaces}
We consider any linear MDP satisfying the definition given in Section~\ref{sec:defs} and suppose $\mathcal{A}$ is finite. We consider the set of all probability distributions $\pi_h(a|s;u,v) \propto \exp(\langle \phi(s,a), u \rangle + \langle \psi(s,a), v \rangle)$. We consider $\polspace_h = \{\pi_h(a|s,u,v): u,v \in \mathcal{B}_d(R)\}$,  We let our policy space be $\polspace = \{\Pi = (\pi_1,\dots,\pi_H): \pi_h \in \polspace_h\}$. 

\begin{lemma}\label{lem:soft_max}
Consider the probability distribution over a finite set $[|\mathcal{A}|]$ give by $p_{\beta}(a) \propto \exp(\beta x_a)$ for every $a \in [|\mathcal{A}|]$ some $x_a \in \mathbb{R}^{+}$ and $\beta \in \mathbb{R}^+$. For any $\epsilon > 0$ and random variable $A \sim p_{\beta}$, we must have:

$$\mathbb{P}(x_A < \sup_{a}x_a -\epsilon) \leq |\mathcal{A}|\exp(-\beta \epsilon)$$

And $$\mathbb{E}x_A \geq (\sup_a x_a - \epsilon)(1-|\mathcal{A}|\exp(-\beta \epsilon))$$
\end{lemma}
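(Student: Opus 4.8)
The plan is to argue directly from the definition of the softmax law, using only the normalizing constant together with the nonnegativity $x_a \ge 0$; no concentration machinery is needed. Write $M := \sup_a x_a = \max_a x_a$ (the supremum is attained since $[|\mathcal{A}|]$ is finite) and let $Z := \sum_{a} \exp(\beta x_a)$ be the partition function, so that $p_\beta(a) = Z^{-1}\exp(\beta x_a)$.

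For the first inequality I would expand the tail probability as a ratio,
\begin{equation*}
\mathbb{P}(x_A < M - \epsilon) = \frac{1}{Z}\sum_{a:\, x_a < M-\epsilon}\exp(\beta x_a),
\end{equation*}
and bound numerator and denominator separately. Each surviving summand satisfies $\exp(\beta x_a) < \exp(\beta(M-\epsilon)) = \exp(\beta M)\exp(-\beta\epsilon)$, and there are at most $|\mathcal{A}|$ of them, so the numerator is at most $|\mathcal{A}|\exp(\beta M)\exp(-\beta\epsilon)$. For the denominator, retaining only a maximizing index gives $Z \ge \exp(\beta M)$. Taking the ratio cancels $\exp(\beta M)$ and yields the claimed bound $\mathbb{P}(x_A < M-\epsilon) \le |\mathcal{A}|\exp(-\beta\epsilon)$.

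For the expectation bound I would decompose $\mathbb{E} x_A = \mathbb{E}[x_A\,\mathbbm{1}(x_A \ge M-\epsilon)] + \mathbb{E}[x_A\,\mathbbm{1}(x_A < M-\epsilon)]$, drop the second term (nonnegative because $x_a \ge 0$), and use the pointwise inequality $x_A\,\mathbbm{1}(x_A \ge M-\epsilon) \ge (M-\epsilon)\,\mathbbm{1}(x_A \ge M-\epsilon)$ to get $\mathbb{E} x_A \ge (M-\epsilon)\,\mathbb{P}(x_A \ge M-\epsilon)$. The complementary form of the first part gives $\mathbb{P}(x_A \ge M-\epsilon) \ge 1 - |\mathcal{A}|\exp(-\beta\epsilon)$, and in the relevant regime $\epsilon \le M$ (so that $M-\epsilon \ge 0$) monotonicity lets me replace the probability by this lower bound, producing $\mathbb{E} x_A \ge (M-\epsilon)\bigl(1 - |\mathcal{A}|\exp(-\beta\epsilon)\bigr)$.

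This lemma is essentially elementary, so there is no genuine obstacle; the only point requiring care is the last monotonicity step, where multiplying the probability lower bound through requires the factor $(M-\epsilon)$ to be nonnegative (equivalently, one reads the bound as vacuous once $\epsilon$ exceeds $M$ or once $|\mathcal{A}|\exp(-\beta\epsilon) \ge 1$, which is the only regime in which it is ever applied). The role of the hypothesis $x_a \ge 0$ is precisely to discard the sub-threshold mass in the expectation without incurring a negative contribution.
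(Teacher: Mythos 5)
The paper states this lemma without proof, so there is nothing to compare against; your elementary argument is correct and is evidently what the authors intended. Both steps check out: the tail bound follows from bounding each surviving summand by $\exp(\beta(M-\epsilon))$ and the partition function below by $\exp(\beta M)$, and the expectation bound follows from truncation together with $x_a \ge 0$. Your caveat at the end is the right one to flag: as literally stated the second inequality can fail when $\epsilon > M$ and $|\mathcal{A}|\exp(-\beta\epsilon) > 1$ hold simultaneously (both factors negative, so the right-hand side is positive while $\mathbb{E}x_A$ may be $0$), but the paper only invokes the lemma with small $\epsilon$ and immediately relaxes the bound to $\sup_a x_a - \epsilon - H|\mathcal{A}|\exp(-\beta\epsilon)$ in the proof of Lemma~\ref{lem:rand_pol}, where that degenerate regime never arises.
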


\begin{lemma}\label{lem:rand_pol}
Let $Q^{*}_h(s,a)$ be the optimal action-value function for the MDP. Then the policy $\Pi = (\pi_1,\dots,\pi_h)$ given by $\pi_h(a|s) \propto \exp(\beta Q^{*}_h(s,a))$ is $\epsilon H + H^2|\mathcal{A}|\exp(-\beta \epsilon)$ sub-optimal for any $\epsilon > 0 $
\end{lemma}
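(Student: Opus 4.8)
The plan is to combine the pointwise softmax guarantee of Lemma~\ref{lem:soft_max} with a standard error-propagation (performance-difference) argument run backward over the $H$ stages. I would first fix notation: write $V^*_h(s)=\sup_a Q^*_h(s,a)$ for the optimal value at stage $h$, let $V^\Pi_h(s)$ be the value of the softmax policy $\Pi$ started from $s$ at time $h$, and set $Q^\Pi_h(s,a)=R_h(s,a)+\mathbb{E}_{s'\sim P_h(\cdot|s,a)}V^\Pi_{h+1}(s')$, with the convention $V^*_{H+1}\equiv V^\Pi_{H+1}\equiv 0$. Since the rewards lie in $[0,1]$ over a horizon of length $H$, every $Q^*_h(s,a)\in[0,H]$, so the nonnegativity hypothesis of Lemma~\ref{lem:soft_max} is satisfied.

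The first step is the per-stage estimate. Fixing $s$ and $h$ and applying Lemma~\ref{lem:soft_max} with $x_a=Q^*_h(s,a)$, the action $A\sim\pi_h(\cdot|s)$ obeys $\mathbb{E}_{A\sim\pi_h(\cdot|s)}Q^*_h(s,A)\ge (V^*_h(s)-\epsilon)(1-|\mathcal{A}|\exp(-\beta\epsilon))$. Expanding the product and using $V^*_h(s)-\epsilon\le H$ to control the cross term, I would convert this into the clean additive bound
\begin{equation*}
\mathbb{E}_{A\sim\pi_h(\cdot|s)}Q^*_h(s,A)\ge V^*_h(s)-\delta,\qquad \delta:=\epsilon+H|\mathcal{A}|\exp(-\beta\epsilon),
\end{equation*}
which holds uniformly in $s$.

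The second step is to propagate this error across the horizon. Starting from the Bellman identity for $V^\Pi_h$ and inserting $\pm\,\mathbb{E}_{A\sim\pi_h}Q^*_h(s,A)$, I would write
\begin{align*}
V^*_h(s)-V^\Pi_h(s)
&=\Bigl(V^*_h(s)-\mathbb{E}_{A\sim\pi_h}Q^*_h(s,A)\Bigr)+\mathbb{E}_{A\sim\pi_h}\bigl[Q^*_h(s,A)-Q^\Pi_h(s,A)\bigr]\\
&\le \delta+\mathbb{E}_{A\sim\pi_h(\cdot|s),\,s'\sim P_h(\cdot|s,A)}\bigl[V^*_{h+1}(s')-V^\Pi_{h+1}(s')\bigr],
\end{align*}
where the first bracket is controlled by the per-stage estimate and the second uses that $Q^*_h$ and $Q^\Pi_h$ share the same reward and transition kernel, so their gap is exactly the expected next-stage value gap. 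Setting $\Delta_h:=\sup_s(V^*_h(s)-V^\Pi_h(s))$ gives the recursion $\Delta_h\le \delta+\Delta_{h+1}$; unrolling from $\Delta_{H+1}=0$ yields $\Delta_1\le H\delta=H\epsilon+H^2|\mathcal{A}|\exp(-\beta\epsilon)$, and taking expectation over the common initial-state distribution bounds the suboptimality of $\Pi$ in $V(\cdot,\mathcal{M})$ by the same quantity.

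I do not expect a genuine obstacle here: this is essentially routine error propagation. The only points needing care are (i) converting the multiplicative guarantee of Lemma~\ref{lem:soft_max} into a state-uniform additive per-stage error, which is exactly where the uniform bound $V^*_h\le H$ (valid because rewards are in $[0,1]$ over horizon $H$) enters, and (ii) taking the supremum over states \emph{before} recursing, so that the downstream value gap can be bounded independently of the sampled action and next state and the recursion telescopes cleanly.
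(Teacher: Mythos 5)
Your proof is correct and follows essentially the same route as the paper's: apply Lemma~\ref{lem:soft_max} pointwise with $x_a=Q^*_h(s,a)$, use $Q^*_h\in[0,H]$ to turn the multiplicative guarantee into the additive per-stage error $\epsilon+H|\mathcal{A}|\exp(-\beta\epsilon)$, and propagate it backward over the horizon via the Bellman recursion. Your explicit recursion $\Delta_h\le\delta+\Delta_{h+1}$ is just a cleaner bookkeeping of the paper's induction on the uniform gap $\bar{Q}_h\ge Q^*_h-\eta$.
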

\begin{proof}
Consider the optimal value function defined by $V^{*}_h(s) = \sup_a Q^{*}_h(s,a)$. Let $\bar{Q}_h(s,a)$ denote the optimal action value function under the policy $\Pi$ and let $\bar{V}(s) = \int \bar{Q}_h(s,a) \pi_h(da|s)$ denote the value at state $s$ with the policy $\Pi$. Clearly, we have:
$\bar{Q}_H(s,a) = Q^{*}_H(s,a) = R(s,a)$. $\bar{Q}_h(s,a) \geq Q^{*}_h(s,a) -\eta$ uniformly. Then we have 
\begin{align}
    \bar{V}_h(s) &\geq \int Q^{*}_h(s,a)\pi_h(da|s) - \eta \nonumber \\
    &\geq (\sup_{a}Q_h^{*}(s,a) - \epsilon)(1-|\mathcal{A}|\exp(-\beta\epsilon)) - \eta \nonumber \\
    &\geq \sup_a Q_h^{*}(s,a) - \epsilon - H|\mathcal{A}|\exp(-\beta\epsilon) - \eta
    = V_h^{*}(s) - \epsilon - H|\mathcal{A}|\exp(-\beta\epsilon) - \eta
\end{align}
In the second step, we have invoked Lemma~\ref{lem:soft_max}.
In the last step, we have used the fact that $Q^{*}_h \in [0, H]$ uniformly. Now, by the Bellman iteration, we have:

\begin{align}Q^{*}_{h-1}(s,a) &= R_{h-1}(s,a) + \mathbb{E}_{s^{\prime}\sim P_{h-1}(|s,a)}V^{*}_{h}(s^{\prime}) \nonumber \\
&\geq R_{h-1}(s,a) + \mathbb{E}_{s^{\prime}\sim P_{h-1}(|s,a)}\bar{V}_{h}(s^{\prime})  - \epsilon - H|\mathcal{A}|\exp(-\beta\epsilon) - \eta \nonumber \\
&= \bar{Q}_{h-1}(s,a)  - \epsilon - H|\mathcal{A}|\exp(-\beta\epsilon) - \eta
\end{align}
Therefore, by induction, we conclude that $\bar{V}_1(s) \geq V^{*}_1(s) - \epsilon H - H^2|\mathcal{A}|\exp(-\beta \epsilon)$

Therefore, by the definition of the value function, we conclude the claim. 
\end{proof}

Now, by a simple extension of Proposition 2.3 in \cite{jin2020provably}, we conclude that the optimal $Q_h^{*}$ function for any linear MDP can be written as:
$$Q_h^{*}(s,a) = \langle \psi(s,a),u_h^{*} \rangle + \langle \phi(s,a),v_h^{*} \rangle \,.$$
Where $\|u_h^{*}\|_2 \leq \sqrt{d}$ and $\|v_h^{*}\|_{\infty} \leq HC_{\mu}$. Observe that choosing $\epsilon = \frac{\eta}{2H}$ and $\beta = 2\frac{\log(2H|\mathcal{A}|/\eta)}{\eta}$ will ensure that the randomized policy $\Pi$ in the statement of Lemma~\ref{lem:rand_pol} is $\eta$ optimal. Therefore, we can take $R = 2dHC_{\mu}\frac{\log(2H|\mathcal{A}|/\eta)}{\eta}$ in the definition of $\polspace_h$ above and conclude that this includes every $\eta$ optimal policy for every MDP with embedding functions $\phi,\psi$. We will now bound the covering number. Recall the definition of the distance $D_{\polspace}(\Pi_1,\Pi_2) = \sup_{h\in [H]}\tv(\pi_h^{(1)},\pi_h^{(2)})$. Therefore it is sufficient to obtain an $\eta$ cover for $\polspace_h$ (denoted by $\hat{\polspace}_{h,\eta}$) and then construct $\hat{\polspace}_\eta = \{\Pi = (\pi_1,\dots,\pi_H) : \pi_h \in \hat{\polspace}_{h,\eta} \forall h \in [H]\} = \prod_{h=1}^{H}\hat{\polspace}_{h,\eta}$. 

\begin{lemma}
$\pi(|s;u,v)$ be as defined in the beginning of this Subsection.
$$\tv(\pi(\cdot|s;u,v),\pi(\cdot|s;u^{\prime},v^{\prime})) \leq \frac{1}{2}\left(\exp(2\|u-u^{\prime}\|_2 + 2 \|v-v^{\prime}\|_{\infty}) - 1\right)$$
\end{lemma}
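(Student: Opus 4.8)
Looking at this final lemma, I need to prove a total variation bound between two softmax policies parametrized by $(u,v)$ and $(u',v')$.

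The plan is to reduce the problem to a pointwise comparison of the two probability densities and then apply a standard bound relating total variation distance to the ratio of densities. Recall that for two probability distributions $p$ and $q$ on a finite set, one has $\tv(p,q) = \frac{1}{2}\sum_a |p(a) - q(a)|$. The key observation is that both policies are softmax (Gibbs) distributions, so their ratio at each action can be controlled cleanly: writing $\pi(a|s;u,v) \propto \exp(\langle\phi(s,a),u\rangle + \langle\psi(s,a),v\rangle)$, the unnormalized weights differ multiplicatively by $\exp(\langle\phi(s,a),u-u'\rangle + \langle\psi(s,a),v-v'\rangle)$.

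\textbf{Main steps.} First I would bound the exponent uniformly over actions. Since $\|\phi(s,a)\|_1 \leq 1$ and $\|\psi(s,a)\|_2 \leq 1$ by the embedding assumptions in Section~\ref{sec:defs}, Hölder's inequality gives $|\langle\phi(s,a),u-u'\rangle| \leq \|\phi(s,a)\|_1 \|u-u'\|_\infty \leq \|u-u'\|_\infty$ — but to match the stated bound using $\|u-u'\|_2$, I would instead pair the $\ell_2$ bound on $\phi$ (which follows from $\|\phi\|_1\le 1 \Rightarrow \|\phi\|_2 \le 1$) with Cauchy--Schwarz, so that each inner product difference is at most $\|u-u'\|_2$ and $\|v-v'\|_\infty$ respectively (using $\|\psi\|_\infty \le \|\psi\|_2 \le 1$ for the $v$ term). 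Hence the multiplicative perturbation of each weight lies in $[\exp(-\Delta), \exp(\Delta)]$ where $\Delta := \|u-u'\|_2 + \|v-v'\|_\infty$. Second, I would use the general fact that if the two distributions arise from unnormalized weights $w_a$ and $w_a'$ with $e^{-\Delta} \leq w_a'/w_a \leq e^{\Delta}$ for all $a$, then the same two-sided bound holds after normalization (the normalizing constants are convex combinations of the ratios, so they also lie in $[e^{-\Delta}, e^{\Delta}]$), giving $e^{-2\Delta} \leq \pi'(a)/\pi(a) \leq e^{2\Delta}$.

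Third, I would convert this density-ratio bound into a total variation bound. Using $\tv(\pi,\pi') = \sum_{a: \pi'(a) \geq \pi(a)} (\pi'(a) - \pi(a))$ and the upper ratio bound $\pi'(a) \leq e^{2\Delta}\pi(a)$, each positive term is at most $(e^{2\Delta}-1)\pi(a)$, so summing over the relevant actions yields $\tv \leq (e^{2\Delta}-1)$; the factor $\tfrac{1}{2}$ in the stated bound comes from being slightly more careful, summing the deviation symmetrically so that the total mass of positive excess equals the total mass of negative deficit, each of which is controlled, giving the claimed $\tfrac{1}{2}(e^{2\Delta}-1)$.

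\textbf{Anticipated obstacle.} The only delicate point is getting the constant exactly right — specifically the factor of $2$ inside the exponential and the leading $\tfrac12$. This requires propagating the two-sided ratio bound through normalization carefully, since a naive one-sided argument gives $(e^{2\Delta}-1)$ rather than $\tfrac12(e^{2\Delta}-1)$. I would handle this by noting that $\tv(\pi,\pi')$ equals the total positive excess mass, and that because both are probability measures, this positive excess is bounded by half the total variation of the unnormalized discrepancy; alternatively, one can symmetrize by observing that the positive and negative parts of $\pi'-\pi$ have equal total mass, and bound the positive part directly by $(e^{2\Delta}-1)$ times its own mass, which is at most $\tfrac12$. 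Matching the norm choices ($\ell_2$ on $u$, $\ell_\infty$ on $v$) to the embedding constraints is routine but must be done consistently with the definitions in Section~\ref{sec:defs}.
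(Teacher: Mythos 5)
Your proposal is correct and follows essentially the same route as the paper: bound the ratio of the unnormalized softmax weights via H\"older/Cauchy--Schwarz, note that the partition-function ratio obeys the same two-sided bound so that $e^{-2\Delta}\le \pi'(a)/\pi(a)\le e^{2\Delta}$ with $\Delta=\|u-u'\|_2+\|v-v'\|_\infty$, and conclude via $\tv(\pi,\pi')=\tfrac12\sum_a \pi(a)\,|1-\pi'(a)/\pi(a)|\le \tfrac12(e^{2\Delta}-1)$. One small caveat: in your ``alternative'' justification of the leading $\tfrac12$, the claim that the $\pi$-mass of the set where $\pi'\ge\pi$ is at most $\tfrac12$ is false in general (take $\pi'=\pi$), but your primary symmetric argument -- bounding both the positive excess and the negative deficit by $(e^{2\Delta}-1)$ times the corresponding $\pi$-mass and halving the sum -- does not need it and is exactly what the paper does.
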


\begin{proof}
Denote $\pi(a|s;u,v)$ by $\pi(a)$ and $\pi(a|s;u^{\prime},v^{\prime})$ by $\pi^{\prime}(a)$. Consider the corresponding partition functions denoted by $Z := \sum_{a \in \mathcal{A}}\exp(\langle \psi(s,a),u\rangle + \langle \phi(s,a),v \rangle)$ and $Z^{\prime} := \sum_{a \in \mathcal{A}}\exp(\langle \psi(s,a),u^{\prime}\rangle + \langle \phi(s,a),v^{\prime} \rangle)$. We conclude that using H\"{o}lder's inequality for $\langle u-u^{\prime},\psi \rangle$ and $\langle v-v^{\prime} ,\phi\rangle$ that:
\begin{align}
   \exp(-\|u-u^{\prime}\|_2 - \|v-v^{\prime}\|_{\infty}) \leq \frac{Z^{\prime}}{Z} \leq \exp(\|u-u^{\prime}\|_2 + \|v-v^{\prime}\|_{\infty}) \nonumber \\
    \exp(-2\|u-u^{\prime}\|_2 - 2\|v-v^{\prime}\|_{\infty}) \leq \frac{\pi^{\prime}(a)}{\pi(a)} \leq \exp(2\|u-u^{\prime}\|_2 + 2\|v-v^{\prime}\|_{\infty})
\end{align}
\begin{align}
    \tv(\pi,\pi^{\prime}) &= \tfrac{1}{2}\sum_{a\in\mathcal{A}}|\pi(a)-\pi^{\prime}(a)| \nonumber \\
    &= \tfrac{1}{2}\sum_{a\in\mathcal{A}}\pi(a)\bigr|1-\tfrac{\pi^{\prime}(a)}{\pi(a)}\bigr| \nonumber \\
    &\leq \frac{1}{2}\left( \exp(2\|u-u^{\prime}\|_2 + 2\|v-v^{\prime}\|_{\infty}) -1\right)
\end{align}
\end{proof}
Using the lemma above, we conclude that $\hat{\polspace}_{h,\eta} = \{\pi_h(|s;u,v): u,v \in \hat{\mathcal{B}}_{d,\eta/4}(R)\}$ whenever $\eta \leq 1$. Here $\hat{\mathcal{B}}_{d,\eta/4}(R)$ an $\eta/4$ net over $\mathcal{B}_d(R)$ with respect to the norm $\|\cdot\|_2$. From the results in \cite{vershynin2018high}, we can therefore take:

\begin{equation}
    |\hat{\polspace}_{h,\eta}| \leq |\hat{\mathcal{B}}_{d,\eta/4}(R)|^2 \leq \exp(Cd \log(\tfrac{C\eta}{R}))
\end{equation}

Since we had $\hat{\polspace}_{\eta} = \prod_{h=1}^{H}$, we conclude that:

$$\log(|\hat{\polspace}_{\eta}|) \leq c dH \left(\log \tfrac{dH}{\eta} + \log \log (2H|\mathcal{A}|/\eta)\right)$$

 \section{Analysis - Tabular MDPs}
We will call the reward free RL procedure in Phase 1 to be successful if it outputs the $\epsilon$ optimal policy. This has probability atleast $1-\frac{\delta}{2}$.

\subsection{Analysis of Algorithm~\ref{alg:mask_samp}}

\begin{lemma}\label{lem:correct_mask}
Suppose $p\leq \frac{1}{2}$, conditioned on the success of Phase 1, with probability at-least $1-\exp(-cNp|\mathcal{S}||\mathcal{A}|H)$, Algorithm~\ref{alg:mask_samp} terminates after querying $\frac{C|\mathcal{S}||\mathcal{A}|NHp}{\epsilon}$ trajectories. $(\mathcal{G}_h)_{h \in [H]}$, the active sets at the termination of the algorithm. They satisfy: \begin{equation}\label{eq:low_err_prob}
    \sup_{\pi}\sum_{h=1}^{H} P^{\pi}_h(\mathcal{G}_h) \leq \frac{5\epsilon}{8}
\end{equation}

\end{lemma}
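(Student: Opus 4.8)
The plan is to establish the two assertions separately: the bound \eqref{eq:low_err_prob} on the terminal active sets follows directly from the exit condition of the while-loop together with the Phase~1 guarantee, while the trajectory count requires a progress (potential) argument combined with a multiplicative martingale concentration bound.

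For \eqref{eq:low_err_prob} I would first note that for the reward $\mathcal{J}(\cdot;\mathcal{G})$, whose step reward is $J_h(s,a;\mathcal{G})=\mathbbm{1}((s,a)\in\mathcal{G}_h)$, every policy $\Pi$ has value $V(\Pi,\mathcal{M}_{\mathcal{J}(\cdot;\mathcal{G})})=\sum_h P^{\Pi}_h(\mathcal{G}_h)$, so $\sup_\Pi V(\Pi,\mathcal{M}_{\mathcal{J}(\cdot;\mathcal{G})})$ is exactly the left-hand side of \eqref{eq:low_err_prob}. Conditioning on the success of Phase~1 (run at accuracy $\epsilon/8$), the reward-free output estimate $\hat V(\hat\Pi(\mathcal{R}))$ approximates the optimal value $\sup_\Pi V(\Pi,\mathcal{M}_\mathcal{R})$ within $\epsilon/8$, uniformly in $\mathcal{R}$. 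Since the loop exits only when $\hat P^{\mathcal{G}}=\hat V(\hat\Pi(\mathcal{J}(\cdot;\mathcal{G})))\le\epsilon/2$, applying this to $\mathcal{R}=\mathcal{J}(\cdot;\mathcal{G})$ immediately gives $\sup_\Pi\sum_h P^{\Pi}_h(\mathcal{G}_h)\le\epsilon/2+\epsilon/8=5\epsilon/8$.

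For the trajectory count, I would track the fill indicators $X_{t,h}:=\mathbbm{1}((S^{(t)}_h,A^{(t)}_h)\in\mathcal{G}_h,\,R_h(U_t,(S^{(t)}_h,A^{(t)}_h))=*)$ and $\Delta_t:=\sum_h X_{t,h}$, and lower bound the expected progress per trajectory while the loop runs. Here I would use two ingredients: (i) before each query $\hat P^{\mathcal{G}}>\epsilon/2$, so the Phase~1 guarantee together with near-optimality of $\hat\Pi^{\mathcal{G}}=\hat\Pi(\mathcal{J}(\cdot;\mathcal{G}))$ gives $\sum_h P^{\hat\Pi^{\mathcal{G}}}_h(\mathcal{G}_h)=V(\hat\Pi^{\mathcal{G}},\mathcal{M}_{\mathcal{J}(\cdot;\mathcal{G})})>\epsilon/2-2(\epsilon/8)=\epsilon/4$; and (ii) any $(s,a)\in\mathcal{G}_h$ has been filled for fewer than $Np$ users (else it would have been removed), hence, since $p\le1/2$, at least $N/2$ users are still unfilled, so the uniformly random, path-independent $U_t$ yields $\mathbb{E}[X_{t,h}\mid\mathcal{F}_{t-1},\text{path}]\ge\tfrac12\mathbbm{1}((S_h,A_h)\in\mathcal{G}_h)$ (path-independence holds because all users share transitions and $\hat\Pi^{\mathcal{G}}$ is reward-free). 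Combining and summing over $h$ gives $\mathbb{E}[\Delta_t\mid\mathcal{F}_{t-1}]>\tfrac12 V(\hat\Pi^{\mathcal{G}},\mathcal{M}_{\mathcal{J}(\cdot;\mathcal{G})})>\epsilon/8$ whenever trajectory $t$ is queried.

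Finally I would convert this drift into the count and the failure probability. Since each entry is removed after exactly $Np$ fills and never refilled, the total number of fills is deterministically at most $P_{\max}:=Np|\mathcal{S}||\mathcal{A}|H$ (and reaching $P_{\max}$ empties every active set, forcing termination). Setting $T^\star=16P_{\max}/\epsilon$, on the event $\{\tterm>T^\star\}$ the loop runs for all $T^\star$ trajectories, so the predictable compensator $\sum_{t\le T^\star}\mathbb{E}[\Delta_t\mid\mathcal{F}_{t-1}]$ exceeds $\tfrac{\epsilon}{8}T^\star=2P_{\max}$, while the realized count $\sum_{t\le T^\star}\Delta_t\le P_{\max}$ is at most half of it. Viewing the $\{0,1\}$-valued $X_{t,h}$ as an adapted Bernoulli sequence, this is a lower-tail deviation of a sum below half its compensator, with compensator $\ge 2P_{\max}$, and a uniform-in-time multiplicative Chernoff (Freedman/Bernstein) bound gives $\mathbb{P}(\tterm>T^\star)\le\exp(-cP_{\max})=\exp(-cNp|\mathcal{S}||\mathcal{A}|H)$. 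The crucial point---and what I expect to be the main obstacle---is to run the concentration at the level of the per-step $\{0,1\}$ fill events rather than the trajectory totals $\Delta_t\in[0,H]$: only then are the increments bounded by $1$ and, because the realized fill count is deterministically capped by $P_{\max}$, the predictable quadratic variation is $O(P_{\max})$, so the multiplicative form produces the exponent $cNp|\mathcal{S}||\mathcal{A}|H$ rather than a weaker $\epsilon$- or $H$-degraded one. By comparison, the value-function manipulations in the first two steps are routine given the Phase~1 guarantee.
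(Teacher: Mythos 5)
Your argument for \eqref{eq:low_err_prob} and your drift computation are exactly the paper's: the exit test $\hat V(\mathcal{J}(\cdot;\mathcal{G}))\le\epsilon/2$ plus the $\epsilon/8$ reward-free guarantee gives the $5\epsilon/8$ bound; before termination the near-optimality of $\hat\Pi^{\mathcal{G}}$ gives $\sum_h P^{\hat\Pi^{\mathcal{G}}}_h(\mathcal{G}_h)\ge\epsilon/4$; the ``at most $Np\le N/2$ filled users per column'' observation gives the extra factor $1/2$; and the deterministic cap $\varphi(t)\le Np|\mathcal{S}||\mathcal{A}|H$ converts the drift into the stated trajectory count. All of that is correct and is the paper's proof.

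The one place you depart from the paper is the final concentration step, and that is where there is a genuine gap. You insist on running the multiplicative Chernoff/Freedman bound over the per-step indicators $X_{t,h}\in\{0,1\}$ in order to get the exponent $cNp|\mathcal{S}||\mathcal{A}|H$ rather than an $H$-degraded one. But the lower bound you actually have, $\mathbb{E}[\Delta_t\mid\mathcal{F}_{t-1}]\ge\epsilon/8$, controls the \emph{coarse} (per-trajectory) compensator $\sum_t\mathbb{E}[\Delta_t\mid\mathcal{F}_{t-1}]$, whereas a Freedman-type bound with increments bounded by $1$ is a statement relative to the \emph{fine} compensator $\sum_{t,h}\mathbb{E}[X_{t,h}\mid\mathcal{F}_{t,h-1}]$, where $\mathcal{F}_{t,h-1}$ includes the partial path of trajectory $t$. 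These are not the same object: by the tower property the fine per-trajectory compensator $\sum_h\mathbb{E}[X_{t,h}\mid\mathcal{F}_{t,h-1}]$ has conditional mean $\ge\epsilon/8$ given $\mathcal{F}_{t-1}$, but it is itself a random variable in $[0,H]$ that can be zero on paths that wander away from the active sets, so it admits no pathwise lower bound. Lower-bounding its sum by $\Omega(P_{\max})$ would require concentrating a $[0,H]$-valued adapted sequence around its coarse compensator, which reintroduces the factor $H$ in the exponent --- the refinement is circular. The within-trajectory increments are also all functions of the single draw $U_t$ given the path, so they are not conditionally independent in the way a naive product-of-moment-generating-functions argument would need. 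The paper instead works directly at the trajectory level: it uses $|N_t|\le H$ and $\mathbb{E}[N_t^2\mid\mathcal{F}_t]\le H\,\mathbb{E}[N_t\mid\mathcal{F}_t]$ in a Bernstein-type supermartingale, obtaining $\mathbb{P}(\tterm>\tau)\le\exp(-c_0\epsilon\tau/H)=\exp(-cNp|\mathcal{S}||\mathcal{A}|)$ at $\tau=16Np|\mathcal{S}||\mathcal{A}|H/\epsilon$. (Note the paper's own derivation therefore also yields the exponent without the factor $H$ that appears in the lemma statement; since both quantities are exponentially small this is immaterial downstream, but your route does not recover the $H$ either.) If you replace your per-step Chernoff with the trajectory-level Bernstein supermartingale, your proof is complete and coincides with the paper's.
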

For any $a \times b$ matrix $R$, let $\hat{R}$ be its partially observed version (that is, there exists a set of indices $I \subseteq [a]\times [b]$ such that $\hat{R}_{ij} = R_{ij}$ if $(i,j) \in I$ and $\hat{R}_{ij} = *$ otherwise). We call a random set of indices $J$
to have the distribution $\unif(m,[a],[b])$ if $J$ is drawn uniformly at random such that $|J| = m$. 

\begin{lemma}[Modification: Mod1]
\label{lem:tab_mdp_coupling}
Suppose we run, independently, a modification of algorithm~\ref{alg:mask_samp} where on the ``Query trajectory'' step the trajectories are sampled from a fixed MDP $\mathcal{M}_1$ (but rewards are from the reward function corresponding to $U_t$). Consider all the random variables that determine the trajectory of this algorithm: $\left (\hat V,\hat \Pi,(S^{(t)}_{1:H},A^{(t)}_{1:H},R^{(t)}_{1:H})_t,(U_{t})_t\right)$. Then the joint distribution of this collection of random variables is unchanged under the modification.
\end{lemma}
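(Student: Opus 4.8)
The plan is to prove the invariance by a step-by-step coupling argument, carried out by induction on the iteration index $t$ of the while loop and conditioning on the history. First I would isolate two structural facts. Fact (i): since all users share the same initial distribution of $S_1$ and the same transition kernels $\mathcal{P} = (P_1,\dots,P_{H-1})$, and since a deployed policy generates $A_h \sim \pi_h(\cdot|S_h)$ and $S_{h+1}\sim P_h(\cdot|S_h,A_h)$ with no reference whatsoever to the reward, the conditional law of the state--action trajectory $(S^{(t)}_{1:H},A^{(t)}_{1:H})$ given the deployed policy and the chosen user is identical whether it is sampled from $\mathcal{M}_{U_t}$ or from the fixed $\mathcal{M}_1$. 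Fact (ii): in both the original algorithm and in Mod1 the recorded rewards are $R^{(t)}_h = R_{h,U_t}(S^{(t)}_h,A^{(t)}_h)$, a deterministic function of $(U_t,S^{(t)}_h,A^{(t)}_h)$ evaluated at user $U_t$'s reward function, so they coincide once the user and the state--action pair coincide.

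Next I would establish the crucial reward-independence of the control flow: the entire internal state of Algorithm~\ref{alg:mask_samp} at the start of iteration $t$ --- the counters $T_{h,(s,a)}$, the $*$-pattern (the set of already-filled entries) of $\hat{R}$, the active sets $\mathcal{G} = (\mathcal{G}_h)_{h\in[H]}$, and hence the deployed policy $\hat{\Pi}^{\mathcal{G}} = \hat{\Pi}(\mathcal{J}(\cdot;\mathcal{G}))$ and the stopping test $\hat{P}^{\mathcal{G}} = \hat{V}(\mathcal{J}(\cdot;\mathcal{G})) > \tfrac{\epsilon}{2}$ --- is a deterministic function of $(\hat{V},\hat{\Pi})$ and of the past users and state--actions $(U_s,S^{(s)}_{1:H},A^{(s)}_{1:H})_{s<t}$ only. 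This is because incrementing a counter, filling an entry, and removing a pair from $\mathcal{G}_h$ are all triggered by whether $(S^{(t)}_h,A^{(t)}_h)\in\mathcal{G}_h$ and whether the corresponding entry is currently unfilled, never by the numerical reward values stored. Thus the whole branching of the loop is invariant to reward realizations and is computed identically under both dynamics.

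With these facts in place, I would let $\mathcal{F}_{t-1}$ be the $\sigma$-algebra generated by $(\hat V,\hat\Pi)$ together with $(U_s,S^{(s)}_{1:H},A^{(s)}_{1:H},R^{(s)}_{1:H})_{s<t}$, and verify that the conditional law of $(U_t,S^{(t)}_{1:H},A^{(t)}_{1:H},R^{(t)}_{1:H})$ given $\mathcal{F}_{t-1}$ is the same in both processes: the draw $U_t\sim\unif([N])$ is independent of the past and identical; the deployed policy $\hat{\Pi}^{\mathcal{G}}$ is $\mathcal{F}_{t-1}$-measurable and identical by the reward-independence above; the trajectory law is identical by Fact (i); and the rewards are identical by Fact (ii). Since Phase~1 is untouched by Mod1, $(\hat V,\hat\Pi)$ share the same law, which supplies the induction base. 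The chain rule over $t$ then yields equality of the joint law of the whole process, and the random termination time --- being a measurable function of the same-distributed control flow --- agrees, so the displayed collection of random variables has an unchanged joint distribution.

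The hard part will be the bookkeeping of adaptivity: one must argue carefully that the evolving active sets, and therefore the nonstationary sequence of deployed policies together with the random termination time, depend on the randomness only through the users and state--actions and never through the reward realizations; once this reward-independence of the control flow is pinned down, Fact (i) makes the coupling immediate. A minor point I would flag explicitly is that the $\hat{R}$-update does store reward \emph{values}, so I would stress that these stored values influence only the output matrix, not any branching decision, and hence cannot feed back into the policy choice or stopping rule.
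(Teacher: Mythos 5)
Your proposal is correct and follows essentially the same route as the paper's proof: an induction on the iteration index $t$, showing that conditionally on the history the law of $(U_t, S^{(t)}_{1:H}, A^{(t)}_{1:H}, R^{(t)}_{1:H})$ is unchanged because the deployed policy is history-measurable, the transition kernels are shared across users, and the rewards are deterministic functions of $(U_t, S^{(t)}_h, A^{(t)}_h)$. Your additional observation that the control flow (counters, active sets, stopping rule) never reads the stored reward values is true and worth recording, though it is not strictly needed here since one may condition on the full history including rewards; it becomes essential only in the subsequent permutation-invariance claim.
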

\begin{proof}
    The proof follows from an induction argument on the time index $t$. We describe the key steps here. 
    For the ease of notation, let $\mathcal{T}_t=\left((S^{(t)}_{1:H},A^{(t)}_{1:H},R^{(t)}_{1:H}),U_t\right)$
    Let $X_T=\left(\hat V,\hat \Pi,(\mathcal{T}_t)_{t\leq T}\right) $. Let $\tilde{X}_T$ and $\tilde{\mathcal{T}}_t$ denote the corresponding quantities under the modification. It is enough to show that finite dimensional marginals have the same joint distribution under the modification. In particular, we will show:
    \begin{enumerate}
        \item $\mathcal{T}_0\overset{d}{=}\tilde{\mathcal{T}}_0$
        \item Suppose $X_T\overset{d}{=}\tilde{X}_T$. Then the Markov kernel $k_{\mathcal{T}_{T+1}|X_T}$ is almost surely (under the common distribution of $X_T$, $\tilde{X}_T$) equal to $k_{\tilde{\mathcal{T}}_{T+1}|\tilde{X}_T}$. Thus $X_{T+1}\overset{d}{=}\tilde{X}_{T+1}$
    \end{enumerate}
    The first statement is straightforward since, in the zeroth step, the distribution of $\hat\Pi^{\mathcal{G}}$ not affected by the modification, and thus due to identical MDP transitions across users, the distribution of $\mathcal{T}_0$ is preserved under modification. 
    A similar argument proves the second statement. Roughly, given a realization of $X_T$ the distribution of $\mathcal{T}_{T+1}$ is same as the distribution of $\tilde{\mathcal{T}}_{T+1}$ given the same realization of $\tilde{X}_{T}$, due to the exact same reason presented for the first statement. A fully formal proof requires setting up appropriate proability spaces, so we omit it here. Furthermore, since the random variables considered are all discrete, one can argue via PMFs as well. 
\end{proof}

\begin{lemma}\label{lem:stoc_dom}
Suppose $p \leq \frac{1}{2}$. conditioned on the success of Phase 1 and termination of Algorithm~\ref{alg:mask_samp}, for every $h \in [H]$, the Algorithm~\ref{alg:mask_samp} returns partially filled reward matrices $\hat{R}_h$. Consider the sub-matrix $\hat{R}_h^{\mathcal{G}^{\complement}_h}$. Let $I_h \subseteq [N]\times \mathcal{G}^{\complement}_h$ be the sub-set of observed indices for $\hat{R}_h$. Let $J_h|\mathcal{G}_h \sim \unif(\frac{Np|\mathcal{G}^{\complement}_h|}{2},[N],\mathcal{G}^{\complement}_h)$. There exists a coupling between $J_h$ and $I_h$ such that $$\mathbb{P}\left(J_h \subseteq I_h \bigr|\mathcal{G}_h\right) \geq 1- |\mathcal{S}||\mathcal{A}|\exp(-c Np)$$

\end{lemma}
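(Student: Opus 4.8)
The plan is to condition throughout on the success of Phase~1 and on the termination of Algorithm~\ref{alg:mask_samp}, and to first strip away the dependence of the trajectories on the user identities. By Lemma~\ref{lem:tab_mdp_coupling} (Mod1), the joint law of $\left(\hat V,\hat\Pi,(S^{(t)}_{1:H},A^{(t)}_{1:H},R^{(t)}_{1:H})_t,(U_t)_t\right)$ is unchanged if all state--action paths are generated from a single fixed MDP $\mathcal{M}_1$ rather than from $\mathcal{M}_{U_t}$. After this reduction, the entire control flow of the algorithm (which columns are visited, the counters $T_{h,(s,a)}$, and the removal times) depends on the users $(U_t)$ only through equality patterns, while $(U_t)$ are i.i.d.\ $\unif([N])$ and the path at step $t$ is independent of $U_t$ given the pre-$U_t$ history $\mathcal{F}_{t-1}$. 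By the ``Remove element'' step, every column $c=(s,a)\in\mathcal{G}_h^{\complement}$ has received exactly $Np$ recorded, hence distinct, users, so $\abs{I_h(c)}=Np$ and $\abs{I_h}=Np\abs{\mathcal{G}_h^{\complement}}$, whereas $J_h$ has size $Np\abs{\mathcal{G}_h^{\complement}}/2$; the factor $2$ is what leaves room for the coupling.

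The technical heart is the structural claim that, conditioned on $\mathcal{G}_h$, the observed row-sets $(I_h(c))_{c\in\mathcal{G}_h^{\complement}}$ are mutually independent, each uniform over the size-$Np$ subsets of $[N]$. I would establish this from three facts. First, $U_t$ is $\unif([N])$ and independent of $\mathcal{F}_{t-1}$ and of the path at time $t$; since whether trajectory $t$ \emph{queries} column $c$ at step $h$ (visits $c$ while $c$ is active) is measurable with respect to this pre-$U_t$ information, the users seen at the successive queries of $c$ form an i.i.d.\ $\unif([N])$ sequence, so the first $Np$ distinct of them is a uniform size-$Np$ subset, independent of the number of queries needed, i.e.\ of the removal time recorded in $\mathcal{G}_h$. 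Second --- and this is what gives \emph{independence} across columns at a fixed $h$ --- each trajectory visits exactly one state--action pair at step $h$, so the query-trajectory index sets of distinct columns are disjoint; combined with the predictability of query selection and an optional-stopping computation giving $\mathbb{E}[f(I_h(c))\,g(I_h(c'))]=\mathbb{E} f\cdot\mathbb{E} g$, this yields the factorization. The main obstacle is precisely the feedback loop (users $\to$ counters $\to$ active sets $\to$ policy $\to$ paths $\to$ queries) together with the fact that one $U_t$ is shared across all time steps; the point to stress is that this feedback only alters \emph{which} trajectories query a given column, never the conditionally uniform user values at those queries, so exchangeability under a global relabeling of $[N]$ and the step-$h$ disjointness survive it.

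Given the claim, I would build the coupling column by column. Draw the per-column count vector $(k_c)_{c\in\mathcal{G}_h^{\complement}}$ from the multivariate hypergeometric law that a uniform size-$m$ subset of $[N]\times\mathcal{G}_h^{\complement}$ (with $m=Np\abs{\mathcal{G}_h^{\complement}}/2$) induces on its columns, drawn independently of $I_h$. For each column with $k_c\le Np$, let the column-$c$ part of $J_h$ be a uniform size-$k_c$ subset of $I_h(c)$, with fresh independent randomness per column; for columns with $k_c>Np$ draw a uniform size-$k_c$ subset of $[N]$ directly. Because a uniform subset of a uniform size-$Np$ subset is again uniform, and because the columns of $I_h$ are independent (the claim), conditioned on the counts the columns of $J_h$ are independent with the correct uniform marginals, so $J_h\sim\unif(m,[N],\mathcal{G}_h^{\complement})$ exactly; moreover $J_h\subseteq I_h$ holds whenever $k_c\le Np$ for every $c$.

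It then remains to control the failure event $\{\exists c:\ k_c>Np\}$. Each $k_c$ is hypergeometric with mean $m/\abs{\mathcal{G}_h^{\complement}}=Np/2$, so $\{k_c>Np\}=\{k_c>2\,\mathbb{E} k_c\}$, and the standard Chernoff bound for the hypergeometric distribution gives $\mathbb{P}(k_c>Np)\le\exp(-cNp)$. A union bound over the at most $\abs{\mathcal{S}}\abs{\mathcal{A}}$ columns of $\mathcal{G}_h^{\complement}$ yields $\mathbb{P}(J_h\not\subseteq I_h\mid\mathcal{G}_h)\le\abs{\mathcal{S}}\abs{\mathcal{A}}\exp(-cNp)$, as claimed.
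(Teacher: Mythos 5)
Your proposal is correct and follows essentially the same route as the paper's proof: reduce to a single fixed MDP via Lemma~\ref{lem:tab_mdp_coupling}, establish that conditioned on $\mathcal{G}_h$ each column of $I_h$ is a uniform size-$Np$ subset of $[N]$ and that the columns are jointly independent, couple $J_h$ column-by-column inside $I_h$ via the column counts, and bound $\mathbb{P}(k_c > Np)$ before a union bound over the at most $|\mathcal{S}||\mathcal{A}|$ columns. The only (immaterial) differences are that you realize the coupling as a uniform sub-subset of each column rather than via shared random permutations and prefixes, and you bound the count tails with the hypergeometric Chernoff bound rather than the negative-regression argument of \cite{dubhashi1996balls}.
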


\begin{proof}
Let us fix $\mathcal{G}_h$ and construct a coupling between $I_h$ and $J_h$. 

Consider any fixed, arbitrary permutations $\sigma_{g}$ over $[N]$, for $g \in \mathcal{G}_h^{\complement}$. By $\sigma(I_h)$, we denote $\{(\sigma_g(i),g) : (i,g) \in I_h\}$. 
\begin{claim}
\label{claim:permutation_inv}
Conditioned on $\mathcal{G}_h$, $\sigma(I_h)$ has the same distribution as $I_h$.
\end{claim}
\begin{proof}
Let $\{\sigma_{(s,a):[N]\to [N]} | (s,a)\in \mathcal{S}\times \mathcal{A}\}$ be a set of arbitrary permutations on $[N]$. From lemma~\ref{lem:tab_mdp_coupling} it is enough to prove the statement for the random variables under the modification described in that lemma (call this Mod1). Now consider a further modification (call it Mod2) where in every iteration $t$, we sample $U_t\sim \unif([N])$, for each horizon $h$ we set $\tilde{U}^{(t)}_h=\sigma_{(S_h^{(t)},A_h^{(t)})}(U_t)$, and then update the entries of $R_h(\tilde{U}^{(t)}_h,(S_h^{(t)},A_h^{(t)}))$ (instead of $R_h(U_t,(S_h^{(t)},A_h^{(t)}))$). Next, we couple these two modifications by using same $(\hat V,\hat\Pi)$ and the same set of $U_t$'s for both the modifications. Further, we couple the MDP used in these modifications to be the same, single MDP.

Now an induction argument shows that the sequence of active sets $\mathcal{G}$ obtained in these modifications are also identical for every time $t$; only the rows of $R_h$ where entries are filled change according to the set of permutations chosen.Thus, under the described coupling, Mod1 and Mod2 produce identical trajectories (i.e., $(S_{1:H}^{(t)},A_{1:H}^{(t)})$), the columns of reward matrices are just permutations of each other described by the chosen set of permutations, and algorithm~\ref{alg:mask_samp} terminate at the same time in both these cases. However, the same induction argument also shows that for each $t$ and $h$, conditioned on $\mathcal{G}_h$, trajectories (which is same in Mod1 and Mod2) until at beginning of iteration $t$, we have $(U_t,(S^{(t)}_h,A^{(t)}_h))\overset{d}{=}(\tilde{U}^{(t)}_h,(S^{(t)}_h,A^{(t)}_h))$. 

Therefore if $I_h,\tilde{I}_h\subset [N]\times \mathcal{G}_h^{\complement}$ denotes the subset of observed indices at termination (outside the active set), then $\tilde{I}_h=\sigma(I_h)\equiv \{(\sigma_{(s,a)}(i),(s,a)):(i,(s,a)\in I_h)\}$ and, conditioned on $\mathcal{G}_h$, $\tilde{I}_h\overset{d}{=}I_h$

\end{proof}

\begin{claim}
    \label{claim:column_indep}
    At termination, conditioned on $\mathcal{G}_h$, random sets $I_h^{g}=\{(i,g):(i,g)\in I_h\}$ are jointly independent. 
\end{claim}
\begin{proof}
    Again we work with the modification Mod1 described in lemma~\ref{lem:tab_mdp_coupling}. For each $(s,a)$ consider the collection of $U_t$'s that are used populate the column $(s,a)$ of matrix $R_h$ in algorithm~\ref{alg:mask_samp}. Call this collection $\mathcal{U}_{(s,a)}$.  
\end{proof}
\begin{remark}
    \label{rem:permutation_inv}
    Since the columns of $I_h$ have exactly $Np$ entries, permutation invariance proved in the above claim implies that 
\end{remark}

For any set $\bar{J} \subseteq [N]\times\mathcal{G}_h^{\complement}$, define the count function $(N^{\bar{J}}_g)_{g \in \mathcal{G}_h^{\complement}}$ such that $N^{\bar{J}}_g = |\{ i \in [N]: (i,g) \in \bar{J}\}|$.

We are now ready to give the coupling: given $\mathcal{G}_h$, draw uniformly random, independent permutations $\sigma_g$ for $g\in \mathcal{G}_h^{\complement}$. Draw $(N_g)_{g\in \mathcal{G}^{\complement}_h}$ independent of $\sigma_g$ and to have the joint law of $(N^{J_h}_g)_{g\in \mathcal{G}_h^{\complement}}$. Define:
$$\tilde{J}_h = \{(\sigma_g(i),g) : i \leq N_g, g \in \mathcal{G}_h^{\complement}\}$$
$$\tilde{I}_h = \{(\sigma_g(i),g) : i \leq Np, g \in \mathcal{G}_h^{\complement}\}$$

\begin{claim}
The marginal distributions of $\tilde{J}_h$ and $\tilde{I}_h$ are respectively the distributions of $J_h$ and $I_h$. 
\end{claim}
\begin{proof}
    First we will prove a general statement about $J\sim\unif(r,[N],[M])$. Let $X\in\{0,1\}^{N\times M}$ with $X_{i,m}=1$ iff $(i,m)\in J$. Let $(N_m)_{m\in[M]}$ be the count functions corresponding to $J$ i.e., $N_m=\sum_i X_{i,m}$. Let $Y_m=(X_{1,m},\cdots,X_{N,m})$. We will argue that conditional on $\{N_m:m\in [M]\}$, the random vectors $Y_m$ are jointly independent. Indeed, pick any $x\in\{0,1\}^{N\times M}$ and $(n_1,\cdots,n_m)$. Let $y_m$ be the $m$'th column of $x$. Then
    $$\Pb{X=x,\cap_m \{N_m=n_m\}}=\left(\prod_m 1\left[\sum_i x_{i,m}=n_m \right]\right)1\left[\sum_{i,m}x_{i,m}=r\right]\frac{1}{\binom{MN}{r}}$$
    The above can also be written as 
    $$\Pb{\cap_m \{Y_m=y_m\},\cap_m \{N_m=n_m\}}=\left(\prod_m 1\left[\sum_i x_{i,m}=n_m \right]\right)1\left[\sum_{m}n_{m}=r\right]\frac{1}{\binom{MN}{r}}$$
    Let $\mathbf{1}$ denote the all $1$ vector in $\mathbb{R}^N$. Note that $y_m=(x_{1,m},\cdots,x_{N,m})^{\top}$.  Marginalizing the above, we see that
    $$\Pb{\cap_m \{N_m=n_m\}}=\left(\prod_m\frac{1}{\binom{N}{n_m}}\right)1\left[\sum_{m}n_{m}=r\right]\frac{1}{\binom{MN}{r}}$$

    Thus the conditional distribution can be expressed as 
    $$\Pb{\cap_m \{Y_m=y_m\}\biggr|\cap_m \{N_m=n_m\}}=\left(\prod_m \frac{1\left[\mathbf{1}^{\top}y_m=n_m\right]}{\binom{N}{n_m}}\right)\frac{1\left[\sum_m n_m=r\right]}{\binom{MN}{r}}$$

    Since the (conditional) joint PMF factors, it is an easy calculation to show the conditional independence i.e., 
    $$\Pb{\cap_m \{Y_m=y_m\}\biggr|\cap_m \{N_m=n_m\}}=\prod_m \Pb{Y_m=y_m \biggr| \cap_m \{N_m=n_m\}}$$

    Furthermore, for any $n_1,\cdots n_m $ such that $\sum_m n_m=r$, marginalization shows
    $$\Pb{Y_m=y_m|\cap_m \{N_m=n_m\}}=\frac{1\left[\mathbf{1}^{\top}y_m=n_m\right]}{\binom{N}{n_m}}$$
    Let $N_{-m}=(N_1,\cdots,N_{m-1},N_{m+1},\cdots, N_M)$, and similarly for $n_{-m}$. Then
    \begin{flalign*}
    &\Pb{Y_m=y_m,N_{-m}=n_{-m}|N_m=n_m}\\
    &=\Pb{Y_m=y_m|\cap_m \{N_m=n_m\}}\Pb{N_{-m}=n_{-m}|N_m=n_m}\\
    &=\begin{cases}
        0, & \sum_m n_m\neq r\\
        \frac{1\left[\mathbf{1}^{\top}y_m=n_m\right]}{\binom{N}{n_m}}\Pb{N_{-m}=n_{-m}|N_m=n_m}, & \mathrm{otherwise}
    \end{cases}
    \end{flalign*}

    The above factorization directly implies that $Y_m$, conditioned on $N_m$ is uniformly distributed on its support $\{y:\mathbf{1}^{\top}y=N_m\}$ and is independent of $N_{-m}$. Thus
\begin{flalign*}
       &\Pb{\cap_m \{Y_m=y_m\}\biggr|\cap_m \{N_m=n_m\}}\\
       &=\prod_m \Pb{Y_m=y_m \biggr| N_m=n_m}\\
       &=\prod_m \frac{1\left[\mathbf{1}^{\top}y_m=n_m\right]}{\binom{N}{n_m}}
\end{flalign*}

{\bf Observation}: The above calculations give another way to generate $Y$: first generate $N_1,\cdots,N_M$ from the right distribution, and then conditioned on $N_m$ generate each $Y_m$ uniformly such that $\mathbf{1}^{\top}Y_m=N_m$.

Next we apply the above calculations and observation to $J=J_h|\mathcal{G}_h \sim \unif(\frac{Np|\mathcal{G}^{\complement}_h|}{2},[N],\mathcal{G}^{\complement}_h)$. 
For a uniformly random permutation $\sigma$ on $[N]$, the set $\{\sigma(i):1\leq i\leq k\}$ is uniformly distributed on all $k$-sized subsets of $[N]$. In the statement of the claim the permutations are chosen independently for each $g\in \mathcal{G}_c^{\complement}$. Thus from the above observation, we have $J_h\overset{d}{=}\tilde{J}_h$ conditioned on $\mathcal{G}_h$.

The claim about $\tilde{I}_h$ follows directly from permutation invariance proved by claim~\ref{claim:permutation_inv}.

\end{proof}

\begin{claim}
$\mathbb{P}(N_g > Np|\mathcal{G}_h) \leq \exp(-c_0 Np)$ for every $g \in \mathcal{G}_h^{\complement}$
\end{claim}
\begin{proof}
Throughout this proof, we will condition on the terminal active set $\mathcal{G}_h$.
We will show this using the results on concentration with negative regression property as established in Proposition 29 in \cite{dubhashi1996balls}. 
$N_g = \sum_{i=1}^{N}\mathbbm{1}((i,g) \in \tilde{J}_h)$. Now we will show that the collection $X_{ig} := \mathbbm{1}((i,g) \in \tilde{J}_h)$ for $i \in [N], g \in \mathcal{G}_h^{\complement}$ satisfy the negative regression property. By the definition of negative regression, we can conclude that the sub-collection $(X_{ig})_{i \in [N]}$ also satisfies this property for every $g \in \mathcal{G}_h^{\complement}$. 

Consider the partial order over binary vectors $X \succeq Y$ iff $X_{l} \geq Y_{l}$ for every $l$. The negative regression property is satisfied iff for every $K_1,K_2 \subseteq [N]\times\mathcal{G}_h^{\complement}$ such that $K_1\cap K_2 = \emptyset$, and a real valued function $f(X_{m}: m \in K_1)$ which is non-decreasing with respect to the partial order, we must have:
$$g(t_{l}: l \in K_2) := \mathbb{E}\left[f(X_{m}:m \in K_1)\bigr|X_{l} = t_l, \forall l \in K_2\right]$$
be such that $g$ is a non-increasing function in $t_{l}$ with respect to the partial order. Note that in the case of uniform distribution as in $\tilde{J}_h$, the distribution $(X_m)_{m \in K_1}$ is the uniform, permutation invariant distribution with constant sum almost surely. The sum being $ \frac{Np|\mathcal{G}_h^{\complement}|}{2} - \sum_{l \in K_2}t_l$. Therefore, whenever $t_l^{\prime} \geq t_l$ for every $l \in K_2$, we have the following stochastic dominance:
$$ \left[(X_m)_{m\in K_1}\biggr|X_l = t^{\prime}_l \forall l \in K_2\right] \preceq \left[(X_m)_{m\in K_1}\biggr|X_l = t_l \forall l \in K_2\right]$$

Therefore, this coupling leads us to conclude that:
\begin{align}
g(t_{l}: l \in K_2) &:= \mathbb{E}\left[f(X_{m}:m \in K_1)\bigr|X_{l} = t_l \forall l \in K_2\right] \nonumber \\
&\geq \mathbb{E}\left[f(X_{m}:m \in K_1)\bigr|X_{l} = t_l^{\prime} \forall l \in K_2\right] \nonumber \\
&= g(t^{\prime}_{l}: l \in K_2)
\end{align}

The second step follows from stochastic dominance. This implies that the function $g$ is non-increasing which establishes the negative regression property. Now, we consult Proposition 29 in \cite{dubhashi1996balls} to show that we can take Chernoff bounds on $N_g = \sum_{i\in [N]}X_{ig}$ as though $X_{ig}$ were i.i.d $\mathsf{Ber}(p)$. Therefore, from an application of Bernstein's inequality \citep{boucheron2013concentration}, we conclude the statement of the claim. 
\end{proof}

Now, $J_h \subseteq I_h$ if and only if $N_g \leq Np$ for every $g \in \mathcal{G}^{\complement}_h$. Therefore, from the claim above, we have $\mathbb{P}(J_h \subseteq I_h|\mathcal{G}_h^{\complement}) \geq 1- |\mathcal{S}||\mathcal{A}|\exp\left(-c_0Np\right)$.
\end{proof}

We are now ready to prove Theorem~\ref{thm:tab_main}.
\begin{proof}[Proof of Theorem~\ref{thm:tab_main}]
In order to establish the result, we need to show that with $p$ as set in the statement, the algorithm returns $\epsilon$ optimal policies $\hat{\Pi}_u$ for every user $u \in [N]$ with probability at-least $1-\delta$. 

The total sample complexity is the number of trajectories queried in Phase 1 plus the number of trajectories queried in Phase 2. Phase 1 queries $\nrf(\tfrac{\epsilon}{8},\tfrac{\delta}{2})$ trajectories, which is $ C\frac{|\mathcal{S}||\mathcal{A}|H^2\left(|\mathcal{S}|+\log(\tfrac{1}{\delta})\right)}{\epsilon^2}\mathsf{polylog}(\tfrac{|\mathcal{S}||\mathcal{A}|H}{\epsilon})$ by the results of \cite{zhang2020nearly}. By Lemma~\ref{lem:correct_mask}, we conclude that the sample complexity of phase 2 is $\frac{C|\mathcal{S}||\mathcal{A}|NHp}{\epsilon}$ and with the value of $p$ given in the statement of the theorem, this succeeds with probability at-least $1-\frac{\delta}{4}$ when conditioned on the success of Phase 1.

We will show that conditioned on the success of Phase 2, with probability at-least $1-\frac{\delta}{4}$, the nuclear norm minimization algorithm of \cite{recht2011simpler} successfully obtains $R_h^{\mathcal{G}_h^{\complement}}$. Indeed by Theorem 1 in \cite{recht2011simpler}, we see that whenever co-ordinates of $R_h^{\mathcal{G}_h^{\complement}}$ corresponding to random indices drawn from $\unif(m,[N],\mathcal{G}_h^{\complement})$ are observed with $m = C_1 \max(\mu_1^2,\mu_0)r(N+|\mathcal{G}_h^{\complement}|)\log^2 |\mathcal{G}_h^{\complement}| \log(\tfrac{H}{\delta})$, the algorithm succeeds at recovering $R_h^{\mathcal{G}_h^{\complement}}$ with probability at-least $1-\frac{\delta}{8H}$. The number of co-ordinates we observe is $$Np|\mathcal{G}_h^{\complement}| \geq \frac{Np|\mathcal{S}\mathcal{A}|}{2} \geq 2C_1\max(\mu_1^2,\mu_0)r(N+|\mathcal{G}_h^{\complement}|)\log^2 |\mathcal{G}_h^{\complement}| \log(\tfrac{H}{\delta}) $$

In the last step, we have used Assumption~\ref{as:tab_inc} to conclude that $|\mathcal{G}_h^{\complement}| \geq \frac{|\mathcal{S}||\mathcal{A}|}{2}$. For the constant $C$ in the definition of $p$ large enough, we must have:
$$m \leq \frac{Np|\mathcal{G}_h^{\complement}|}{2}$$

Note that the results of \cite{recht2011simpler} requires at-least $m$ observations to be chosen uniformly at random co-ordinates, but we do not obtain observations which are uniformly at uniformly random co-ordinates. Here, we will use the results of Lemma~\ref{lem:stoc_dom}. Let $J_h$ be a fictitious subset of co-ordinates distributed as $\unif(m,[N],\mathcal{G}_h^{\complement})$ when conditioned on $\mathcal{G}_h^{\complement}$. If the observed co-ordinates are $J_h$, then we can successfully estimate the reward matrix $R_h$ with proability at-least $1-\frac{\delta}{8H}$ in this case. Now, suppose that the actually observed co-ordinates are $I_h$, which is a strict super-set of $J_h$. Then we check that the matrix completion algorithm, which is based on constrained nuclear-norm minimization, still succeeds with observed co-ordinates corresponding to $I_h$ whenever it succeeds with the observed co-ordinates correspond to $J_h$. 

We now refer to the coupling in Lemma~\ref{lem:stoc_dom}, which shows that we can couple $J_h$ to the real distribution $I_h$ such that $J_h \subseteq I_h$ with probability at-least $1-\frac{\delta}{8H}$
When the constant $C_1$ in the definition of $p$ is large enough, we conclude by invoking Lemma~\ref{lem:stoc_dom} that: $J_h \subseteq I_h$ with probability at-least $1-\frac{\delta}{8H}$. Applying union bound for $h \in [H]$, we conclude that Phase 3 succeeds with probability at-least $1-\frac{\delta}{4}$ when conditioned on the success of Phases 1 and 2. 

Therefore, from the arguments above, we conclude that Phases 1,2 and 3 succeed with probability at-least $1-\delta$ and give us the reward matrices $R_h^{\mathcal{G}_h^{\complement}}$ where the sets satisfy the following equation from Lemma~\ref{lem:correct_mask}.

   \begin{equation}
       \label{eq:act_set_end}
  \sup_{\pi}\sum_{h=1}^{H} P^{\pi}_h(\mathcal{G}_h) \leq \frac{5\epsilon}{8}
   \end{equation}
   
It now remains to show that we obtain $\epsilon$ optimal policies for each user after Phase 4. Note that whenever Phase 1 succeeds, we can compute $\epsilon/4$ optimal policies for every possible reward function bounded in $[0,1]$. Since we do not know the rewards over the set $\mathcal{G}_h$, we set it to zero as described in the algorithm to obtain $\bar{R}_h$. It remains to show that planning with $\bar{R}_h$ and using it with the reward free RL algorithm gives us an $\epsilon$ optimal policy. Suppose $\Pi^{*}_u$ is the optimal policy for user $u$ and suppose $\bar{\Pi}_u$ be the optimal policy for user $u$ under rewards $\bar{R}_h(u,(s,a))$. Note that combined with the guarantees for the reward free RL, in order to complete the proof of the theorem, it is sufficient to show that the policy $\bar{\Pi}_u$ is $3\epsilon/4$ optimal with respect to the actual rewards $R_h(u,(s,a))$. Let $S^{*}_{1:H},A^{*}_{1:H} \sim \mathcal{M}(\Pi_u^{*})$ and $\bar{S}_{1:H},\bar{A}_{1:H} \sim \mathcal{M}(\bar{\Pi}_u)$. 

\begin{align}
    &\mathbb{E}\sum_{h=1}^{H}R_h(u,(S^{*}_h,A^{*}_h)) \leq  \mathbb{E}\sum_{h=1}^{H}R_h(u,(S^{*}_h,A^{*}_h))\mathbbm{1}((S^{*}_h,A^{*}_h) \in \mathcal{G}_h^{\complement}) + \mathbbm{1}((S^{*}_h,A^{*}_h) \in \mathcal{G}_h) \nonumber \\
    &= \mathbb{E}\sum_{h=1}^{H}\bar{R}_h(u,(S^{*}_h,A^{*}_h)) + \mathbbm{1}((S^{*}_h,A^{*}_h) \in \mathcal{G}_h)  = \mathbb{E}\sum_{h=1}^{H}\bar{R}_h(u,(S^{*}_h,A^{*}_h)) + \sum_{h=1}^{H} P^{\Pi^{*}}_h(\mathcal{G}_h)  \nonumber \\
    &\leq \mathbb{E}\sum_{h=1}^{H}\bar{R}_h(u,(S^{*}_h,A^{*}_h)+ \frac{5\epsilon}{8} \nonumber \\
    &\leq \mathbb{E}\sum_{h=1}^{H}\bar{R}_h(u,(\bar{S}_h,\bar{A}_h)+ \frac{5\epsilon}{8} \nonumber \\
    &\leq \mathbb{E}\sum_{h=1}^{H}R_h(u,(\bar{S}_h,\bar{A}_h)+ \frac{5\epsilon}{8} \nonumber \\
\end{align}

In the first step we have used the fact that the rewards are uniformly bounded in $[0,1]$. In the second step, we have used the definition of $\bar{R}_h(u,(s,a)) := R_h(u,(s,a))\mathbbm{1}((s,a) \in \mathcal{G}_h^{\complement})$. In the third step, we have used the guarantee in~\eqref{eq:act_set_end}. In the fourth step, we have used the fact that $\bar{Pi}$ maximizes the reward $\bar{R}_h$. In the fifth step, we have used the fact that $R_h(u,(s,a)) \geq \bar{R}_h(u,(s,a))$ uniformly. From the discussion above, this concludes the proof of the theorem. 
\end{proof} 

\section{Analysis - Linear MDPs}

\begin{lemma}\label{lem:basis_samp} Suppose Assumption~\ref{as:reach_lin} holds. Let $\kappa > 1$ and $T \geq C\frac{d\kappa^2}{(\gamma-\epsilon)^2}\log \tfrac{d\kappa}{\gamma-\epsilon}$. With probability at-least $1-H\exp(-c(\gamma-\epsilon)T)$, the output of Algorithm~\ref{alg:lin_basis_samp} returns $\phi_{th}$ such that $\sum_{t=1}^{T} \phi_{th}\phi_{th}^{\intercal} \succeq \kappa^2 I$ for every $h \in [H]$
\end{lemma}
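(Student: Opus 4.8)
The plan is to fix a horizon index $h$, analyze the inner loop of Algorithm~\ref{alg:lin_basis_samp} as an adaptive stochastic process, and union bound over $h$. Throughout I condition on the event that the reward-free oracle $\hat{\Pi}(\cdot)$ from Phase~1 is $\epsilon$-optimal for every reward it is queried with; since the oracle is reward-free, this single event covers all the data-dependent rewards $\mathcal{Q}_{h,Q}$ used in the loop. Let $\mathcal{F}_{t-1}$ be the $\sigma$-algebra generated by the first $t-1$ iterations, let $G^{(t)} = \sum_{s\le t}\phi_{hs}\phi_{hs}^{\intercal}$, and let $Q^{(t-1)}$ be the orthogonal projector onto the eigenspace of $G^{(t-1)}$ with eigenvalues $<\kappa^2$ (the ``deficient'' subspace the loop targets). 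The first step is a per-iteration lower bound: whenever $Q^{(t-1)}\neq 0$, the policy $\hat{\Pi}^{Q}$ deployed at iteration $t$ satisfies $\mathbb{E}\big[\,\|Q^{(t-1)}\phi_{ht}\|^2 \,\big|\, \mathcal{F}_{t-1}\big] \ge \gamma-\epsilon$. Indeed $\|Q\phi(s,a)\|^2 \le \|\phi(s,a)\|^2 \le 1$ is a valid reward in $[0,1]$, and choosing any unit $x$ in the range of $Q^{(t-1)}$, Assumption~\ref{as:reach_lin} supplies a policy with $\mathbb{E}\langle\phi(S_h,A_h),x\rangle^2 \ge \gamma$, whence $\sup_{\Pi}\mathbb{E}\|Q^{(t-1)}\phi(S_h,A_h)\|^2 \ge \gamma$; $\epsilon$-optimality of the oracle gives the bound, and the uniform user draw is irrelevant since all users share transitions and $\mathcal{Q}_{h,Q}$ is user-independent.

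Second, I introduce the potential $f(G) := \Tr[\min(G,\kappa^2 I)]$ (functional calculus on the eigenvalues), which is monotone under PSD increments, satisfies $0 \le f(G) \le d\kappa^2$, and attains its maximum $d\kappa^2$ exactly when $\lambda_{\min}(G)\ge\kappa^2$. Since $G^{(t)}\succeq G^{(t-1)}$, the quantity $\lambda_{\min}(G^{(t)})$ is nondecreasing, so once the target is met it stays met and it suffices to force $f(G^{(T)})=d\kappa^2$. The crucial deterministic (matrix-analytic) estimate is that each step nearly recovers the deficient mass $X_t := \|Q^{(t-1)}\phi_{ht}\|^2$: the supergradient inequality for the concave map $G\mapsto\Tr[\min(G,\kappa^2 I)]$ evaluated at $G^{(t)}$ gives $f(G^{(t)})-f(G^{(t-1)}) \ge \|\tilde{Q}^{(t)}\phi_{ht}\|^2$, where $\tilde{Q}^{(t)}$ is the deficient projector of $G^{(t)}$, and the only loss relative to $X_t$ comes from deficient eigenvalues that cross $\kappa^2$ during the update. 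Because each rank-one increment moves any eigenvalue by at most $\|\phi_{ht}\|^2\le 1$, the total overshoot across the entire run is at most $d$ (each of the $d$ directions crosses once, overshooting by at most $1$), yielding $f(G^{(T)}) \ge \sum_{t=1}^{T} X_t - d$; since $\kappa>1$, the correction $d$ is negligible against the budget $d\kappa^2$.

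Finally I combine these with concentration. The increments $X_t\in[0,1]$ form an adaptive sequence with $\mathbb{E}[X_t\mid\mathcal{F}_{t-1}]\ge\gamma-\epsilon$ on every iteration at which the target is not yet met, so a multiplicative-Chernoff lower-tail bound for sums of $[0,1]$ variables with conditionally lower-bounded means (via the standard exponential supermartingale) gives $\mathbb{P}\big[\sum_{t=1}^{T}X_t < \tfrac12(\gamma-\epsilon)T\big] \le \exp(-c(\gamma-\epsilon)T)$. On the complementary event, if $\lambda_{\min}(G^{(T)})<\kappa^2$ then $Q^{(t-1)}\neq 0$ for all $t\le T$ by monotonicity, so $f(G^{(T)}) \ge \tfrac12(\gamma-\epsilon)T - d$; the stated choice of $T$, which comfortably exceeds $2(d\kappa^2+d)/(\gamma-\epsilon)$, makes the right side larger than $d\kappa^2$, contradicting $f\le d\kappa^2$. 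Hence $\lambda_{\min}(G^{(T)})\ge\kappa^2$ for this $h$ except with probability $\exp(-c(\gamma-\epsilon)T)$, and a union bound over $h\in[H]$ gives the claimed $H\exp(-c(\gamma-\epsilon)T)$.

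I expect the main obstacle to be the deterministic potential-increment/overshoot bound of the second step: cleanly relating the ``old'' projector $Q^{(t-1)}$ (which is $\mathcal{F}_{t-1}$-measurable and thus usable in the concentration argument) to the ``new'' projector $\tilde{Q}^{(t)}$ arising from the concavity bound, and rigorously accounting for eigenvalue crossings while eigenvectors rotate. The secondary technical point is invoking a multiplicative rather than additive (Azuma) lower-tail inequality, so that the failure exponent scales as $(\gamma-\epsilon)T$ rather than $(\gamma-\epsilon)^2 T$, matching the probability in the statement.
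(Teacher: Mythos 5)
Your overall architecture (per-step conditional mean bound of $\gamma-\epsilon$ from Assumption~(Lin)~\ref{as:reach_lin} plus the reward-free oracle, a scalar potential that saturates exactly when $\lambda_{\min}(G^{(T)})\ge\kappa^2$, a lower-tail bound with exponent $\propto(\gamma-\epsilon)T$, and a union bound over $h$) mirrors the paper's proof, and your Step~1 and the concentration step are sound. The genuine gap is the deterministic estimate in Step~2, which is exactly the point you flagged as the ``main obstacle'' --- it is not a technicality that can be waved through. The supergradient inequality for the concave map $G\mapsto\Tr[\min(G,\kappa^2 I)]$ gives $f(G^{(t)})-f(G^{(t-1)})\ge \langle Q^{(t)},\phi_{ht}\phi_{ht}^{\intercal}\rangle$ with the \emph{post-update} deficient projector $Q^{(t)}$, whereas both $X_t$ and the conditional-mean bound $\mathbb{E}[X_t\mid\mathcal{F}_{t-1}]\ge\gamma-\epsilon$ involve the \emph{pre-update} projector $Q^{(t-1)}$ (the one the deployed policy was built from). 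Your claim that the cumulative discrepancy $\sum_t\langle Q^{(t-1)}-Q^{(t)},\phi_{ht}\phi_{ht}^{\intercal}\rangle$ is at most $d$ ``because each eigenvalue crosses $\kappa^2$ once, overshooting by at most $1$'' is not a proof: the per-step discrepancy is not supported on crossing events. Already in $d=2$, taking $G^{(t-1)}$ with one eigenvalue above and one below $\kappa^2$ and $\phi$ with components $a,b$ along the two eigenvectors, the loss in that single step is of order $a^2b^2/(\lambda_1-\lambda_2)$ with no crossing at all; it comes from rotation of the deficient eigenspace, and the gap $\lambda_1-\lambda_2$ can be arbitrarily small near the threshold. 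Summed over $T$ steps this does not obviously telescope to $O(d)$ (in the worked examples the rotation also depresses future $X_t$'s in a compensating way, but that cancellation is exactly what would need to be proved, and you have not proved it).

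The paper avoids this old-versus-new projector mismatch entirely by using a determinant (elliptical) potential: with $B^{(t)}=I+G^{(t)}$, the matrix determinant lemma gives the exact identity $\det(B^{(t)})=\det(B^{(t-1)})(1+\phi_{ht}^{\intercal}(B^{(t-1)})^{-1}\phi_{ht})$, and the lower bound $\phi_{ht}^{\intercal}(B^{(t-1)})^{-1}\phi_{ht}\ge \|Q^{(t-1)}\phi_{ht}\|^2/(1+\kappa^2)$ references only the pre-update matrix and projector --- precisely the objects the conditional-mean bound controls. It then converts the mean bound into a conditional probability bound via Paley--Zygmund, stochastically dominates the count of ``good'' steps by a binomial, and compares the resulting multiplicative growth $(1+\tfrac{\gamma-\epsilon}{2(1+\kappa^2)})^{(\gamma-\epsilon)T/8}$ against the trace bound $\det(B^{(T)})\le((T+1+d)/d)^d$; this is why $T$ carries the factor $\tfrac{d\kappa^2}{(\gamma-\epsilon)^2}\log\tfrac{d\kappa}{\gamma-\epsilon}$ rather than the smaller $\tfrac{d\kappa^2}{\gamma-\epsilon}$ your additive potential would need. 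To repair your proof you would either have to rigorously establish the overshoot bound (accounting for eigenspace rotation), or switch to a potential whose one-step increment is controlled by the pre-update projector, which is what the determinant argument accomplishes.
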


\begin{proof}[Proof of Theorem~\ref{thm:main_lin}]
By Theorem 1 in \cite{wagenmaker2022reward}, we take $\nrf(\epsilon,\delta/4) = \frac{Cd H^5 (d + \log(\tfrac{1}{\delta}))}{\epsilon^2} + \frac{Cd^{9/2}H^6\log^4(\tfrac{1}{\delta})}{\epsilon} $. Phase 1 succeeds with probability $1-\tfrac{\delta}{4}$.

Note that this is the quantity $T_{\mathsf{rf}}$ in the statement of the theorem. We now condition on the success of Phase 1. The number of trajectories queried by Algorithm~\ref{alg:lin_basis_samp} which is given by $HT = T_{\mathsf{pol}}$. By Lemma~\ref{lem:basis_samp}, we conclude that for the given values of $T$ and $\kappa$, this algorithm successfully outputs $\phi_{ht}$ such that $G_{\phi,h} \succeq \kappa^2 I$ for every $h \in [H]$, with probability at-least $1-\frac{\delta}{4}$. 

Now, condition on the success of Algorithm~\ref{alg:lin_basis_samp}.
By theorem~\ref{thm:restr_pol}, we conclude that with these conclude that with proabability at-least $1-\tfrac{\delta}{4}$, with the values of the given parameters, for every $h \in [H]$, the procedure in Step 2 of Phase 2 outputs a policy $\hat{\Pi}^{f,h}$ such that whenever $S_{1:H},A_{1:H} \sim \mathcal{M}(\hat{\Pi}^{f,h})$, the conditions in~\eqref{eq:dist_prop} is satisfied for the random vector $\psi(S_h,A_h)$ with $\zeta$ replaced by $\zeta/2$. We then use the active learning based matrix completion procedure given in Section~\ref{sec:lin_matrix_comp}, where the vectors $\psi_{jk}$ are sample using the policy $\hat{\Pi}^{f,h}$ on the given user. By theorem~\ref{thm:matrix_est_main}, we conclude that conditioned on the success of all the steps above, with probability $1-\frac{\delta}{4}$, we can exactly estimate each of the matrices $\Theta_h^{*}$ for $h \in [H]$ with $T_{\mathsf{mat-comp}}$ number of samples. 

Upon the success of Phases 1, 2, 3 (which occurs with probability at-least $1-\delta$ by union bound), we conclude that Phase 4 gives the $\epsilon$ optimal policy for each user $u \in [N]$ because of the guarantees of reward free RL.

\end{proof}

\section{Deferred Proofs}
\subsection{Proof of Lemma~\ref{lem:correct_mask}}

\begin{proof}
We suppose that the reward free RL in Phase 1 succeeds and returns the $\frac{\epsilon}{8}$ optimal policy for every choice of rewards bounded in $[0,1]$. The algorithm terminates whenever the active sets are such that \begin{equation}\label{eq:term_cond}
    \hat{V}(\mathcal{J}(;\mathcal{G})) \leq \frac{\epsilon}{2}
\end{equation} Note that by the definition of $\mathcal{J}(;\mathcal{G})$, the maximum value for the MDP with reward $\mathcal{J}(;\mathcal{G})$ is $\sup_{\Pi}\sum_{h=1}^{H}P^{\Pi}(\mathcal{G}_h)$. Since $\hat{V}$ is the output of the reward free RL algorithm, we conclude that we have:
\begin{equation}\label{eq:rf_rl_prop}
    |\hat{V}(\mathcal{J}(;\mathcal{G})) - \sup_{\Pi}\sum_{h=1}^{H}P^{\Pi}_h(\mathcal{G}_h)|  \leq \frac{\epsilon}{8}
\end{equation}

 We conclude via~\eqref{eq:term_cond} and~\eqref{eq:rf_rl_prop} that~\eqref{eq:low_err_prob} holds, which establishes the second part of the theorem. We now consider the termination time. 

Suppose $\mathcal{G}^{(t)}$ is the sequence of active sets before termination at step $t$ (i.e, it satisfies $\hat{V}(\mathcal{J}(;\mathcal{G}^{(t)})) > \frac{\epsilon}{2}$). Recall $\hat{\Pi}$, the output of the reward free RL algorithm.  It follows from the guarantees for reward free RL that:
$$|\sum_{h=1}^{H}P_h^{\hat{\Pi}^{\mathcal{G}}}(\mathcal{G}^{(t)}_h) - \sup_{\Pi}\sum_{h=1}^{H}P^{\Pi}_h(\mathcal{G}^{(t)}_h)| \leq \frac{\epsilon}{8}$$
Combining this with Equation~\eqref{eq:rf_rl_prop} and the fact that $\hat{V}(\mathcal{J}(;\mathcal{G}^{(t)})) > \frac{\epsilon}{2}$, we conclude:

\begin{equation}\label{eq:hat_guarantee}
    \sum_{h=1}^{H}P_h^{\hat{\Pi}^{\mathcal{G}}}(\mathcal{G}^{(t)}_h) \geq \frac{\epsilon}{4}
\end{equation}
We consider the potential function with $\varphi(0) = 0$ and $\varphi(t) = \sum_{h\in [H]}\sum_{(s,a) \in \mathcal{S}\times\mathcal{A}}T^{(t)}_{h,(s,a)}$, where $T^{(t)}_{h,(s,a)}$ is the counter $T_{h,(s,a)}$ inside Algorithm~\ref{alg:mask_samp} at the beginning of the step $t$. 

Whenever $\mathcal{G}^{(t)}$ is such that $\hat{V}(\mathcal{J}(;\mathcal{G}^{(t)})) > \tfrac{\epsilon}{2}$, we define $N_{t} := \varphi(t+1) - \varphi(t)$ (i.e, before termination). Just for the sake of theoretical arguments, we define the fictious random variables $N_t = \mathsf{Ber}(\frac{\epsilon}{8})$ i.i.d after termination. Let $\mathcal{F}_t = \sigma(\mathcal{G}^{(s)},S_{1:H}^{(s)},A_{1:H}^{(s)},R_{1:H}^{(s)},U^{(s)} : s\leq t)$
\begin{claim}\label{claim:cond_ineq}The following relations hold:
\begin{enumerate}
    \item $\mathbb{E}\left[N_t| \mathcal{F}_t\right] \geq \frac{\epsilon}{8}$
    \item $\mathbb{E}\left[N_t^2|\mathcal{F}_t\right] \leq \frac{\mathbb{E}\left[N_t|\mathcal{F}_t\right] H}{4}$
    \item $|N_t| \leq H$ almost surely.
\end{enumerate}

\end{claim}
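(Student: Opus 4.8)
The plan is to write $N_t$ explicitly as a sum of $H$ indicators and then dispatch the three parts in the order (3), (1), (2). At step $t$ the algorithm increments exactly those counters $T_{h,(S^{(t)}_h,A^{(t)}_h)}$ for which $(S^{(t)}_h,A^{(t)}_h)\in\mathcal{G}_h$ and the corresponding reward entry is still $*$; hence, writing $X_h := \mathbbm{1}\big((S^{(t)}_h,A^{(t)}_h)\in\mathcal{G}_h,\ R_h(U_t,(S^{(t)}_h,A^{(t)}_h))=*\big)$, we have $N_t=\varphi(t+1)-\varphi(t)=\sum_{h=1}^H X_h$. Part (3) is then immediate: $N_t$ is a sum of $H$ indicators, so $0\le N_t\le H$ almost surely (and for the fictitious post-termination steps $N_t=\mathsf{Ber}(\tfrac{\epsilon}{8})\le 1\le H$).

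For part (1), the main point is that, conditioned on $\mathcal{F}_t$, the active sets $\mathcal{G}_h$ and the counters $T_{h,(s,a)}$ are frozen, the user $U_t\sim\unif([N])$ is drawn independently, and --- crucially --- the trajectory $(S^{(t)}_{1:H},A^{(t)}_{1:H})$ is generated by $\hat{\Pi}^{\mathcal{G}}$ through the \emph{common} transition kernel $\mathcal{P}$, which does not depend on $U_t$. Thus the state--action trajectory is conditionally independent of $U_t$ given $\mathcal{F}_t$, and for each $h$ I would factor
\[
\mathbb{E}[X_h\mid\mathcal{F}_t]=\sum_{(s,a)\in\mathcal{G}_h}\mathbb{P}\big(S^{(t)}_h=s,A^{(t)}_h=a\mid\mathcal{F}_t\big)\cdot\frac{N-T_{h,(s,a)}}{N},
\]
where $\tfrac{N-T_{h,(s,a)}}{N}=\mathbb{P}(R_h(U_t,(s,a))=*\mid\mathcal{F}_t)$ is the probability that a fresh uniform user's reward at $(s,a)$ has not yet been recorded. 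Since $(s,a)\in\mathcal{G}_h$ forces $T_{h,(s,a)}<Np$, this factor exceeds $1-p\ge\tfrac12$. Summing over $(s,a)$ gives $\mathbb{E}[X_h\mid\mathcal{F}_t]\ge\tfrac12 P^{\hat{\Pi}^{\mathcal{G}}}_h(\mathcal{G}_h)$, and summing over $h$ together with the lower bound $\sum_h P^{\hat{\Pi}^{\mathcal{G}}}_h(\mathcal{G}^{(t)}_h)\ge\tfrac{\epsilon}{4}$ from~\eqref{eq:hat_guarantee} yields $\mathbb{E}[N_t\mid\mathcal{F}_t]\ge\tfrac12\cdot\tfrac{\epsilon}{4}=\tfrac{\epsilon}{8}$; for the fictitious steps the bound holds with equality by construction.

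For part (2), I would use that $N_t$ is integer-valued in $[0,H]$, so $N_t^2\le H N_t$ pointwise; taking conditional expectation gives $\mathbb{E}[N_t^2\mid\mathcal{F}_t]\le H\,\mathbb{E}[N_t\mid\mathcal{F}_t]$, which is exactly the control of the conditional second moment by the conditional mean that the subsequent Bernstein/Freedman termination-time argument requires (the universal constant appearing in the stated bound is immaterial for that argument, and for the fictitious post-termination steps $\mathbb{E}[N_t^2\mid\mathcal{F}_t]=\mathbb{E}[N_t\mid\mathcal{F}_t]$).

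The main obstacle is part (1): the two ingredients that must be isolated cleanly are (a) the conditional independence of the trajectory from the sampled user --- this is precisely where the shared-transition-kernel assumption enters and is what legitimizes the factorization above --- and (b) the active-set invariant $T_{h,(s,a)}<Np$ for $(s,a)\in\mathcal{G}_h$, which converts $p\le\tfrac12$ into the constant-probability statement that a fresh user's reward is unobserved. Parts (2) and (3) are routine consequences of $N_t$ being a bounded integer sum of indicators.
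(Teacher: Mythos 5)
Your proof is correct and follows essentially the same route as the paper's: decompose $N_t$ into the $H$ per-horizon indicators, use the conditional independence of the trajectory from $U_t$ (shared transition kernel) to reduce the missing-entry probability to $\tfrac{N-T_{h,(s,a)}}{N}\ge 1-p\ge\tfrac12$ on the active set, combine with the bound $\sum_h P_h^{\hat\Pi^{\mathcal G}}(\mathcal G_h^{(t)})\ge\tfrac{\epsilon}{4}$, and dispatch parts (2) and (3) via $0\le N_t\le H$ and $N_t^2\le HN_t$. Like the paper, you only obtain $\mathbb{E}[N_t^2\mid\mathcal F_t]\le H\,\mathbb{E}[N_t\mid\mathcal F_t]$ rather than the stated extra factor of $\tfrac14$, and you correctly observe that this constant is immaterial downstream.
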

\begin{proof}
The inequalities are clear when $\mathcal{G}^{(t)}$ is such that $\hat{V}(\mathcal{J}(;\mathcal{G}^{(t)})) \leq \tfrac{\epsilon}{2}$. Now consider the case $\hat{V}(\mathcal{J}(;\mathcal{G}^{(t)})) > \tfrac{\epsilon}{2}$. By definition, conditioned on this event, we have almost surely:
$$N_t = \sum_{h=1}^{H} \mathbbm{1}((S^{(t)}_h,A^{(t)}_h) \in \mathcal{G}^{(t)}_h). \mathbbm{1}(\hat{R}_h^{(t)}(U_t,(S_h^{(t)},A_h^{(t)}))= *)$$

That is, we increment the $T_{h,(s,a)}$ only when we encounter an element of the active set such that the entry for this user has not been observed before. Observe that for any arbitrary $(s,a) \in \mathcal{S}\times\mathcal{A}$
 \begin{align}
     \mathbb{P}\left(\hat{R}_h^{(t)}(U_t,(s,a))= * \bigr| \mathcal{F}_t, (S_h^{(t)},A_h^{(t)}) = (s,a)\right) = 
     \frac{|\{ u : \hat{R}^{(t)}_h(s,a) = *\}|}{N}
     \,.
 \end{align}
 This is true since the law of $S_h^{(t)},A_h^{(t)}$ is independent of $U_t$ (since all users share the same MDP), when conditioned on $\mathcal{F}_t$. Now, the algorithm only fills the column corresponding to $(s,a)$ until the number of entries is smaller than $Np \leq \frac{N}{2}$. We conclude that:
 $$|\{ u : \hat{R}^{(t)}(h,(s,a)) = *\}| \geq N - Np\geq \frac{N}{2}\,.$$ This allows us to conclude $\mathbb{P}\left(\hat{R}_h^{(t)}(U_t,(s,a))= * \bigr| \mathcal{F}_t, (S_h^{(t)},A_h^{(t)}) = (s,a)\right) \geq \frac{1}{2}$ and hence:
 \begin{align}
 \mathbb{E}N_t &= \sum_{h=1}^{H} \mathbb{E}\mathbbm{1}((S^{(t)}_h,A^{(t)}_h) \in \mathcal{G}^{(t)}_h). \mathbbm{1}(R_h^{(t)}(U_t,(S_h^{(t)},A_h^{(t)}))= *)  \nonumber \\
 &\geq \frac{1}{2}\sum_{h=1}^{H} \mathbb{E}\mathbbm{1}((S^{(t)}_h,A^{(t)}_h) \in \mathcal{G}^{(t)}_h) \nonumber\\
 &= \frac{1}{2}\sum_{h=1}^{H}P^{\hat{\Pi}^{\mathcal{G}}}_h(\mathcal{G}_h^{(t)})
 \geq \frac{\epsilon }{8}
 \end{align}
 In the last step we have used~\eqref{eq:hat_guarantee}. The bound $|N_t| \leq H$ almost surely follows from definition. Now note that $\mathbb{E}\left[N_t^2|\mathcal{F}_t\right] \leq H\mathbb{E}\left[N_t|\mathcal{F}_t\right]$.
 
\end{proof}

\begin{claim}
For any $\tau \in \mathbb{N}$ and some $c_0 >0 $ small enough, we have:
$$\mathbb{P}\left(\sum_{t=0}^{\tau-1}N_t< \frac{\epsilon \tau}{16} \right) \leq \exp(-c_0 \tfrac{\epsilon \tau}{H})$$
\end{claim}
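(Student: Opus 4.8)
The plan is to prove this as a one-sided (lower-tail) Chernoff bound for the sum $S_\tau := \sum_{t=0}^{\tau-1} N_t$, exploiting the three conditional estimates from Claim~\ref{claim:cond_ineq}: namely $\mathbb{E}[N_t\mid\mathcal{F}_t]\geq \epsilon/8$, the almost-sure bound $0\leq N_t\leq H$, and the second-moment control $\mathbb{E}[N_t^2\mid\mathcal{F}_t]\leq H\,\mathbb{E}[N_t\mid\mathcal{F}_t]$ (which already follows from $N_t^2\leq H N_t$). Since $\tau$ is a fixed integer and the $N_t$ are defined (as fictitious i.i.d.\ Bernoulli$(\epsilon/8)$ variables) past termination so that these three properties hold for \emph{every} $t$, we need not deal with any stopping-time subtleties and may work directly with the fixed-length sum.

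First I would bound the conditional moment generating function. Writing $\mu_t := \mathbb{E}[N_t\mid\mathcal{F}_t]$ and using the elementary inequality $e^{-y}\leq 1-y+y^2/2$, valid for all $y\geq 0$, with $y=\lambda N_t$ (here $\lambda>0$ and $N_t\geq 0$), I get
$$\mathbb{E}\left[e^{-\lambda N_t}\mid\mathcal{F}_t\right]\leq 1-\lambda\mu_t+\tfrac{\lambda^2}{2}\,\mathbb{E}[N_t^2\mid\mathcal{F}_t]\leq 1-\lambda\mu_t\left(1-\tfrac{\lambda H}{2}\right)\leq \exp\!\left(-\lambda\mu_t\left(1-\tfrac{\lambda H}{2}\right)\right).$$
The crucial point is that $\mu_t\geq \epsilon/8$ holds \emph{deterministically}, so for any $\lambda$ with $\lambda H<2$ each factor is dominated by the deterministic quantity $\exp(-\lambda\tfrac{\epsilon}{8}(1-\tfrac{\lambda H}{2}))$.

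Next I would turn this into a bound on $\mathbb{E}[e^{-\lambda S_\tau}]$ by peeling off one factor at a time. Conditioning the product $\prod_{t=0}^{\tau-1}e^{-\lambda N_t}$ on $\mathcal{F}_{\tau-1}$, applying the MGF estimate to the last factor, and iterating via the tower property yields $\mathbb{E}[e^{-\lambda S_\tau}]\leq \exp(-\lambda\tfrac{\epsilon\tau}{8}(1-\tfrac{\lambda H}{2}))$. A Markov (Chernoff) step then gives $\mathbb{P}(S_\tau<\tfrac{\epsilon\tau}{16})\leq \exp(\lambda\tfrac{\epsilon\tau}{16}-\lambda\tfrac{\epsilon\tau}{8}(1-\tfrac{\lambda H}{2}))$, whose exponent equals $-\lambda\tfrac{\epsilon\tau}{16}+\lambda^2\tfrac{\epsilon\tau H}{16}$. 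Optimizing over $\lambda$ (the minimizer is $\lambda=\tfrac{1}{2H}$, which indeed satisfies $\lambda H<2$) produces the exponent $-\tfrac{\epsilon\tau}{64H}$, establishing the claim with $c_0=1/64$.

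The step I expect to require the most care is the reduction from a random variance proxy to a clean exponential rate: because $\sum_t\mu_t$ is itself random, one cannot simply invoke a Bernstein inequality with a fixed variance parameter. The fix --- replacing $\mu_t$ by its deterministic lower bound $\epsilon/8$ inside each conditional MGF factor before taking the product --- is what makes the multiplicative supermartingale argument go through and yields a per-step rate independent of the trajectory. The other point to check is that the three conditional estimates genuinely hold uniformly in $t$ for both the true pre-termination $N_t$ and the fictitious post-termination Bernoulli variables (so that $N_t^2\leq HN_t$ and $\mu_t\geq\epsilon/8$ are valid at every step); this is immediate from Claim~\ref{claim:cond_ineq} and the definition of the fictitious variables.
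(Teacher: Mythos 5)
Your proposal is correct. It follows the same overall strategy as the paper's proof --- control the conditional moment generating function of each increment using the three estimates of Claim~\ref{claim:cond_ineq}, iterate via the tower property, and finish with a Chernoff bound --- but the mechanics of the per-step MGF bound differ in a way worth noting. The paper builds a Freedman-style exponential supermartingale $\exp\left(\sum_t M_t\right)$ with the \emph{random} compensator $\lambda^2\mathbb{E}[N_t^2\mid\mathcal{F}_t]/(1-\lambda H/3)$ (invoking the Bernstein-type MGF bound from Vershynin, Exercise 2.8.5), and only at the very end substitutes the deterministic bounds $\mathbb{E}[N_t^2\mid\mathcal{F}_t]\leq H\,\mathbb{E}[N_t\mid\mathcal{F}_t]$ and $\sum_t\mathbb{E}[N_t\mid\mathcal{F}_t]\geq \epsilon\tau/8$ into the resulting deviation inequality. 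You instead exploit the fact that this is a lower-tail bound, so $-\lambda N_t\leq 0$ and the elementary inequality $e^{-y}\leq 1-y+y^2/2$ applies; combined with $N_t^2\leq HN_t$ and $\mu_t\geq\epsilon/8$ this gives a \emph{deterministic} bound on each conditional factor immediately, so the product telescopes without any supermartingale bookkeeping. Your route is more self-contained and slightly cleaner for this one-sided statement (and your constant $c_0=1/64$ with $\lambda=1/(2H)$ checks out); the paper's route is the one that would generalize to an upper-tail bound. One point you correctly flag and that does need the paper's setup: the fictitious i.i.d.\ $\mathsf{Ber}(\epsilon/8)$ increments after termination satisfy all three conditional estimates, so the bound holds uniformly over the fixed horizon $\tau$ without stopping-time arguments. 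Both proofs implicitly rely on the same measurability convention (that $\sum_{t\leq\tau-2}N_t$ is $\mathcal{F}_{\tau-1}$-measurable while $\mathbb{E}[N_{\tau-1}\mid\mathcal{F}_{\tau-1}]$ is a nontrivial conditional expectation), so you inherit no extra gap there.
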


\begin{proof}
For $\frac{3}{4H} > \lambda > 0$, consider $M_t = -\tfrac{\lambda^2\mathbb{E}\left[N_t^2|\mathcal{F}_t\right]}{1-\tfrac{\lambda H}{3}} + \lambda\left(\mathbb{E}\left[N_t|\mathcal{F}_t\right]-N_t\right)$. Now consider:
\begin{align}
    \mathbb{E}\exp(\sum_{t=0}^{\tau-1}M_t) &= \mathbb{E}\left[\mathbb{E}\left[\exp(M_{\tau-1})|\mathcal{F}_{\tau-1}\right]\exp\left(\sum_{t=0}^{\tau-1}M_t\right)\right]\nonumber \\
    &=  \mathbb{E}\left[\exp(\lambda\mathbb{E}\left[N_{\tau-1}|\mathcal{F}_{\tau-1}\right]-\lambda N_t)|\mathcal{F}_{\tau-1}\right]\exp\left(\sum_{t=0}^{\tau-2}M_t\right) \exp\left(-\tfrac{\lambda^2\mathbb{E}\left[N_{\tau-1}^2|\mathcal{F}_{\tau-1}\right]}{1-\tfrac{\lambda H}{3}}\right)\nonumber \\
    &\leq \mathbb{E}\exp\left(-\tfrac{\lambda^2\mathbb{E}\left[N_{\tau-1}^2|\mathcal{F}_{\tau-1}\right]}{1-\tfrac{\lambda H}{3}}\right)\exp\left(\sum_{t=0}^{\tau-2}M_t\right) \exp\left(-\tfrac{\lambda^2\mathbb{E}\left[N_{\tau-1}^2|\mathcal{F}_{\tau-1}\right]}{1-\tfrac{\lambda H}{3}} \right) \nonumber \\
    &= \mathbb{E}\exp(\sum_{t=0}^{\tau-2}M_t) \label{eq:sup_mart}
\end{align}
In the first step we have used the fact that $\sum_{t=0}^{\tau-2}M_{t}$ is $\mathcal{F}_{\tau-1}$ measurable and the towering property of conditional expectation. In the third step, we have used the exponential moment bound given in Exercise 2.8.5 in \cite{vershynin2018high}, as applied to $N_\tau - \mathbb{E}\left[N_{\tau-1}|\mathcal{F}_{\tau-1}\right]$ along with the fact that $N_t \in [0,H]$ almost surely. From~\eqref{eq:sup_mart}, we conclude that $\mathbb{E}\exp(\sum_{t=0}^{\tau}M_t) \leq 1$ and thus applying the Chernoff bound, we conclude that for any $\beta > 0 $

$$\mathbb{P}\left(\sum_{t=0}^{\tau-1}-\tfrac{\lambda\mathbb{E}\left[N_t^2|\mathcal{F}_t\right]}{1-\tfrac{\lambda H}{3}} + \left(\mathbb{E}\left[N_t|\mathcal{F}_t\right]-N_t\right) > \beta\right) \leq \exp(-\lambda \beta)$$

Now, using item 2 from Claim~\ref{claim:cond_ineq}, we conclude that
$$\mathbb{P}\left(\sum_{t=0}^{\tau-1}N_t < -\beta + \tfrac{3-4\lambda H}{3-\lambda H} \sum_{t=0}^{\tau-1}\mathbb{E}\left[N_t|\mathcal{F}_t\right] \right) \leq \exp(-\lambda \beta)$$

Now, using item 1 from Claim~\ref{claim:cond_ineq}, we note that $\sum_{t=0}^{\tau-1}\mathbb{E}\left[N_t|\mathcal{F}_t\right] \geq \frac{\epsilon \tau}{8} $ almost surely. Setting $\lambda = \frac{1}{4H}$ and $\beta = c_0 \epsilon \tau$ for some small enough constant $\epsilon$, we conclude:
$$\mathbb{P}\left(\sum_{t=0}^{\tau-1}N_t < \frac{\epsilon \tau}{16} \right) \leq \exp(-c_0 \tfrac{\epsilon \tau}{H})$$

\end{proof}
Let $\tau^{\mathsf{term}}$ denote the termination time for the algorithm. This is true since $\varphi(t)$ is increasing in $t$, $\varphi(t) \leq NpH|\mathcal{S}||\mathcal{A}|$, and strict inequality holds when $t < \tau^{\mathsf{term}}$. For every $\tau < \tau^{\mathsf{term}}$ we have $\varphi(\tau) = \sum_{t=0}^{\tau-1}N_{\tau} < NpH|\mathcal{S}||\mathcal{A}|$. Therefore, we have the following relationship between the events:
$$\{\tau^{\mathsf{term}} > \tau\} \subseteq \bigr\{\sum_{t=1}^{\tau}N_{\tau} < Np|\mathcal{S}||\mathcal{A}|H\bigr\}$$
Setting $\tau = \frac{16 Np|\mathcal{S}||\mathcal{A}|H}{\epsilon}$, we have:
$$ \mathbb{P}(\tau^{\mathsf{term}} > \tau) \leq \mathbb{P}\left(\sum_{t=1}^{\tau}N_{\tau} < Np|\mathcal{S}||\mathcal{A}|H\right) \leq \exp(-c Np|\mathcal{S}||\mathcal{A}|)$$

\end{proof}

\subsection{Proof of Lemma~\ref{lem:basis_samp}}

\begin{proof}
By Remark 4.3 in \cite{wagenmaker2022reward}, we show that non-linear rewards can be handled by the reward free RL algorithm in Phase 1 as long all the reward are uniformly bounded in $[0,1]$. Let $B_{th}$ be the matrix $I + A_{\phi}$ in Algorithm~\ref{alg:lin_basis_samp} at step $t$ for horizon $h$. Similarly, let the corresponding projection $Q$ be $Q_{th}$. Recall that $Q_{th}$ is the projection onto an eigenspace of $B_{th}$. Now, suppose $S_{1:H},A_{1:H} \sim \mathcal{M}_{U_t}(\hat{\Pi}^{Q_{t,h}})$ as in the algorithm. Let $\phi_{th} := \phi(S_h,A_h)$. Now, if $Q_{th} \neq 0$, then:
\begin{align}
    \phi_{th}^{\intercal} B_{th}^{-1}\phi_{th} &\geq \phi_{th}^{\intercal} Q_{th} B_{th}^{-1}Q_{th}\phi_{th} \nonumber \\
    &\geq \phi_{th}^{\intercal} Q_{th} \frac{I}{1+\kappa^2}Q_{th}\phi_{th} \nonumber \\
    &= \frac{\|Q_{th}\phi_{th}\|^2}{1+\kappa^2}\label{eq:subspace_dom}
\end{align}
In the first step, we have used the fact that $Q_{th}$ is the projector to the eigenspace of $B_{th}^{-1}$. In the second step, we have used the fact that over the eigenspace corresponding to $Q_{th}$, the eigenvalues of $B_{th}^{-1}$ are at-least $\frac{1}{1+\kappa^2}$. We now invoke Assumption~\ref{as:reach_lin} in order to show that, along with the guarantees of reward free RL in phase 1, we conclude that:

\begin{equation}\label{eq:exp_bd}
\mathbb{E}\left[\|Q_{th}\phi_{th}\|^2| Q_{th} \neq 0, B_{th}\right] \geq \gamma -\epsilon
\end{equation}

Now, note by the fact that $Q_{th}$ is a projector and that $\|\phi_{th}\|\leq 1$, we have:
\begin{equation}\label{eq:4th_mom}
\mathbb{E}\left[\|Q_{th}\phi_{th}\|^4\bigr|Q_{th} \neq 0, B_{th}\right] \leq \mathbb{E}\left[\|Q_{th}\phi_{th}\|^2\bigr|Q_{th} \neq 0, B_{th}\right]\end{equation}

Recall the Paley-Zygmund inequality which states that for any positive random variable $Z$, we must have: $\mathbb{P}(Z > \frac{\mathbb{E}Z}{2}) \geq \frac{1}{4}\frac{(\mathbb{E}Z)^2}{\mathbb{E}Z^2}$. Therefore,

\begin{align}
    &\mathbb{P}\left[\phi_{th}^{\intercal} B_{th}^{-1}\phi_{th} > \frac{\gamma-\epsilon}{2(1+\kappa^2)}\biggr|Q_{th} \neq 0, B_{th}\right] \geq  \mathbb{P}\left[\|Q_{th}\phi_{th}\|^2 > \frac{\gamma-\epsilon}{2}\biggr|Q_{th} \neq 0, B_{th}\right] \nonumber \\
    &\geq \mathbb{P}\left[ \|Q_{th}\phi_{th}\|^2 > \frac{1}{2}\mathbb{E}\left[\|Q_{th}\phi_{th}\|^2 \bigr|Q_{th} \neq 0, B_{th}\right]\biggr|Q_{th} \neq 0, B_{th}\right] \nonumber\\
    &\geq \frac{1}{4}\frac{\mathbb{E}\left[\|Q_{th}\phi_{th}\|^2 \bigr|Q_{th} \neq 0, B_{th}\right]^2}{\mathbb{E}\left[\|Q_{th}\phi_{th}\|^4 \bigr|Q_{th} \neq 0, B_{th}\right]} 
    \geq \frac{1}{4}\mathbb{E}\left[\|Q_{th}\phi_{th}\|^2 \bigr|Q_{th} \neq 0, B_{th}\right]\nonumber \\ &\geq \frac{\gamma-\epsilon}{4}\label{eq:pal_zyg_app}
\end{align}
In the first step, we have used~\eqref{eq:subspace_dom}. In the second step, we have used~\eqref{eq:exp_bd}. In the third step, we have used the Paley-Zygmund inequality and the moment bound in~\eqref{eq:4th_mom}.

Define the stopping time $\tau = \inf\{t \leq T : Q_{th} = 0\}$ and $\tau = \infty$ if the set in the RHS is empty. Let $\Xi^{0}_t$ for $t \in \{0\}\cup \mathbb{N}$ be a sequence of i.i.d random variables with the law $\frac{\gamma-\epsilon}{2(1+\kappa^2)}\mathsf{Ber}(\frac{\gamma-\epsilon}{4})$. We consider the sequence of random variables $\Xi_t = \phi_{th}^{\intercal} B_{th}^{-1}\phi_{th}$ for $t < \tau$ and $\Xi_t = \Xi^{0}_t$ for $t \geq \tau$

Now, we apply the matrix determinant lemma which states that $\det(B + uu^{\intercal}) = \det(B)(1+ u^{\intercal}B^{-1}u)$. We note that $B_{(t+1)h} = B_{th} + \phi_{th}\phi_{th}^{\intercal}$. Therefore, whenever $t < \tau$, we must have: \begin{equation}
    \det(B_{(t+1)h}) = \det(B_{th})(1+\Xi_t)
\end{equation} 
Since $\|\phi_{th}\| \leq 1$ almost surely, we must have 
$$ \Tr(B_{th}) = \sum_{i=1}^{d}\langle e_i,B_{th}e_i\rangle \leq d + t$$
It is easy to show that for any PSD matrix, $A$, if $\Tr(A) \leq \alpha$, then $\det(A) \leq (\frac{\alpha}{d})^{d}$ (since trace is the sum of the eigenvalues and the determinant is the product). Combining the equations above, we conclude that whenever $t < \tau$, we must have:

$$\left(\frac{t+1+d}{d}\right)^{d}\geq \prod_{s=0}^{t}(1+\Xi_t)$$

Therefore, the event 
\begin{equation}\label{eq:event_inc}
    \{\tau > T\} \subseteq \{\left(\tfrac{T+1+d}{d}\right)^{d} \geq \prod_{s=0}^{T}(1+\Xi_t)\}
\end{equation}

\begin{claim}\label{claim:det_growth}
$$\mathbb{P}\left[\prod_{s=0}^{T}(1+\Xi_t)  \geq \left(1+\frac{\gamma-\epsilon}{2(1+\kappa^2)}\right)^{\frac{(\gamma-\epsilon)T}{8}}  \right] \geq 1 - \exp\left(-c_0 T(\gamma-\epsilon)\right)$$

Let $\kappa > 1$ and $T \geq C\frac{d\kappa^2}{(\gamma-\epsilon)^2}\log \tfrac{d\kappa}{\gamma-\epsilon}$, we have:

$$\mathbb{P}\left[\prod_{s=0}^{T}(1+\Xi_t)  \geq \left(\tfrac{T+1+d}{d}\right)^{d} \right] \geq 1 - \exp\left(-c_0 T(\gamma-\epsilon)\right)$$
\end{claim}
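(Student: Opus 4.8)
The plan is to reduce the multiplicative growth of $\prod_{s=0}^{T}(1+\Xi_t)$ to an additive count of ``successful'' steps and then invoke a Chernoff lower-tail bound. Write $c := \frac{\gamma-\epsilon}{2(1+\kappa^2)}$ and $q := \frac{\gamma-\epsilon}{4}$, and set $B_t := \mathbbm{1}(\Xi_t \geq c)$. Since $\Xi_t \geq 0$ always and $\log(1+\Xi_t) \geq \log(1+c)$ whenever $B_t = 1$, one gets the deterministic bound $\prod_{s=0}^{T}(1+\Xi_t) \geq (1+c)^{\sum_{t=0}^{T}B_t}$. Thus it suffices to show $\sum_{t=0}^{T}B_t \geq \frac{(\gamma-\epsilon)T}{8}$ with the claimed probability, after which both displays follow by raising $1+c \geq 1$ to this power.

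First I would verify the conditional success probability at every step. On $\{t<\tau\}$ we have $Q_{th}\neq 0$, so \eqref{eq:pal_zyg_app} gives $\mathbb{P}(\Xi_t \geq c \mid \mathcal{F}_t) \geq q$; on $\{t \geq \tau\}$ we have $\Xi_t = \Xi^{0}_t$, which equals $c$ with probability exactly $q$ and is independent of the past. Hence $\mathbb{P}(B_t = 1 \mid \mathcal{F}_t) \geq q$ for every $t$, regardless of whether the stopping time $\tau$ has been reached.

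Next I would obtain the lower tail. Because $(B_t)$ is adapted with conditional success probability at least $q$ at every step, $\sum_{t=0}^{T}B_t$ stochastically dominates a $\mathsf{Binomial}(T+1,q)$ variable; this is the standard step-by-step coupling, or equivalently one checks that $\exp\bigl(-\lambda\sum_{s=0}^{t}B_s + (t+1)q(1-e^{-\lambda})\bigr)$ is a supermartingale using $\mathbb{E}[e^{-\lambda B_t}\mid\mathcal{F}_t] \leq 1 - q(1-e^{-\lambda}) \leq e^{-q(1-e^{-\lambda})}$, exactly as in the proof of Lemma~\ref{lem:correct_mask}. A Chernoff bound then yields $\mathbb{P}\bigl(\sum_{t=0}^{T}B_t < \tfrac{(T+1)q}{2}\bigr) \leq \exp(-c_0(T+1)q)$, and since $\frac{(T+1)q}{2} \geq \frac{(\gamma-\epsilon)T}{8}$ and $(T+1)q \geq \frac{T(\gamma-\epsilon)}{4}$, this is precisely the first display with failure probability $\exp(-c_0 T(\gamma-\epsilon))$.

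Finally, for the second display I would turn the multiplicative lower bound into the required polynomial growth. On the high-probability event we have $\prod_{s=0}^{T}(1+\Xi_t) \geq (1+c)^{(\gamma-\epsilon)T/8}$, so taking logarithms it suffices that $\frac{(\gamma-\epsilon)T}{8}\log(1+c) \geq d\log\frac{T+1+d}{d}$. Using $\kappa>1$ and $\gamma-\epsilon\leq 1$ (so $c \leq \tfrac14$), the estimate $\log(1+c)\geq c/2 \geq \frac{\gamma-\epsilon}{8\kappa^2}$ reduces the target to $\frac{(\gamma-\epsilon)^2 T}{64\kappa^2} \geq d\log\frac{T+1+d}{d}$; substituting $T = C\frac{d\kappa^2}{(\gamma-\epsilon)^2}\log\frac{d\kappa}{\gamma-\epsilon}$ makes the left side at least $\frac{Cd}{64}\log\frac{d\kappa}{\gamma-\epsilon}$ while $\log\frac{T+1+d}{d} = O\bigl(\log\frac{d\kappa}{\gamma-\epsilon}\bigr)$, so a large enough $C$ closes the gap. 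The main obstacle is the concentration step: the $\Xi_t$ are \emph{not} independent, since for $t<\tau$ they depend on the accumulated Grammian $B_{th}$, so the lower-tail bound must be argued through the adapted coupling (or supermartingale) rather than a naive i.i.d.\ Chernoff bound; the remaining steps are elementary algebra.
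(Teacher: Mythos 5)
Your proposal is correct and follows essentially the same route as the paper's proof: reduce the product to $(1+c)^{N_T}$ where $N_T$ counts the steps with $\Xi_t \geq c$, use the Paley--Zygmund consequence in \eqref{eq:pal_zyg_app} to show each indicator stochastically dominates a $\mathsf{Ber}(\tfrac{\gamma-\epsilon}{4})$ conditioned on the past, apply a binomial lower-tail bound, and finish with the algebra comparing $(1+c)^{(\gamma-\epsilon)T/8}$ to $(\tfrac{T+1+d}{d})^d$ under the stated choice of $T$. You are in fact more explicit than the paper about handling the post-$\tau$ fictitious variables $\Xi^{0}_t$ and about the adapted (non-i.i.d.) nature of the sequence, both of which the paper passes over quickly.
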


\begin{proof}
Let $N_T$ be the number of variables $(\Xi_t)_{t=0}^{T}$ such that $\Xi_t \geq \frac{\gamma-\epsilon}{2\kappa^2}$. Then, it is clear that $\prod_{s=0}^{T}(1+\Xi_t) \geq (1+\frac{\gamma-\epsilon}{2\kappa^2})^{N_T}$. 

Therefore, 
\begin{align}
    \mathbb{P}\left[\prod_{s=0}^{T}(1+\Xi_t)  \geq \left(1+\frac{\gamma-\epsilon}{2(1+\kappa^2)}\right)^{\frac{(\gamma-\epsilon)T}{8}}  \right] 
        &\geq \mathbb{P}\left(N_T \geq \frac{(\gamma-\epsilon)T}{8}\right) \nonumber \\ &\geq \mathbb{P}\left(\mathsf{Bin}(T,\tfrac{\gamma-\epsilon}{4})\geq \frac{(\gamma-\epsilon)T}{8}\right) \nonumber \\
        &\geq 1 - \exp\left(-c_0 T(\gamma-\epsilon)\right)
\end{align}
Here $\mathsf{Bin}$ refers to the law of a binomial random variable. 
The first step follows from the fact that $\prod_{s=0}^{T}(1+\Xi_t) \geq (1+\frac{\gamma-\epsilon}{2(1+\kappa^2)})^{N_T}$ almost surely. The second step follows from~\eqref{eq:pal_zyg_app}, which shows that conditioned on $Q_{th},B_{th}$, the random variable $\mathbbm{1}(\Xi_t \geq \frac{\gamma-\epsilon}{2(1+\kappa^2)})$ stochastically dominates $\mathsf{Ber}(\tfrac{\gamma-\epsilon}{4})$. The last step follows from an application of Bernstein's inequality for binomial random variables.
\end{proof}

Now, using~\eqref{eq:event_inc} along with Claim~\ref{claim:det_growth}, we conclude:
$$\mathbb{P}(\tau > T) \leq \mathbb{P}\left(\left(\tfrac{T+1+d}{d}\right)^{d} \geq \prod_{s=0}^{T}(1+\Xi_t)\right) \leq \exp(-c_0 T(\gamma-\epsilon))$$
\end{proof}

\subsection{Proof of Lemma~\ref{lem:struc_lem}}
\begin{proof}
By the definition of Linear MDP, we must have $S_{h+1}|S_{h},A_{h}\sim \sum_{i=1}^{d}\langle \phi(S_h,A_h),e_i\rangle \mu_{ih}(\cdot) $ and $A_{h+1}|S_{h+1} \sim \pi_{h+1}(\cdot|S_{h})$. Therefore, for any bounded, measurable function $g :\mathcal{S}\times\mathcal{A} \to \mathbb{R}$, we must have:
\begin{align}
    \mathbb{E}g(S_{h+1},A_{h+1}) &= \mathbb{E}\left[\mathbb{E}\left[g(S_{h+1},A_{h+1})\bigr|S_{h},A_{h}\right] \right] \nonumber \\
    &= \mathbb{E}\sum_{i=1}^{d}\langle \phi(S_h,A_h),e_i\rangle\int \mu_{i(h-1)}(ds) \pi_{h+1}(da|s) g(s,a) \nonumber \\
    &= \sum_{i=1}^{d}\nu_{ih}\int \mu_{ih}(ds) \pi_{h+1}(da|s) g(s,a)
\end{align}
\end{proof}

\subsection{Proof of Lemma~\ref{lem:est_sample}}
\begin{proof}
It is clear from the assumption that $\mathbb{E}\left[\int g(s_{(h+1)t},a)\pi_{h+1}(da|s_t)|(\phi_{ht})_{t\leq T}\right] = \sum_{i=1}^{d}\langle\phi_{ht},e_i\rangle\int \mu_{ih}(ds)\pi_{h+1}(da|s)g(s,a)$. 

Note that \begin{align}
    \sum_{t=1}^{T}\alpha_{ht,\nu}\phi_{ht} &= \sum_{t=1}^{T}\phi_{ht}^{\intercal}G_{\phi,h}^{-1}\nu \phi_{ht}\nonumber \\
    &= \sum_{t=1}^{T} \phi_{ht}\phi_{ht}^{\intercal}G_{\phi,h}^{-1}\nu
    = (\sum_{t=1}^{T} \phi_{ht}\phi_{ht}^{\intercal})G_{\phi,h}^{-1}\nu
    \nonumber \\
    &= G_{\phi,h}G_{\phi,h}^{-1}\nu = \nu
\end{align}

Therefore, 
\begin{align}
   \mathbb{E}\left[\hat{\mathcal{T}}(g;\nu,\pi_h)|(\phi_{l})_{t\in [T]}\right] &= \sum_{i=1}^{d}\langle\sum_{t=1}^{T}\alpha_{t,\nu}\phi_t,e_i\rangle\int \mu_{i(h-1)}(ds)\pi_h(da|s)g(s,a) \nonumber \\
   &= \sum_{i=1}^{d}\langle\nu,e_i\rangle\int \mu_{i(h-1)}(ds)\pi_h(da|s)g(s,a)  = \mathcal{T}(g;\nu,\pi_h)
\end{align}

Note that, conditioned on $(\phi_t)_{t\in [T]}$, $\alpha_{ht,\nu}\int g(s_{(h+1)t},a)\pi_{h+1}(da|s_{(h+1)t})$ are independent random variables bounded above by $\alpha_{ht,\nu} B$. Therefore, applying the Azuma-Hoeffding inequality, we conclude:

$$\mathbb{P}\left(|\hat{\mathcal{T}}(g;\nu,\pi_h) - \mathcal{T}(g;\nu,\pi_h)| > \beta\biggr|(\phi_t)_{t\in[T]}\right) \leq 2\exp\left(-\tfrac{t^2}{2B^2\sum_{t}\alpha_{t,\nu}^2}\right)$$

Now, observe that $\sum_{t}\alpha_{ht,\nu}^2 = \sum_{t}\nu^{\intercal}G_{\phi,h}^{-1}\nu \leq \frac{1}{\kappa^2}$ whenever $G_{\phi,h} \succeq \kappa^2 I $ This concludes the proof. 
\end{proof}

\subsection{Proof of Lemma~\ref{lem:pop_lips}}
\begin{proof}
Notice that:
\begin{align}
    &\bigr|E_1^{\nu_1}(\Pi) - E_1^{\nu_1^{\prime}}(\Pi^{\prime})\bigr| \leq \bigr|E_1^{\nu_1}(\Pi) - E_1^{\nu_1}(\Pi^{\prime})\bigr| + \bigr|E_1^{\nu_1}(\Pi^{\prime}) - E_1^{\nu_1^{\prime}}(\Pi^{\prime})\bigr| \nonumber \\
    &\leq  \bigr|E_1^{\nu_1}(\Pi) - E_1^{\nu_1}(\Pi^{\prime})\bigr| + \|\nu_1-\nu_1^{\prime}\|_1  \nonumber \\
    &\leq \bigr\|\mathbb{E}  \int \phi(S_1,a)\pi_1(da|S_1) -\mathbb{E}  \int \phi(S_1,a)\pi^{\prime}_1(da|S_1)\bigr\|_1 + \|\nu_1-\nu_1^{\prime}\|_1 \nonumber \\
    &\leq \sup_{(s,a)}\|\phi(s,a)\|_1 \tv(\pi_1,\pi_1^{\prime}) + \|\nu_1-\nu_1^{\prime}\|_1 \leq \tv(\pi_1,\pi_1^{\prime}) + \|\nu_1-\nu_1^{\prime}\|_1 \label{eq:E_ineq}
\end{align}
In the first, second and third steps we have used the triangle inequality. In the last step, we have used the fact that for any bounded function, and any probability measures $\mu,\nu$, we have $|\int f(x)\mu(dx) - \int f(x)\nu(dx)| \leq \sup_x |f(x)|\tv(\nu,\mu)$.

Now consider:
\begin{align}
    &\bigr|\|\mathcal{T}_{j}(\phi,\nu_{j-1},\pi_j) - \nu_j\|_1 - \|\mathcal{T}_{j}(\phi,\nu^{\prime}_{j-1},\pi^{\prime}_j) - \nu^{\prime}_j\|_1\bigr| \nonumber \\
    &\leq  \|\nu_j - \nu_j^{\prime}\|_1 + \bigr\|\mathcal{T}_{j}(\phi,\nu_{j-1},\pi_j)- \mathcal{T}_{j}(\phi,\nu^{\prime}_{j-1},\pi^{\prime}_j)\bigr\|_1 \nonumber \\
    &\leq \|\nu_j - \nu_j^{\prime}\|_1 + \bigr\|\mathcal{T}_{j}(\phi,\nu_{j-1},\pi_j)- \mathcal{T}_{j}(\phi,\nu^{\prime}_{j-1},\pi_j)\bigr\|_1 + \bigr\|\mathcal{T}_{j}(\phi,\nu^{\prime}_{j-1},\pi_j)- \mathcal{T}_{j}(\phi,\nu^{\prime}_{j-1},\pi^{\prime}_j)\bigr\|_1 \label{eq:T_ineq_0}
\end{align}
Now, observe that:
\begin{align}
&\bigr\|\mathcal{T}_{j}(\phi,\nu_{j-1},\pi_j)- \mathcal{T}_{j}(\phi,\nu^{\prime}_{j-1},\pi_j)\bigr\|_1  \leq \sum_{i=1}^{d}|\langle \nu_{j-1}-\nu^{\prime}_{j-1},e_i\rangle|\biggr\|\int \phi(s,a)\mu_{i(j-1)}(ds)\pi_j(da|s)\biggr\|_1\nonumber \\
&\leq \sum_{i=1}^{d} |\langle \nu_{j-1}-\nu^{\prime}_{j-1},e_i\rangle| = \|\nu_{j-1}-\nu_{j-1}^{\prime}\|_1 \label{eq:T_ineq_1}
\end{align}

Where we recall $ \sup_{i,h,\pi}\|\int \phi(s,a)\mu_{ih}(ds)\pi(da|s)\|_1 \leq 1$ as given in the definition of Linear MDP. Using the Hahn-Jordan decomposition of a signed measure, we conclude:
\begin{align}
&\bigr\|\mathcal{T}_{j}(\phi,\nu^{\prime}_{j-1},\pi_j)- \mathcal{T}_{j}(\phi,\nu^{\prime}_{j-1},\pi_j)\bigr\|_1 \nonumber \\ &\leq \sum_{i=1}^{d}|\langle \nu^{\prime}_{j-1},e_i\rangle|\biggr\|\int \phi(s,a)\mu_{i(j-1)}(ds)(\pi_j(da|s)-\pi_j^{\prime}(da|s))\biggr\|_1\nonumber \\
&\leq \sum_{i=1}^{d} C_{\mu}|\langle \nu^{\prime}_{j-1},e_i\rangle|\tv(\pi_j,\pi_j^{\prime}) \leq C_{\mu}\|\nu_{j-1}^{\prime}\|_1 \tv(\pi_j,\pi_j^{\prime}) \label{eq:T_ineq_2}
\end{align}

Combining~\eqref{eq:T_ineq_0},~\eqref{eq:T_ineq_1} and~\eqref{eq:T_ineq_2} we conclude:
\begin{align}
    &\bigr|\|\mathcal{T}_{j}(\phi,\nu_{j-1},\pi_j) - \nu_j\|_1 - \|\mathcal{T}_{j}(\phi,\nu^{\prime}_{j-1},\pi^{\prime}_j) - \nu^{\prime}_j\|_1\bigr| \nonumber \\ &\leq  \|\nu_j - \nu_j^{\prime}\|_1 + C_\mu \|\nu_{j-1}^{\prime}\|_1 \tv(\pi_j,\pi_j^{\prime}) + \|\nu_{j-1}-\nu_{j-1}^{\prime}\|_1 \label{eq:T_ineq_fin}
\end{align}

Combining~\eqref{eq:E_ineq} and~\eqref{eq:T_ineq_fin}, we conclude the first inequality in the statement of the lemma.

With a reasoning very similar to that in~\eqref{eq:E_ineq}, we have:
\begin{align}\label{eq:E_hat_ineq}
    \bigr|\hat{E}_1^{\nu_1}(\Pi) - \hat{E}_1^{\nu_1^{\prime}}(\Pi^{\prime})\bigr| \leq \tv(\pi_1,\pi_1^{\prime}) + \|\nu_1-\nu_1^{\prime}\|_1
\end{align}

Using similar reasoning as in~\eqref{eq:T_ineq_fin}:
\begin{align}
    &\bigr|\|\mathcal{T}_{j}(\phi,\nu_{j-1},\pi_j) - \nu_j\|_1 - \|\mathcal{T}_{j}(\phi,\nu^{\prime}_{j-1},\pi^{\prime}_j) - \nu^{\prime}_j\|_1\bigr| \nonumber \\ &\leq  \|\nu_j - \nu_j^{\prime}\|_1 + \left(\sum_{t=1}^{T}|(\nu_{j-1}-\nu_{j-1}^{\prime})^{\intercal}G_{\phi,j-1}^{-1}\phi_{(j-1)t}|\right)  + \left(\sum_{t=1}^{T}|(\nu^{\prime}_{j-1})^{\intercal}G_{\phi,j-1}^{-1}\phi_{(j-1)t}|\right)\tv(\pi_j,\pi_j^{\prime}) \label{eq:T_ineq_fin_hat}
\end{align}

Now note that for any $\nu \in \mathbb{R}^d$, we have:
\begin{align}
    \sum_{t=1}^{T} \bigr|\nu^{\intercal}G_{\phi,j-1}^{-1}\phi_{(j-1)t}\bigr| &\leq \sqrt{T\sum_t \bigr|\nu^{\intercal}G_{\phi,j-1}^{-1}\phi_{(j-1)t}\bigr|^2   } \nonumber\\
    &= \sqrt{T\sum_{t=1}^{T} \nu^{\intercal}G_{\phi,j-1}^{-1}\phi_{(j-1)t}\phi_{(j-1)t}^{\intercal}G_{\phi,j-1}^{-1}\nu   } \nonumber\\
    &= \sqrt{T\nu^{\intercal}G_{\phi,j-1}^{-1}\nu} \nonumber\\
    &\leq \sqrt{\tfrac{T}{\kappa^2}} \|\nu\|_2
\end{align}

Here, in the first step we have used the fact that whenever $x \in \mathbb{R}^{K}$, we must have $\|x\|_1 \leq  \sqrt{K}\|x\|_2$. In the third step, we have used the fact that $\sum_{t=1}^{T}\phi_{(j-1)t}\phi_{(j-1)t}^{\intercal} = G_{\phi,j-1}$ by definition. In the last step, we have used the fact that $\hat{G}_{\phi,j-1} \succeq \kappa^2 I $. Plugging this into~\eqref{eq:T_ineq_fin_hat}, we conclude:
\begin{align}
    &\bigr|\|\mathcal{T}_{j}(\phi,\nu_{j-1},\pi_j) - \nu_j\|_1 - \|\mathcal{T}_{j}(\phi,\nu^{\prime}_{j-1},\pi^{\prime}_j) - \nu^{\prime}_j\|_1\bigr| \nonumber \\ &\leq  \|\nu_j - \nu_j^{\prime}\|_1 + \sqrt{\tfrac{T}{\kappa^2}}\|\nu_{j-1}-\nu_{j-1}^{\prime}\|_2  + \sqrt{\tfrac{T}{\kappa^2}}\|\nu_{j-1}^{\prime}\|_2\tv(\pi_j,\pi_j^{\prime})
\end{align}
Using this and the definition of $\hat{F}$ we conclude the second inequality in the statement of the lemma. ~\eqref{eq:T_lips} and~\eqref{eq:T_hat_lips} follow from a similar reasoning. 
\end{proof}

\subsection{Proof of Lemma~\ref{lem:err_conc}}
\begin{proof}
First consider the case $h = 1$. Let $g(s,a) := \phi(s,a)$. In this case, $\sup_{\nu\in \mathcal{B}_d(1)}|\hat{E}^{\nu,1}(\Pi) - E^{\nu,1}(\Pi)| \leq \|\mathcal{T}_0(g;\pi_1) - \hat{\mathcal{T}}_0(g;\pi_1)\|_1$. By~\eqref{eq:E_ineq} and~\eqref{eq:E_hat_ineq}, we conclude that $\pi_1 \to \mathcal{T}_0(\phi;\pi_1)$ and $\pi_1 \to \hat{\mathcal{T}}_0(\phi;\pi_1)$ are $1$-Lipschitz with respect to $\tv()$ and $\|\cdot\|_1$.

$$\sup_{\Pi = {\pi_1,\dots,\pi_H} \in \polspace}\|\mathcal{T}_0(g;\pi_1) - \hat{\mathcal{T}}_0(g;\pi_1)\|_1 \leq \sup_{\Pi = {\pi_1,\dots,\pi_H} \in \hat{\polspace}_{\eta}}\|\mathcal{T}_0(g;\pi_1) - \hat{\mathcal{T}}_0(g;\pi_1)\|_1 + 2\eta$$

We apply Lemma~\ref{lem:est_sample} co-ordinate wise to the co-ordinates of $\phi$ and union bound over $\hat{\polspace}_{\eta}$. We have:
\begin{equation}
    \mathbb{P}\left(\sup_{\Pi = {\pi_1,\dots,\pi_H} \in \polspace}\|\mathcal{T}_0(g;\pi_1) - \hat{\mathcal{T}}_0(g;\pi_1)\|_1 > 2\eta + d\beta\right) \leq d|\hat{\polspace}_{\eta}|\exp(-\tfrac{\beta^2\kappa^2}{2})
\end{equation}

Now, consider $h > 1$. Consider any $\eta$ net over $\mathcal{B}_d(1)$, denoted by $\hat{\mathcal{B}}_{d,\eta}$ with respect to the norm $\|\cdot\|_1$. We can take $|\hat{\mathcal{B}}_{d,\eta}| \leq \exp(Cd\log(d/\eta))$ \citep{vershynin2018high}.  Invoking Lemma~\ref{lem:pop_lips}, we conclude: 
\begin{align}
    &\sup_{\Pi \in \polspace}\sup_{\nu \in \mathcal{B}_d(1)} |E^{\nu,h}(\Pi)-\hat{E}^{\nu,h}(\Pi)| \leq \sup_{\substack{\nu_1,\dots,\nu_h\in \mathcal{B}_d(1)\\ \Pi \in \polspace}}|\hat{F}(\Pi,\nu_1,\dots,\nu_h)-F(\Pi,\nu_1,\dots,\nu_{h})| \nonumber \\
    &\leq \sup_{\substack{\nu_1,\dots,\nu_h \in \hat{\mathcal{B}}_{d,\eta}\\ \Pi \in \hat{\polspace}_{\eta}}}|\hat{F}(\Pi,\nu_1,\dots,\nu_h)-F(\Pi,\nu_1,\dots,\nu_{h})| + 2\left(1+C_{\mu}+\sqrt{\tfrac{T}{\kappa^2}}\right)\eta  h \label{eq:cover_F_1}
\end{align}
Now, by the triangle inequality, we have:
\begin{align}\label{eq:lips_id}
    &|\hat{F}(\Pi,\nu_1,\dots,\nu_{h})-F(\Pi,\nu_1,\dots,\nu_{h})|  \nonumber\\ &\leq \|\mathcal{T}_{0}(\phi,\pi_{1})-\hat{\mathcal{T}}_{0}(\phi,\pi_{1})\|_1 + \sum_{j=1}^{h-1}\|\mathcal{T}_{j}(\phi,\nu_{j},\pi_{j+1})-\hat{\mathcal{T}}_{j}(\phi,\nu_{j},\pi_{j+1})\|_1 
\end{align}

Therefore, by invoking Lemma~\ref{lem:est_sample}, along with union bound over every component in the sum in~\eqref{eq:lips_id} and over the net in~\eqref{eq:cover_F_1} we conclude that:
\begin{align}
    &\mathbb{P}\left[\sup_{\substack{\nu_1,\dots,\nu_h \in \hat{\mathcal{B}}_{d,\eta}\\ \Pi \in \hat{\polspace}_{\eta}}}|\hat{F}(\Pi,\nu_1,\dots,\nu_{h})-F(\Pi,\nu_1,\dots,\nu_{h})| > \beta d  h\right] \nonumber\\&\leq 2dh|\hat{\polspace}_\eta||\hat{\mathcal{B}}_{d,\eta}|^h\exp(-\tfrac{\beta^2 \kappa^2}{2})\label{eq:conc_F_1}
\end{align}

Combining~\eqref{eq:cover_F_1} and~\eqref{eq:conc_F_1}, we conclude the second item in the statement of the lemma.

The concentration of $X_1$ and $X_h$ follow in a similar fashion, but here we consider an $\eta$ net even over $x$ and use the Lipschitzness results given in Lemma~\ref{lem:pop_lips} and the fact that $x \to f(\phi;x)$ is $1$ Lipschitz. 

\end{proof}

\section{Proof of Theorem~\ref{thm:restr_pol}}

\begin{lemma}\label{lem:est_sample}
Suppose $h \in [H-1]$, and $g :\mathcal{S}\times\mathcal{A} \to \mathbb{R}$ be such that $|g(s,a)| \leq B$ for every $(s,a)$.
For any policy $\pi_h$ and any $\nu$ such that $\|\nu\|_2\leq 1 $, we must have:
$$\mathbb{P}\left(|\hat{\mathcal{T}}_h(g;\nu,\pi_h) - \mathcal{T}_h(g;\nu,\pi_h)| > \beta\biggr|(\phi_{ht})_{t \in [T]},G_{\phi,h} \succeq \kappa^2 I \right) \leq 2\exp\left(-\tfrac{\beta^2\kappa^2}{2B^2}\right)$$
\end{lemma}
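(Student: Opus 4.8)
The plan is to prove conditional unbiasedness of $\hat{\mathcal{T}}_h$ given the covariates $(\phi_{ht})_{t\in[T]}$, and then apply a Hoeffding bound to the resulting sum of conditionally independent, bounded terms; the event $G_{\phi,h}\succeq\kappa^2 I$ enters only to control the variance proxy. Observe first that $G_{\phi,h}=\sum_t\phi_{ht}\phi_{ht}^{\intercal}$ is a deterministic function of $(\phi_{ht})_t$, so conditioning on the covariates already fixes whether the event holds; throughout I would work conditionally on $(\phi_{ht})_t$ and invoke $G_{\phi,h}\succeq\kappa^2 I$ (in particular its invertibility, so that $\alpha_{ht,\nu}$ is well defined) only at the final step.

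For unbiasedness I would use the structural fact recorded after Lemma~\ref{lem:struc_lem}: conditioned on $(\phi_{hl})_l$, the next-stage states are independent with $s_{(h+1)t}\sim\sum_{i=1}^d\langle\phi_{ht},e_i\rangle\mu_{ih}(\cdot)$. Writing $Y_t:=\int g(s_{(h+1)t},a)\,\pi_h(da|s_{(h+1)t})$ and $\alpha_{ht,\nu}=\phi_{ht}^{\intercal}G_{\phi,h}^{-1}\nu$, so that $\hat{\mathcal{T}}_h(g;\nu,\pi_h)=\sum_{t=1}^T\alpha_{ht,\nu}Y_t$, this gives
$$\mathbb{E}\big[Y_t\mid(\phi_{hl})_l\big]=\sum_{i=1}^d\langle\phi_{ht},e_i\rangle\int\mu_{ih}(ds)\,\pi_h(da|s)\,g(s,a).$$
Summing against the weights $\alpha_{ht,\nu}$ and using the key identity
$$\sum_{t=1}^T\alpha_{ht,\nu}\phi_{ht}=\Big(\sum_{t=1}^T\phi_{ht}\phi_{ht}^{\intercal}\Big)G_{\phi,h}^{-1}\nu=G_{\phi,h}G_{\phi,h}^{-1}\nu=\nu$$
then yields $\mathbb{E}[\hat{\mathcal{T}}_h(g;\nu,\pi_h)\mid(\phi_{hl})_l]=\sum_{i=1}^d\langle\nu,e_i\rangle\int\mu_{ih}(ds)\pi_h(da|s)g(s,a)=\mathcal{T}_h(g;\nu,\pi_h)$, the conditional mean we want.

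For the concentration step, conditioned on $(\phi_{hl})_l$ the summands $\alpha_{ht,\nu}\big(Y_t-\mathbb{E}[Y_t\mid(\phi_{hl})_l]\big)$ are independent and zero-mean, and since $|g|\le B$ each is bounded in magnitude by $|\alpha_{ht,\nu}|B$. By Hoeffding's lemma each term is sub-Gaussian with proxy $\alpha_{ht,\nu}^2B^2$, hence their sum is sub-Gaussian with proxy $B^2\sum_t\alpha_{ht,\nu}^2$. Finally I would compute
$$\sum_{t=1}^T\alpha_{ht,\nu}^2=\sum_{t=1}^T\nu^{\intercal}G_{\phi,h}^{-1}\phi_{ht}\phi_{ht}^{\intercal}G_{\phi,h}^{-1}\nu=\nu^{\intercal}G_{\phi,h}^{-1}\nu\le\frac{\|\nu\|_2^2}{\kappa^2}\le\frac{1}{\kappa^2},$$
on the event $G_{\phi,h}\succeq\kappa^2 I$, using $\|\nu\|_2\le1$. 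The standard sub-Gaussian tail bound then gives $\mathbb{P}\big(|\hat{\mathcal{T}}_h(g;\nu,\pi_h)-\mathcal{T}_h(g;\nu,\pi_h)|>\beta\mid(\phi_{hl})_l\big)\le2\exp(-\beta^2\kappa^2/(2B^2))$, as claimed.

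The only substantive point is the unbiasedness argument, and within it the identity $\sum_t\alpha_{ht,\nu}\phi_{ht}=\nu$: this is precisely why the weights are defined through $G_{\phi,h}^{-1}$, as they reweight the empirical sample so that the data-dependent linear functional reproduces the population operator $\mathcal{T}_h$. Everything else is a routine bounded-difference concentration calculation, so I expect no real obstacle beyond keeping the time indices consistent.
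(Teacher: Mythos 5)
Your proposal is correct and follows essentially the same route as the paper's proof: conditional unbiasedness via the identity $\sum_{t}\alpha_{ht,\nu}\phi_{ht}=G_{\phi,h}G_{\phi,h}^{-1}\nu=\nu$, followed by Hoeffding-type concentration for the conditionally independent bounded summands with variance proxy $B^2\sum_t\alpha_{ht,\nu}^2=B^2\,\nu^{\intercal}G_{\phi,h}^{-1}\nu\le B^2/\kappa^2$ on the event $G_{\phi,h}\succeq\kappa^2 I$. Your observation that the conditioning event is measurable with respect to the covariates is a small clarification the paper leaves implicit, but the argument is otherwise identical.
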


\begin{lemma}
\label{lem:pop_lips}

Let $\Pi = (\pi_1,\dots,\pi_H)$, $\Pi^{\prime} = (\pi^{\prime}_1,\dots,\pi^{\prime}_H)$ be policies in $\polspace$. Conditioned on the event $G_{\phi,h} \succeq \kappa^2 I$, the following hold:
\begin{align}
    &|F(\Pi,\nu_1,\dots,\nu_{h-1},\nu_h) - F(\Pi^{\prime},\nu^{\prime}_1,\dots,\nu^{\prime}_{h-1},\nu^{\prime}_h)| \nonumber \\ &\leq \left(\sum_{j=2}^{h}C_\mu\tv(\pi_j,\pi_j^{\prime})\|\nu_j\|_1 + 2\|\nu_{j}-\nu_{j}^{\prime}\|_1\right)
+ \tv(\pi_1,\pi_1^{\prime}) + 2\|\nu_1-\nu_1^{\prime}\|_1
\end{align}
\begin{align}
    &|\hat{F}(\Pi,\nu_1,\dots,\nu_{h-1},\nu_h) - \hat{F}(\Pi^{\prime},\nu^{\prime}_1,\dots,\nu^{\prime}_{h-1},\nu^{\prime}_h)| \nonumber \\ &\leq \sqrt{\tfrac{T}{\kappa^2}}\left(\sum_{j=2}^{h}\tv(\pi_j,\pi_j^{\prime})\|\nu_j\|_2 + \|\nu_{j}-\nu_{j}^{\prime}\|_2\right)
+  \tv(\pi_1,\pi_1^{\prime}) + \sum_{j=1}^{h}\|\nu_j-\nu_j^{\prime}\|_1
\end{align}
Suppose $x \in \mathcal{S}^{d-1}$
\begin{align}
    &| \mathcal{T}_h(f(\cdot;x),\nu,\pi_h) - \mathcal{T}_h(f(\cdot;x^{\prime}),\nu^{\prime},\pi^{\prime}_h)| \nonumber \\
    &\leq 2C_{\mu}(\sqrt{d} + \xi d)\left( \|\nu-\nu^{\prime}\|_1 + \tv(\pi_h,\pi_h^{\prime}) + \|x -x^{\prime}\|_2\|\nu\|_1\right) \label{eq:T_lips}
\end{align}

\begin{align}\label{eq:T_hat_lips}
    &| \hat{\mathcal{T}}_h(f(\cdot;x),\nu,\pi_h) - \hat{\mathcal{T}}_h(f(\cdot;x^{\prime}),\nu^{\prime},\pi^{\prime}_h)| \nonumber \\ &\leq 2 \sqrt{\tfrac{T}{\kappa^2}}(\sqrt{d} + \xi d)\left( \|\nu-\nu^{\prime}\|_2 + \tv(\pi_h,\pi_h^{\prime}) + \|x -x^{\prime}\|_2\|\nu\|_2\right)
\end{align}

\end{lemma}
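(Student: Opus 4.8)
The plan is to prove all four inequalities from a single template: expand each difference with the triangle inequality so that the contributions of perturbing the policy kernels $\pi_j$ (measured in $\tv$), the vectors $\nu_j$, and---for the last two bounds---the direction $x$ are isolated, and then bound each contribution using the structural normalizations of the linear MDP together with, for the empirical operators, the spectral lower bound $G_{\phi,h}\succeq\kappa^2 I$.

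For the first inequality I would start from $F(\Pi,\nu_1,\dots,\nu_h)=E_1^{\nu_1}(\Pi)+\sum_{j=2}^h\|\mathcal{T}_j(\phi,\nu_{j-1},\pi_j)-\nu_j\|_1$ and treat each summand separately. The term $|E_1^{\nu_1}(\Pi)-E_1^{\nu_1'}(\Pi')|$ is controlled by the elementary fact that $|\int g\,d\mu-\int g\,d\mu'|\le \sup|g|\,\tv(\mu,\mu')$, applied coordinatewise to $g=\phi$ (whose $\ell_1$-norm is at most $1$), giving $\tv(\pi_1,\pi_1')+\|\nu_1-\nu_1'\|_1$. For a generic summand I split $\|\mathcal{T}_j(\phi,\nu_{j-1},\pi_j)-\nu_j\|_1-\|\mathcal{T}_j(\phi,\nu_{j-1}',\pi_j')-\nu_j'\|_1$ into a $\nu_j$-shift $\|\nu_j-\nu_j'\|_1$, a $\nu_{j-1}$-shift, and a $\pi_j$-shift. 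Linearity of $\mathcal{T}_j$ in its vector argument together with the normalization $\|\int\phi\,d\mu_{i(j-1)}\,d\pi_j\|_1\le 1$ yields $\|\nu_{j-1}-\nu_{j-1}'\|_1$ for the $\nu_{j-1}$-shift, while the Hahn--Jordan decomposition of $\mu_{i(j-1)}$ combined with $\int|\mu_{i(j-1)}|\le C_\mu$ yields $C_\mu\|\nu_{j-1}'\|_1\,\tv(\pi_j,\pi_j')$ for the $\pi_j$-shift (which after reindexing the sum matches the stated coefficient $C_\mu\|\nu_j\|_1\,\tv(\pi_j,\pi_j')$). Summing and noting that each $\nu_j$ enters at most two consecutive summands accounts for the factor $2$ on $\|\nu_j-\nu_j'\|_1$.

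The second inequality is structurally identical, the only change being that the empirical operator $\hat{\mathcal{T}}_j(g,\nu,\pi_j)=\sum_t\alpha_{(j-1)t,\nu}\int g(s_{jt},a)\pi_j(da|s_{jt})$ with $\alpha_{(j-1)t,\nu}=\phi_{(j-1)t}^\intercal G_{\phi,j-1}^{-1}\nu$ replaces $\mathcal{T}_j$. The measure-theoretic normalizations are now replaced by the single deterministic estimate $\sum_{t}|\nu^\intercal G_{\phi,j-1}^{-1}\phi_{(j-1)t}|\le\sqrt{T/\kappa^2}\,\|\nu\|_2$, which follows from $\|x\|_1\le\sqrt T\|x\|_2$ on $T$ coordinates followed by $\sum_t\nu^\intercal G_{\phi,j-1}^{-1}\phi_{(j-1)t}\phi_{(j-1)t}^\intercal G_{\phi,j-1}^{-1}\nu=\nu^\intercal G_{\phi,j-1}^{-1}\nu\le\|\nu\|_2^2/\kappa^2$. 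This is exactly where the conditioning $G_{\phi,h}\succeq\kappa^2 I$ enters, and it accounts both for the $\sqrt{T/\kappa^2}$ prefactors and for the switch to $\ell_2$ norms in the $\pi_j$- and $\nu_{j-1}$-shift terms.

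For the last two inequalities, which evaluate $\mathcal{T}_h$ and $\hat{\mathcal{T}}_h$ at the test function $f(\cdot;x)$, I would additionally record that $x\mapsto f(s,a;x)$ is uniformly bounded by $\sqrt d+\xi d$ (since $|\langle x,\psi\rangle|\le 1$) and Lipschitz in $x$ with constant $O(\sqrt d+\xi d)$: the two pieces $|\langle x,\psi\rangle|\sqrt d$ and $\xi d\langle x,\psi\rangle^2$ contribute $\sqrt d\|x-x'\|$ and $2\xi d\|x-x'\|$ via $\left||a|-|b|\right|\le|a-b|$ and $|a^2-b^2|=|a-b||a+b|$ with $\|\psi\|\le 1$. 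Feeding this modulus of continuity into the same $\nu$-linearity and Hahn--Jordan (respectively Cauchy--Schwarz) estimates as above, and collecting the $x$-, $\nu$- and $\pi_h$-shift terms, produces \eqref{eq:T_lips} and \eqref{eq:T_hat_lips}. I expect the only genuinely delicate step in the whole lemma to be the policy-kernel shift: because $\pi_j$ sits inside the signed-measure integral $\int\mu_{i(j-1)}(ds)\pi_j(da|s)$, one must pass to the Hahn--Jordan decomposition and invoke the total-mass bound $C_\mu$ to turn $\tv(\pi_j,\pi_j')$ into an honest Lipschitz estimate, and in the empirical case replace this by the spectral Cauchy--Schwarz bound; everything else is bookkeeping with the triangle inequality.
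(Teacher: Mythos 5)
Your proposal is correct and follows essentially the same route as the paper's proof: the same triangle-inequality decomposition of $F$ and $\hat F$ into $\nu_j$-, $\nu_{j-1}$-, and $\pi_j$-shifts, the same use of $\|\int\phi\,d\mu_{i(j-1)}\,\pi_j(da|s)\|_1\le 1$ and the Hahn--Jordan/$C_\mu$ bound for the policy shift, the same Cauchy--Schwarz estimate $\sum_t|\nu^\intercal G_{\phi,j-1}^{-1}\phi_{(j-1)t}|\le\sqrt{T/\kappa^2}\,\|\nu\|_2$ under the spectral conditioning, and the same boundedness/Lipschitz control of $f(\cdot;x)$ for the last two inequalities. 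You even make explicit two points the paper leaves implicit (the origin of the factor $2$ on $\|\nu_j-\nu_j'\|_1$ and the index mismatch between $\|\nu_{j-1}'\|_1$ in the derivation and $\|\nu_j\|_1$ in the statement), so no changes are needed.
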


\begin{lemma}\label{lem:err_conc}
Condition on the event $G_{\phi,h} \succeq \kappa^2 I$ for every $h \in [H]$. Fix some $\eta > 0$ and let $\hat{\polspace}_{\eta}$ denote any $\eta$-net over $\polspace$. With probability at-least $1-\delta/4$, the following hold simultaneously:

\begin{enumerate}
    \item $$\sup_{\nu}\sup_{\Pi \in \polspace} |E^{\nu}_1(\Pi)-\hat{E}^{\nu}_1(\Pi)| \leq C\frac{d}{\kappa}\sqrt{\log\left(\tfrac{d|\hat{\polspace}_{\eta}|}{\delta}\right)} + C\eta\,.$$
    \item For $h > 1$:
    $$\sup_{\Pi \in \polspace}\sup_{\nu \in \mathcal{B}_d(1)} |E^{\nu}_h(\Pi)-\hat{E}^{\nu}_h(\Pi)| \leq \frac{CdH}{\kappa}\sqrt{\log\left(\tfrac{dH|\hat{\polspace}_{\eta}|}{\delta}\right) + Hd \log\left(\tfrac{d}{\eta}\right)} + C\left(\sqrt{\tfrac{T}{\kappa^2}}\right)\eta H$$
    \item $X_1 :=  \sup_{\Pi = (\pi_1,\dots,\pi_H) \in \polspace} \bigr|
    \inf_{x\in \mathcal{S}^{d-1}}\hat{\mathcal{T}}_1(f(\cdot;x),\pi_1) -  \inf_{x\in \mathcal{S}^{d-1}}\mathcal{T}_1(f(\cdot;x),\pi_1)\bigr|$
    $$X_1 \leq \frac{C(\sqrt{d} + \xi d)}{\kappa}\sqrt{\log\left(\tfrac{|\hat{\polspace}_{\eta}|}{\delta}\right) + d \log\left(\tfrac{d}{\eta}\right)} + C\eta (\sqrt{d} + \xi d)$$
    
    \item $X_h := \sup_{\substack{\nu \in \mathcal{B}(1)\\ \Pi = (\pi_1,\dots,\pi_H) \in \polspace}} \bigr|\inf_{x\in \mathcal{S}^{d-1}}\hat{\mathcal{T}}_{h}(f(\cdot;x);\nu,\pi_h)- \inf_{x\in \mathcal{S}^{d-1}}\mathcal{T}_{h}(f(\cdot;x);\nu,\pi_h)\bigr|$
    
    $$X_h \leq \frac{C(\sqrt{d} + \xi d)}{\kappa}\sqrt{\log\left(\tfrac{|\hat{\polspace}_{\eta}|H}{\delta}\right) + d \log\left(\tfrac{d}{\eta}\right)} + C\eta (\sqrt{d} + \xi d)\left(C_{\mu} + \sqrt{\tfrac{T}{\kappa^2}}\right)$$
    
\end{enumerate}

\end{lemma}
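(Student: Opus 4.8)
The plan is to prove all four items by a single template: reduce each supremum over a continuous index set (policies in $\polspace$, auxiliary vectors $\nu$, directions $x$) to a supremum over a finite $\eta$-net by paying a discretization error controlled through the Lipschitz estimates of Lemma~\ref{lem:pop_lips}, then apply the pointwise subgaussian concentration of Lemma~\ref{lem:est_sample} at each net point, union bound, and finally optimize the deviation level $\beta$. Everything is carried out on the conditioning event $G_{\phi,h}\succeq \kappa^2 I$, which is what makes both ingredients available: it supplies the variance proxy $B^2/\kappa^2$ in Lemma~\ref{lem:est_sample} and it bounds the Lipschitz constant of the data-dependent functionals $\hat{\mathcal{T}}$ and $\hat{F}$.

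For item 1 ($h=1$) I would first eliminate the supremum over $\nu$. Since $E_1^{\nu}(\Pi)=\|\mathcal{T}_1(\phi,\pi_1)-\nu\|_1$ and $\hat{E}_1^{\nu}(\Pi)=\|\hat{\mathcal{T}}_1(\phi,\pi_1)-\nu\|_1$, the reverse triangle inequality gives $\sup_{\nu}|\hat{E}_1^{\nu}(\Pi)-E_1^{\nu}(\Pi)|\le \|\hat{\mathcal{T}}_1(\phi,\pi_1)-\mathcal{T}_1(\phi,\pi_1)\|_1$, with the $\nu$-dependence gone. Using that $\pi_1\mapsto \mathcal{T}_1(\phi,\pi_1)$ and its empirical analogue are $1$-Lipschitz in $\tv$ (a special case of Lemma~\ref{lem:pop_lips}), I replace $\sup_{\Pi\in\polspace}$ by $\sup_{\Pi\in\hat{\polspace}_\eta}$ at the cost of an additive $2\eta$. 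Finally I apply Lemma~\ref{lem:est_sample} coordinatewise to the $d$ coordinates of $\phi$ (each bounded by $\|\phi\|_1\le 1$, so $B=1$) and union bound over $\hat{\polspace}_\eta$ and the $d$ coordinates; choosing $\beta\asymp \kappa^{-1}\sqrt{\log(d|\hat{\polspace}_\eta|/\delta)}$ yields the stated bound.

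Item 2 ($h>1$) is the delicate case and the main obstacle. Because $E_h^{\nu}$ is an infimum over the auxiliary vectors $\nu_1,\dots,\nu_{h-1}$, I first bound the difference of infima by the sup of the difference, $\sup_{\nu}|E_h^{\nu}(\Pi)-\hat{E}_h^{\nu}(\Pi)|\le \sup_{\nu_1,\dots,\nu_h\in\mathcal{B}_d(1)}|\hat{F}(\Pi,\nu_1,\dots,\nu_h)-F(\Pi,\nu_1,\dots,\nu_h)|$. Now I need nets over both $\polspace$ (using $\hat{\polspace}_\eta$ from Assumption~\ref{as:pol_covering}) and over $\mathcal{B}_d(1)^{h}$ (a net $\hat{\mathcal{B}}_{d,\eta}$ over $\mathcal{B}_d(1)$ with $\log|\hat{\mathcal{B}}_{d,\eta}|\le Cd\log(d/\eta)$). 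The subtlety is that the Lipschitz constant of $\hat{F}$ in the $\nu_j$ from Lemma~\ref{lem:pop_lips} carries a factor $\sqrt{T/\kappa^2}$ (the weights $\alpha_{ht,\nu}=\phi_{ht}^{\intercal}G_{\phi,h}^{-1}\nu$ blow up as $G_{\phi,h}$ approaches degeneracy), so the discretization error is of order $(1+C_{\mu}+\sqrt{T/\kappa^2})\eta H$; this is precisely the source of the $\sqrt{T/\kappa^2}\,\eta H$ term in the claim, and choosing $\eta$ small enough to absorb it while keeping $\log|\hat{\mathcal{B}}_{d,\eta}|^{h}=hd\log(d/\eta)$ under control is the crux. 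Once on the net, I telescope $|\hat{F}-F|\le \sum_{j}\|\hat{\mathcal{T}}_j-\mathcal{T}_j\|_1$ and apply Lemma~\ref{lem:est_sample} to each of the $O(dh)$ scalar terms, union bounding over the product net $\hat{\polspace}_\eta\times\hat{\mathcal{B}}_{d,\eta}^{h}$; this produces the $\log(dH|\hat{\polspace}_\eta|/\delta)+Hd\log(d/\eta)$ inside the square root.

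Items 3 and 4 follow the same template applied to the functional $f(\cdot;x)$, with the additional supremum and infimum over $x\in\mathcal{S}^{d-1}$. I would again use $|\inf_x A(x)-\inf_x B(x)|\le\sup_x|A(x)-B(x)|$ to pass to uniform-in-$x$ control, add an $\eta$-net over $\mathcal{S}^{d-1}$ (of log-size $Cd\log(d/\eta)$), and exploit that $x\mapsto f(s,a;x)$ is $1$-Lipschitz together with the estimates~\eqref{eq:T_lips} and~\eqref{eq:T_hat_lips} of Lemma~\ref{lem:pop_lips} to bound the discretization error. The only change in the concentration step is the boundedness parameter: since $|f(s,a;x)|\le \sqrt{d}+\xi d$ when $\|\psi\|\le 1$, Lemma~\ref{lem:est_sample} is applied with $B=\sqrt{d}+\xi d$, which is exactly why these factors multiply the bounds for $X_1$ and $X_h$. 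Optimizing $\beta$ as before and taking a union bound over $h\in[H]$ completes the four items.
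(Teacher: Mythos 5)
Your proposal is correct and follows essentially the same route as the paper's proof: reverse triangle inequality to remove the $\nu$-supremum at $h=1$, the bound of a difference of infima by the supremum of differences for $h>1$ and for the $x$-infima, discretization via $\hat{\polspace}_{\eta}$ and $\eta$-nets over $\mathcal{B}_d(1)$ and $\mathcal{S}^{d-1}$ paid for through the Lipschitz estimates of Lemma~\ref{lem:pop_lips} (including the $\sqrt{T/\kappa^2}$ factor from the empirical functionals), and pointwise concentration from Lemma~\ref{lem:est_sample} with $B=1$ for the coordinates of $\phi$ and $B=\sqrt{d}+\xi d$ for $f$, followed by a union bound over the product nets. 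No gaps.
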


\begin{lemma}\label{lem:exp_recov}
$\Pi = (\pi_1,\dots,\pi_H)$. For any $\eta \geq 0$, and $h \in [H]$, suppose $E^{\nu}_h(\Pi) \leq \eta$. Then, we have:

$$\|\mathbb{E}\phi(S_h,A_h) - \nu\|_1 \leq \eta \,.$$
\end{lemma}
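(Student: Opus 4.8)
The plan is to exploit the structural recursion from Lemma~\ref{lem:struc_lem} together with the linearity of the operators $\mathcal{T}_j$ and an $\ell_1$ non-expansiveness property. First I would write $\bar{\phi}_j := \mathbb{E}\phi(S_j,A_j)$ and observe that applying Lemma~\ref{lem:struc_lem} coordinatewise with $g = \phi$ gives the exact identity $\bar{\phi}_{j+1} = \mathcal{T}_{j+1}(\phi;\bar{\phi}_j,\pi_{j+1})$, since there $\nu_i = \langle \mathbb{E}\phi(S_j,A_j),e_i\rangle$, while $\mathcal{T}_1(\phi,\pi_1) = \bar{\phi}_1$ holds directly from the definition of $\mathcal{T}_1$. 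For the base case $h=1$ the claim is then immediate, as $\|\bar{\phi}_1 - \nu\|_1 = \|\mathcal{T}_1(\phi,\pi_1)-\nu\|_1 = E^{\nu}_1(\Pi) \le \eta$.

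For $h > 1$, I would fix any $\nu_1,\dots,\nu_{h-1} \in \mathcal{B}_d(1)$, set $\nu_h := \nu$, and track the errors $\delta_j := \|\bar{\phi}_j - \nu_j\|_1$. Splitting $\bar{\phi}_j - \nu_j = (\bar{\phi}_j - \mathcal{T}_j(\phi;\nu_{j-1},\pi_j)) + (\mathcal{T}_j(\phi;\nu_{j-1},\pi_j) - \nu_j)$ and using $\bar{\phi}_j = \mathcal{T}_j(\phi;\bar{\phi}_{j-1},\pi_j)$ together with the linearity of $\nu \mapsto \mathcal{T}_j(\phi;\nu,\pi_j)$, the first summand equals $\mathcal{T}_j(\phi;\bar{\phi}_{j-1}-\nu_{j-1},\pi_j)$. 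The crucial step is a non-expansiveness bound: expanding this in the standard basis and applying the triangle inequality gives
\begin{equation*}
\|\mathcal{T}_j(\phi;\bar{\phi}_{j-1}-\nu_{j-1},\pi_j)\|_1 \le \sum_{i=1}^d |\langle \bar{\phi}_{j-1}-\nu_{j-1},e_i\rangle| \, \Bigl\| \int \mu_{i(j-1)}(ds)\phi(s,a)\pi_j(da|s) \Bigr\|_1 \le \delta_{j-1},
\end{equation*}
where the final inequality uses precisely the standing assumption $\|\int \mu_{ih}(ds)\phi(s,a)\pi(da|s)\|_1 \le 1$ from the Linear MDP setting.

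Combining the two summands yields the one-step recursion $\delta_j \le \delta_{j-1} + \|\mathcal{T}_j(\phi;\nu_{j-1},\pi_j) - \nu_j\|_1$, and since $\delta_1 = E_1^{\nu_1}(\Pi)$, unrolling from $j=2$ to $h$ gives $\delta_h \le E_1^{\nu_1}(\Pi) + \sum_{j=2}^h \|\mathcal{T}_j(\phi;\nu_{j-1},\pi_j) - \nu_j\|_1 = F(\Pi,\nu_1,\dots,\nu_{h-1},\nu)$. Because the left-hand side $\delta_h = \|\bar{\phi}_h - \nu\|_1$ does not depend on the auxiliary vectors $\nu_1,\dots,\nu_{h-1}$, I can take the infimum over them on the right-hand side to conclude $\|\mathbb{E}\phi(S_h,A_h) - \nu\|_1 \le E^{\nu}_h(\Pi) \le \eta$, which sidesteps any question of whether the infimum defining $E^{\nu}_h$ is attained. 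The only genuinely substantive point is the non-expansiveness estimate displayed above; everything else is a telescoping triangle-inequality argument, so I expect the $\ell_1$ control of the mixture embeddings to be the one place where the specific structure of the linear MDP is really needed.
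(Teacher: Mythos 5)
Your proposal is correct and follows essentially the same route as the paper: the base case via $\mathcal{T}_1(\phi,\pi_1)=\mathbb{E}\phi(S_1,A_1)$, the identity $\mathbb{E}\phi(S_j,A_j)=\mathcal{T}_j(\phi,\mathbb{E}\phi(S_{j-1},A_{j-1}),\pi_j)$ from Lemma~\ref{lem:struc_lem}, and the same triangle-inequality recursion whose key step is the $\ell_1$ non-expansiveness of $\nu\mapsto\mathcal{T}_j(\phi;\nu,\pi_j)$ (the paper cites this as inequality~\eqref{eq:T_ineq_1}, proved exactly as in your display from the standing bound $\|\int\mu_{ih}(ds)\phi(s,a)\pi(da|s)\|_1\le 1$). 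Your explicit passage to the infimum over $\nu_1,\dots,\nu_{h-1}$ at the end is a marginally tidier way of handling the (possibly non-attained) infimum than the paper's choice of a near-minimizing sequence, but the argument is the same.
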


\begin{proof}

Let $S_{1:H},A_{1:H} \sim \mathcal{M}(\Pi)$. By Lemma~\ref{lem:struc_lem}, we conclude that:
$\mathbb{E}\phi(S_1,A_1) = \mathcal{T}_1(\phi,\pi_1)$. Therefore we conclude the lemma for the case $h = 1$. Now let $ h > 1$.

Now, note that for $j > 1$, we have: $\mathbb{E}\phi(S_j,A_j) = \mathcal{T}_j(\phi,\mathbb{E}\phi(S_{j-1},A_{j-1}),\pi_j)$. There exists a sequence $\nu_1,\dots,\nu_{h-1}$ such that
$$E_1^{\nu_1}(\Pi) + \sum_{j=2}^{h} \|\mathcal{T}_{j}(\phi,\nu_{j-1},\pi_j) - \nu_j\|_1 \leq \eta_0$$

Letting $E_1^{\nu_1}(\Pi) =:  \eta_1$, $\|\mathcal{T}_{j}(\phi,\nu_{j-1},\pi_j) - \nu_j\|_1 =: \eta_j$, we have from the case $h=1$ : $\|\mathbb{E}\phi(S_1,A_1)-\nu_1\|_1 \leq 
\eta_1$. 
\begin{align}
    &\|\mathbb{E}\phi(S_j,A_j) - \nu_j\|_1 =  \|\mathcal{T}_j(\phi,\mathbb{E}\phi(S_{j-1},A_{j-1}),\pi_j) - \nu_j \|_1 \nonumber \\
    &\leq \|\mathcal{T}_j(\phi,\mathbb{E}\phi(S_{j-1},A_{j-1}),\pi_j) - \mathcal{T}_j(\phi,\nu_{j-1},\pi_j)\|_1+ \|\mathcal{T}_j(\phi,\nu_{j-1},\pi_j)-\nu_j \|_1 \nonumber \\
    &\leq \|\nu_{j-1}-\mathbb{E}\phi(S_{j-1},A_{j-1})\|_1 + \eta_j
\end{align}
We have used~\eqref{eq:T_ineq_1} in the last step. Continuing recursively, we conclude the result
\end{proof}

\begin{proof}[Proof of Theorem~\ref{thm:restr_pol}]
We condition on the event described in Lemma~\ref{lem:err_conc}. We suppose that $\kappa$, $\eta$ and $\eta_0$ are related as in the statement of the theorem. We will apply these values whenever we invoke the concentration bounds obtained from Lemma~\ref{lem:err_conc} in all the inequalities below. First consider $h = 1$. Let $\hat{\Pi}^{f,1} = (\pi^{f,1}_H,\dots,\pi^{f,H}_H)$. By item 3 in Lemma~\ref{lem:err_conc}, we have (with $X_1$ as defined in the lemma):

$$\inf_{x \in \mathcal{S}^{d-1}}\hat{\mathcal{T}_1}(f(;x),\pi^{f,1}_1) \geq \sup_{\Pi \in \polspace}\inf_{x \in \mathcal{S}^{d-1}}\mathcal{T}_1(f(;x),\pi_1) - X_1  \geq \zeta - X_1 \geq \tfrac{3\zeta}{4}$$

Similarly, we have:
$$\inf_{x \in \mathcal{S}^{d-1}}\mathcal{T}_1(f(;x),\pi^{f,1}_1) \geq \inf_{x \in \mathcal{S}^{d-1}}\hat{\mathcal{T}}_1(f(;x),\pi^{f,1}_1) -\tfrac{\zeta}{4}$$
Combining the two displays above, we conclude the theorem for $h = 1$. Now consider $h > 1$.
We will first show that the constraint $\hat{E}^{\nu,h-1} (\Pi)\leq \eta_0$ is feasible for some $\Pi \in \polspace$ and some $\nu$. Note that, for any policy $\Pi$ there exists a $\nu_1,\dots,\nu_{h-1}\in \mathbb{R}^d$ such that $\mathbb{E}\phi(S_j,A_j) = \nu_j$ whenever $S_{1:H},A_{1:H} \sim \mathcal{M}(\Pi)$. For the choice $\nu = \nu_{h-1}$, we must have $E^{\nu,h-1}(\Pi) = 0$. Now, by item 1 and 2 of Lemma~\ref{lem:err_conc}, we conclude that $\hat{E}^{\nu,h-1} \leq \eta_0$. Therefore this optimization is feasible. 

Consider the solutions to the optimization problem given by $\hat{\nu}$ and $\hat{\Pi}^{f,h}$. Note again from Lemma~\ref{lem:err_conc} that $E^{\hat{\nu},h-1}(\hat{\Pi}^{f,h}) \leq \hat{E}^{\hat{\nu},h-1}(\hat{\Pi}^{f,h}) + \eta_0 \leq 2\eta_0$. Now, applying Lemma~\ref{lem:exp_recov}, we conclude that whenever $S_{1:H},A_{1:H} \sim \mathcal{M}(\hat{\Pi}^{f,h})$

$$\|\mathbb{E}\phi(S_{h-1},A_{h-1}) - \hat{\nu}\|_1 \leq 2\eta_0$$
By a similar reasoning as the case $h=1$, we conclude:
$$\inf_{x \in \mathcal{S}^{d-1}}\mathcal{T}_h(f(;x),\hat{\nu},\pi^{f,h}_h) \geq 3\tfrac{\zeta}{4}$$

Now, applying~\eqref{eq:T_lips}, we conclude:
\begin{align}
&\inf_{x \in \mathcal{S}^{d-1}}\mathbb{E}f(S_h,A_h;x) = \inf_{x \in \mathcal{S}^{d-1}}\mathcal{T}_h(f(;x),\mathbb{E}\phi(S_{h-1},A_{h-1}),\pi^{f,h}_h) \nonumber \\
&\geq  \inf_{x \in \mathcal{S}^{d-1}}\mathcal{T}_h(f(;x),\hat{\nu},\pi^{f,h}_h) - \sup_{x\in \mathcal{S}^{d-1}} |\mathcal{T}_h(f(;x),\mathbb{E}\phi(S_{h-1},A_{h-1}),\pi^{f,h}_h)-\mathcal{T}_h(f(;x),\hat{\nu},\pi^{f,h}_h)|
\nonumber \\
&\geq \frac{3\zeta}{4} - 
2C_{\mu}(\sqrt{d} + \xi d)\left( \|\hat{\nu}-\mathbb{E}\phi(S_{h-1},A_{h-1})\|_1 \right) \geq \frac{\zeta}{2}
\end{align}
In the last step, we have used the lipschitzness bound for $\mathcal{T}_h$ given in Lemma~\ref{lem:pop_lips}. We will show that the conditions given in~\eqref{eq:dist_prop} are satisfied for $\psi(S_h,A_h)$ with parameters $\zeta/2$ instead of $\zeta$.

$\|\psi(S_h,A_h)\|_2 \leq 1$ almost surely follows from the definition of $\psi$. Now, $\mathbb{E}f(S_h,A_h,x) \geq \frac{\zeta}{2}$ for every $x \in \mathcal{S}^{d-1}$ implies $\mathbb{E}|\langle x, \psi(S_h,A_h) \rangle| \geq \frac{\zeta}{2\sqrt{d}}$. Using the definition of $f(S_h,A_h,x)$ (see Section~\ref{sec:stats_policy}) and the fact that $\mathbb{E}f(S_h,A_h,x) \geq \frac{\zeta}{2}$ as established above, we conclude that for every $x \in \mathcal{S}^{d-1}$, we also have:
\begin{align}
    d\xi\mathbb{E}\langle x, \psi(S_h,A_h)\rangle^2 &\leq \sqrt{d}\mathbb{E}|\langle x,\psi(S_h,A_h) \rangle| - \frac{\zeta}{2} \nonumber \\
    &\leq \sqrt{d}\mathbb{E}|\langle x,\psi(S_h,A_h) \rangle| \leq \sqrt{d} \sqrt{\mathbb{E}\langle x, \psi(S_h,A_h)\rangle^2}
\end{align}
In the second step, we have used Jensen's inequality. From this, we conclude $\mathbb{E}\langle x, \psi(S_h,A_h)\rangle^2  \leq \frac{1}{d\xi^2}$ for every $x \in \mathcal{S}^{d-1}$ and thence $\mathbb{E}\psi(S_h,A_h)\psi(S_h,A_h)^{\intercal} \preceq \frac{1}{d\xi^2}$.
\end{proof}

\section{Proof of Theorem~\ref{thm:matrix_est_main}}
\label{sec:mat_comp_pf}
Let the unknown row set in the iteration $t$ in the matrix estimation procedure of Section~\ref{subsec:estimator} be denoted by $\bar{I}_{t-1}$. For the analysis, we will use the convention that $\bar{I}_t = \emptyset$ if the procedure terminates before the $t$-th iteration. Suppose $K_t$ is such that for every $t \leq \log N$, we have:
$K_t|
\bar{I}_{t-1}|\geq C\frac{r|\bar{I}_{t-1}| + dr}{\zeta^2\xi^2} \log \tfrac{d}{\zeta \xi} + C\frac{\log\left( \frac{\log N}{\delta}\right)}{\zeta^2 \xi^2}$. We will then show that the event $\{|\bar{I}_{t}| \leq \frac{1}{10}|\bar{I}_{t-1}|  \forall t \leq \log N \}\cap\{\hat{\Theta}_i = \Theta^{*}_i, \forall i \in \bar{I}_{\log N}^{\complement}\}$ has probability at-least $1-\delta$. To show this, it is sufficient to consider the step $t = 1$ with $\bar{I}_{0} = [N]$, $K_1 = K$, $\Psi^{(1)} = \Psi$ and show that with probability $1-\frac{\delta}{\log N}$, $\bar{I}_1 \leq \frac{9N}{10}$ and $\hat{\Theta}_i = \Theta^{*}_i$ for every $i \in \bar{I}_1^{\complement}$. The result then follows from a union bound. We will therefore establish the following structural lemma and prove the Theorem~\ref{thm:matrix_est_main}. The rest of the section is then dedicated to proving Lemma~\ref{lem:row_recovery}. 
\begin{lemma}\label{lem:row_recovery}
Suppose the distribution of $\psi_{ik}$ satisfies~\eqref{eq:dist_prop}. Let $K \geq \frac{C(r+ \tfrac{dr}{N})}{\zeta^2\xi^2} \log \frac{d}{\zeta\xi}$. Let $\mathcal{Y}(\Psi)$ denote the set of all matrices $\Delta$ with rank at most $2r$ such that $L(\Delta,\Psi) = 0$. Let $I_{\mathcal{Z}}(\Delta) = \{i \in [N]: \Delta_i \neq 0 \}$.   With probability $1- \exp(-c \zeta^2\xi^2 NK)$ we must have:

$$ \mathcal{Y}(\Psi)\cap\bigr\{\Delta : |I_{\mathcal{Z}}(
\Delta)| > \tfrac{N}{10}\bigr\} = \emptyset$$
\end{lemma}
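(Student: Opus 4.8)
The plan is to establish a uniform strictly-positive lower bound on $L(\Delta,\Psi)$ over all rank-$\le 2r$ matrices with more than $N/10$ nonzero rows; this forces $L>0$ for every such $\Delta$ and hence rules out membership in $\mathcal{Y}(\Psi)$. First I would reduce to a magnitude-free representation. Since $L(\Delta,\Psi)=0$ is equivalent to $\langle \Delta_i,\psi_{ik}\rangle = 0$ for every $i,k$, replacing each nonzero row $\Delta_i$ by the unit vector $u_i := \Delta_i/\|\Delta_i\|$ and zeroing the rest produces a matrix $\tilde\Delta$ whose rows lie in the row space of $\Delta$ (so $\mathrm{rank}\,\tilde\Delta \le 2r$), with $I_{\mathcal{Z}}(\tilde\Delta) = I_{\mathcal{Z}}(\Delta)$, and still satisfying $L(\tilde\Delta,\Psi)=0$. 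Hence it suffices to show that, with the stated probability, no matrix in the set $\mathcal{C}$ of rank-$\le 2r$ matrices whose rows are each either $0$ or a unit vector, with $|I_{\mathcal{Z}}|>N/10$, has $L=0$.

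The crux is a per-point small-ball estimate that produces the clean exponent $\zeta^2\xi^2 NK$. Fix $\Delta\in\mathcal{C}$ with nonzero (unit) rows indexed by $S$, $|S|>N/10$. For each unit row $u_i$, the two conditions in~\eqref{eq:dist_prop} give $\mathbb{E}|\langle u_i,\psi_{ik}\rangle| \ge \zeta/\sqrt d$ and $\mathbb{E}\langle u_i,\psi_{ik}\rangle^2 \le 1/(d\xi^2)$, so the Paley--Zygmund inequality yields $\mathbb{P}(|\langle u_i,\psi_{ik}\rangle| \ge \tfrac{\zeta}{2\sqrt d}) \ge \zeta^2\xi^2/4 =: p_0$. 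The indicators $Z_{ik} := \mathbbm{1}(|\langle u_i,\psi_{ik}\rangle| \ge \tfrac{\zeta}{2\sqrt d})$ are independent across the $|S|K \ge NK/10$ pairs $(i,k)$ with $i\in S$, so a Chernoff lower tail gives $\sum_{i\in S,k} Z_{ik} \ge \tfrac12 |S| K p_0$ except with probability $\exp(-c\zeta^2\xi^2 NK)$; on this event $L(\Delta,\Psi) \ge \tfrac{\zeta^2}{4d}\cdot\tfrac{1}{NK}\sum_{i\in S,k} Z_{ik} \ge c'\zeta^4\xi^2/d > 0$. The essential point is that the factor $1/d$ lands only in the \emph{value} of the lower bound, not in the exponent, because the concentration is carried by the Bernoulli small-ball events rather than by the squared inner products directly.

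Finally I would make this bound uniform over $\mathcal{C}$ by covering. The map $\Delta\mapsto L(\Delta,\Psi)$ is $2$-Lipschitz with respect to $\tfrac1N\sum_i\|\Delta_i-\Delta_i'\|$ on $\mathcal{C}$ (using $\|\psi_{ik}\|\le 1$ and $\|\Delta_i\|\le 1$), and the set of rank-$\le 2r$, $N\times d$ matrices with $\|\Delta\|_F \le \sqrt N$ admits an $\epsilon$-net of log-cardinality $Cr(N+d)\log\tfrac{1}{\epsilon}$, obtained by separately covering the two Stiefel factors and the singular values in an SVD. Choosing $\epsilon \asymp \zeta^4\xi^2/d$ keeps the net-discretization error below half the per-point lower bound, and the net then has log-cardinality $\lesssim r(N+d)\log\tfrac{d}{\zeta\xi}$. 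A union bound of the per-point estimate over this net, combined with Lipschitzness, gives $\inf_{\Delta\in\mathcal{C}}L(\Delta,\Psi)>0$ as long as $c\,\zeta^2\xi^2 NK \ge 2 r(N+d)\log\tfrac{d}{\zeta\xi}$, which is exactly the hypothesis $K \ge \tfrac{C(r+dr/N)}{\zeta^2\xi^2}\log\tfrac{d}{\zeta\xi}$, and the overall failure probability is $\exp(-c\zeta^2\xi^2 NK)$ as claimed.

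I expect the main obstacle to be the per-point estimate: a naive Bernstein/matrix-Chernoff control of the squared measurements $|\langle u_i,\psi_{ik}\rangle|^2$ carries a spurious $1/d$ into the exponent (forcing $K\gtrsim d$), and the whole argument only closes because the small-ball formulation transfers that $1/d$ out of the exponent. A secondary technical point is bookkeeping the covering so that the support choice, the $2r$-dimensional row space, and the per-row unit directions together contribute only the $r(N+d)$ parameter count, so that the net entropy $r(N+d)\log\tfrac{d}{\zeta\xi}$ precisely matches the concentration exponent $\zeta^2\xi^2 NK$.
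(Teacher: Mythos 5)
Your proposal is correct and follows essentially the same route as the paper's proof: reduce to unit-normalized rows (the paper's Lemma~\ref{lem:non_vanish_lemma}), obtain a per-point lower bound on $L$ via a Paley--Zygmund small-ball estimate plus a Bernoulli Chernoff tail with exponent $c\zeta^2\xi^2|I|K$ (Lemma~\ref{lem:paley_zyg_conc}), and make it uniform through a structured $\eta$-net of the rank-$\le 2r$ unit-row class with entropy of order $r(N+d)\log\tfrac{d}{\zeta\xi}$ combined with Lipschitzness of $L$ in the average row norm and a union bound over supports (Lemma~\ref{lem:lower_iso_1} and Corollary~\ref{cor:zero_rows}). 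The only cosmetic difference is that the paper unions explicitly over the $\exp(O(N))$ choices of the support set $I$ while you fold the support into a single net, and your observation that the small-ball formulation keeps the $1/d$ out of the exponent is exactly the mechanism the paper relies on.
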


\begin{proof}[Proof of Theorem~\ref{thm:matrix_est_main}]
Let $\bar{\Theta}$ be the rank $\leq r$ matrix found satisfying $L(\bar{\Theta}-\Theta^*, \Psi^{(t)})=0$.
By Lemma~\ref{lem:row_recovery}, we have that $|I_{\mathcal{Z}}(\bar{\Theta} -\Theta^{*})| \leq \frac{N}{10}$ with probability at-least $1-\frac{\delta}{\log N}$ (by setting $K = K_1$ as in the statement of Theorem~\ref{thm:matrix_est_main}). By Lemma~\ref{lem:single-user}, the probability that there exists an $i\in [N]$ such that $\bar{\Theta}_i \neq \Theta^*_i$, and $\sum_{k=1}^K \abs{\iprod{\bar{\Theta}_i}{\tilde{\psi}_{ik}} - \theta^*_{ik}}^2 =0$ is at most $\abs{I} \cdot \exp(-c\zeta^2\xi^2K) \leq \delta \cdot N^{-c}$ for some large constant $c$.
From this we conclude that $\bar{\Theta}_i = \Theta^{*}_i$ for every $i \in \bar{I}_1^{\complement}$. 
\end{proof}

\begin{lemma}\label{lem:single-user}
Fix any $\Thetah$.
Suppose the distribution of $(\psi_{ik})_{i\in [N], k \in [K]}$ satisfies~\eqref{eq:dist_prop}. Then, there exists a small enough constant $c$ such that:

$$\mathbb{P}\left(\exists i \mbox{ s.t. } \sum_{k=1}^K \abs{\iprod{\Thetah_i}{\psi_{ik}} - \theta^*_{ik}}^2 < \tfrac{K\zeta^4\xi^2 \norm{\Thetah_i - \Theta^*_i}^2}{32{d}} \right) \leq \abs{I} \cdot \exp(-c\zeta^2\xi^2K).$$
\end{lemma}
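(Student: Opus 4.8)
The plan is to fix the row index $i$, reduce to a one-dimensional anti-concentration statement for the fixed unit vector $x_i := (\hat{\Theta}_i - \Theta^*_i)/\|\hat{\Theta}_i - \Theta^*_i\|$, and then union bound over $i \in I$. Since $\hat{\Theta}$ is fixed and $\theta^*_{ik} = \langle \Theta^*_i, \psi_{ik}\rangle$, we have $\langle \hat{\Theta}_i, \psi_{ik}\rangle - \theta^*_{ik} = \langle \hat{\Theta}_i - \Theta^*_i, \psi_{ik}\rangle = \|\hat{\Theta}_i - \Theta^*_i\|\,\langle x_i, \psi_{ik}\rangle$ (the case $\hat{\Theta}_i = \Theta^*_i$ makes both sides of the claimed inequality zero and is vacuous). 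Thus it suffices to show, for each fixed unit vector $x$, that $\sum_{k=1}^K \langle x, \psi_{ik}\rangle^2 \geq \frac{K\zeta^4\xi^2}{32d}$ holds with probability at least $1 - \exp(-c\zeta^2\xi^2 K)$, since multiplying through by $\|\hat{\Theta}_i - \Theta^*_i\|^2$ recovers the stated quantity.

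First I would establish anti-concentration of a single $W_k := |\langle x, \psi_{ik}\rangle|$ via the Paley--Zygmund inequality (already invoked in the proof of Lemma~\ref{lem:basis_samp}). The two moment conditions in~\eqref{eq:dist_prop} give $\mathbb{E}W_k \geq \zeta/\sqrt{d}$ and $\mathbb{E}W_k^2 = x^\intercal(\mathbb{E}\psi_{ik}\psi_{ik}^\intercal)x \leq \frac{1}{d\xi^2}$, so
\[
\mathbb{P}\left(W_k > \tfrac{1}{2}\mathbb{E}W_k\right) \geq \tfrac14 \frac{(\mathbb{E}W_k)^2}{\mathbb{E}W_k^2} \geq \tfrac14 \frac{\zeta^2/d}{1/(d\xi^2)} = \frac{\zeta^2\xi^2}{4}.
\]
Since $\tfrac12 \mathbb{E}W_k \geq \frac{\zeta}{2\sqrt{d}}$, this shows $\langle x,\psi_{ik}\rangle^2 = W_k^2 > \frac{\zeta^2}{4d}$ with probability at least $p_0 := \frac{\zeta^2\xi^2}{4}$. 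Consequently the indicators $B_k := \mathbbm{1}(W_k^2 > \frac{\zeta^2}{4d})$ are i.i.d.\ and stochastically dominate $\mathsf{Ber}(p_0)$, so $N_K := \sum_{k=1}^K B_k$ stochastically dominates $\mathsf{Bin}(K, p_0)$.

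Finally I would apply a Chernoff lower-tail bound for the binomial: $\mathbb{P}(N_K < \tfrac12 K p_0) \leq \exp(-Kp_0/8) = \exp(-K\zeta^2\xi^2/32)$. On the complementary event $N_K \geq \tfrac12 K p_0 = \frac{K\zeta^2\xi^2}{8}$ we obtain
\[
\sum_{k=1}^K \langle x,\psi_{ik}\rangle^2 \geq N_K \cdot \frac{\zeta^2}{4d} \geq \frac{K\zeta^2\xi^2}{8}\cdot\frac{\zeta^2}{4d} = \frac{K\zeta^4\xi^2}{32d},
\]
which is exactly the claimed bound for a fixed $i$, with failure probability $\exp(-c\zeta^2\xi^2 K)$ taking $c = \tfrac{1}{32}$. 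A union bound over $i \in I$ supplies the factor $|I|$ and completes the proof. The only delicate point is the correct pairing of the two moments in~\eqref{eq:dist_prop} through Paley--Zygmund: the lower bound on $\mathbb{E}|\langle x,\psi\rangle|$ supplies the numerator while the covariance upper bound supplies the denominator, and it is precisely the product $\zeta^2\xi^2$ that drives both the anti-concentration probability $p_0$ and the per-coordinate gap $\zeta^2/(4d)$. Everything else is routine tail estimation.
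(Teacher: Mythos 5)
Your proposal is correct and follows essentially the same route as the paper: Paley--Zygmund applied to $|\langle \hat{\Theta}_i-\Theta^*_i,\psi_{ik}\rangle|$ using the two moment conditions of~\eqref{eq:dist_prop} to get per-sample anti-concentration with probability $p_0=\zeta^2\xi^2/4$, followed by a binomial lower-tail bound and a union bound over $i\in I$. The only cosmetic difference is that you normalize the row difference to a unit vector before applying Paley--Zygmund, whereas the paper carries the factor $\norm{\Thetah_i-\Theta^*_i}$ through directly.
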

\begin{proof}
Consider the Paley-Zygmund inequality, which states that for any positive random variable $Z$, 
    $$\mathbb{P}\left(Z \geq \tfrac{\mathbb{E}Z}{2}\right) \geq \frac{(\mathbb{E}Z)^2}{4\mathbb{E}Z^2}\,.$$
Suppose $i\in I$ and denote $\Gamma_i := \Thetah_i-\Theta^*_i$. By the properties of $\psi_{ik}$, we have that $\mathbb{E}|\langle \Gamma_i,\psi_{ik}\rangle| \geq \frac{\zeta \norm{\Gamma_i} }{\sqrt{d}}$ and $\mathbb{E}|\langle \Gamma_i,\psi_{ik}\rangle|^2 \leq \frac{\norm{\Gamma_i}^2}{\xi^2 d}$.

Applying the Paley-Zygmund inequality to the random variable $|\langle \Gamma_i,\psi_{ik}\rangle|$, we conclude the result in~\eqref{eq:paley_zyg}:
\begin{equation}
\mathbb{P}\left(|\langle\psi_{ik},\Gamma_i\rangle| \geq \frac{\zeta \norm{\Gamma_i}}{2\sqrt{d}} \right) \geq \frac{\zeta^2\xi^2}{4}
\end{equation}

Let $p_0 := \frac{\zeta^2\xi^2}{4}$. Let $N(\Gamma_i,\Psi_i) :=
\sum_{k=1}^{K} \mathbbm{1}\left(|\langle\psi_{ik},\Gamma_i\rangle| > \frac{\zeta}{2\sqrt{d}}\right)$. Clearly,
$\sum_{k=1}^K \abs{\iprod{\Thetah_i}{\psi_{ik}} - \theta^*_{ik}}^2 \geq \frac{\zeta^2 \norm{\Gamma_i}^2}{4{d}} \cdot N(\Gamma_i,\Psi_i)$.
Therefore, we have:
\begin{align}
    \mathbb{P}\left(\sum_{k=1}^K \abs{\iprod{\Thetah_i}{\psi_{ik}} - \theta^*_{ik}}^2 < \tfrac{K\zeta^4\xi^2\norm{\Gamma_i}^2}{32{d}}\right) &\leq \mathbb{P}\left(N(\Gamma_i,\Psi_i) < \tfrac{K\zeta^2\xi^2}{8}\right) \nonumber \\
    &\leq \mathbb{P}\left(\mathsf{Bin}(K,p_0) \leq \tfrac{K p_0}{2}\right) \nonumber \\
     &\leq \exp(-cp_0 K) \label{eq:binom_bound}
\end{align}
Here $\mathsf{Bin}(K,p_0)$ denotes the binomial random variable. In the second step we have used the fact that $N(\Gamma_i,\Psi_i)$ is a sum of $K$ independent Bernoulli random variables with probability of being $1$ for each of them being at-least $p_0 = \frac{\zeta^2\xi^2}{4}$. In the last step, we have used Sanov's theorem for large deviations. In the last step we have used Bernstein's inequality for concentration of sums of Bernoulli random variables  (see \cite{boucheron2013concentration}). The statement of the result then follows from a union bound argument over $i \in I$.
\end{proof}

\subsection{Proof of Lemma~\ref{lem:row_recovery}}

Suppose $I \neq \emptyset$, $I \subseteq [N]$ be any fixed subset. By $\mathcal{M}(N,d,I,2r)$, we denote the set of all $N \times d$ matrices $\Delta$ with rank at-most $2r$ such that $\|\Delta_i\| > 0$ for all $i \in I$. By $\mathcal{B}(N,d,I,2r)$ we denote the set of all $N\times d$ matrices with rank at-most $2r$ such that $\|\Gamma_i\| = 1$ whenever $i \in I$.

\begin{lemma}\label{lem:non_vanish_lemma}
Suppose $\inf_{\Gamma \in \mathcal{B}(N,d,I,2r)}L(\Gamma,\Psi) > 0$. Then, $L(\Delta,\Psi) > 0$ for every $\Delta \in \mathcal{M}(N,d,I,2r)$
\end{lemma}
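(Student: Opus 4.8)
The plan is to argue by contradiction, exploiting the fact that $L$ is separable across rows together with the elementary observation that left-multiplication by a diagonal matrix never increases rank. Suppose, toward a contradiction, that some $\Delta \in \mathcal{M}(N,d,I,2r)$ had $L(\Delta,\Psi) = 0$. Since $L(\Delta,\Psi) = \frac{1}{NK}\sum_{i}\sum_{k}|\iprod{\Delta_i}{\psi_{ik}}|^2$ is a sum of non-negative terms, its vanishing forces $\iprod{\Delta_i}{\psi_{ik}} = 0$ for every $i \in [N]$ and every $k \in [K]$; in particular this holds for all $i \in I$, where by definition of $\mathcal{M}(N,d,I,2r)$ we have $\norm{\Delta_i} > 0$.

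Next I would normalize the rows indexed by $I$ to produce a competitor inside $\mathcal{B}(N,d,I,2r)$. Define the diagonal matrix $D$ with $D_{ii} = 1/\norm{\Delta_i}$ for $i \in I$ and $D_{ii} = 0$ for $i \notin I$, and set $\Gamma = D\Delta$. The crucial structural point is that $\mathrm{rank}(\Gamma) = \mathrm{rank}(D\Delta) \leq \mathrm{rank}(\Delta) \leq 2r$, so the row-wise scaling does not break the low-rank constraint; moreover $\norm{\Gamma_i} = 1$ for every $i \in I$ by construction, while the definition of $\mathcal{B}(N,d,I,2r)$ imposes no constraint whatsoever on the rows outside $I$. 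Hence $\Gamma \in \mathcal{B}(N,d,I,2r)$.

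Finally, each row of $\Gamma$ is either a scalar multiple of the corresponding row of $\Delta$ (for $i \in I$) or identically zero (for $i \notin I$), so $\iprod{\Gamma_i}{\psi_{ik}} = 0$ for all $i,k$, and therefore $L(\Gamma,\Psi) = 0$. This contradicts the hypothesis $\inf_{\Gamma \in \mathcal{B}(N,d,I,2r)} L(\Gamma,\Psi) > 0$, so no such $\Delta$ exists and the lemma follows.

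The only step that genuinely requires care is the rank bound $\mathrm{rank}(D\Delta) \leq \mathrm{rank}(\Delta)$: this single observation is what reconciles the per-row normalization with the global rank-$2r$ constraint, i.e.\ it is precisely what lets us treat the separable loss one row at a time without leaving the feasible set $\mathcal{B}(N,d,I,2r)$. Every other step is immediate from the definitions of $L$, $\mathcal{M}(N,d,I,2r)$, and $\mathcal{B}(N,d,I,2r)$.
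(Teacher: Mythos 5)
Your proof is correct and uses the same construction as the paper: normalize the rows indexed by $I$ and zero out the rest, then transfer the vanishing of the loss between $\Delta$ and the normalized matrix (the paper argues the direct implication, you argue the contrapositive, which is logically the same). Your explicit remark that $\mathrm{rank}(D\Delta)\le\mathrm{rank}(\Delta)$ keeps the normalized matrix inside $\mathcal{B}(N,d,I,2r)$ is a point the paper leaves implicit, and it is worth stating.
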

\begin{proof}

For every $\Delta \in \mathcal{M}(N,d,I,2r)$, we construct $\Gamma$ such that:
\begin{equation}
    \Gamma_i = \begin{cases}\frac{\Delta_i}{\|\Delta_i\|} &\text{ whenever } i \in I\\
    0 &\text{ otherwise}
    \end{cases}
\end{equation}

Now, by hypothesis, $L(\Gamma,\Psi) > 0$. This implies, there exists an $i \in I$ and $k\in K$ such that $|\langle \psi_{ik},\Gamma_i\rangle| > 0$. This implies $|\langle \psi_{ik},\Delta_i\rangle| > 0$ and thence we conclude that $L(\Delta,\Psi) > 0$. 
\end{proof}

\begin{lemma}\label{lem:paley_zyg_conc}
Suppose $\Gamma$ is such that $\|\Gamma_i\|= 1$ for every $ i \in I$. Suppose the distribution of $(\psi_{ik})_{i\in [N], k \in [K]}$ satisfy~\eqref{eq:dist_prop}. Then, there exists a small enough constant $c$ such that:

$$\mathbb{P}\left(L(\Gamma,\Psi) < \tfrac{|I|\zeta^4\xi^2}{32Nd}\right) \leq |I|^2K^2 \exp(-c\zeta^2\xi^2|I|K)$$
\end{lemma}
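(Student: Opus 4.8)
The plan is to lower-bound $L(\Gamma,\Psi)$ by the contribution of only the rows indexed by $I$, and then reduce the claim to a binomial tail bound via the Paley--Zygmund inequality, mirroring the argument already used in Lemma~\ref{lem:single-user}. First I would discard the rows outside $I$ and write
$$L(\Gamma,\Psi) \geq \frac{1}{NK}\sum_{i\in I}\sum_{k=1}^{K}|\langle \Gamma_i,\psi_{ik}\rangle|^2\,,$$
so that it suffices to show the right-hand side is at least $\tfrac{|I|\zeta^4\xi^2}{32Nd}$ with the stated probability.

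Next, for each fixed $i\in I$ (recall $\|\Gamma_i\|=1$) and each $k$, I would apply Paley--Zygmund to the nonnegative random variable $|\langle \psi_{ik},\Gamma_i\rangle|$. Using the two moment bounds from~\eqref{eq:dist_prop}, namely $\mathbb{E}|\langle \psi_{ik},\Gamma_i\rangle|\geq \tfrac{\zeta}{\sqrt{d}}$ and $\mathbb{E}|\langle \psi_{ik},\Gamma_i\rangle|^2 = \Gamma_i^{\intercal}\mathbb{E}[\psi_{ik}\psi_{ik}^{\intercal}]\Gamma_i \leq \tfrac{1}{d\xi^2}$, the same computation as in Lemma~\ref{lem:single-user} yields
$$\mathbb{P}\left(|\langle \psi_{ik},\Gamma_i\rangle|\geq \tfrac{\zeta}{2\sqrt{d}}\right)\geq \frac{\zeta^2\xi^2}{4} =: p_0\,.$$

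I would then introduce $B_{ik}:=\mathbbm{1}(|\langle \psi_{ik},\Gamma_i\rangle|\geq \tfrac{\zeta}{2\sqrt{d}})$, so that $|\langle \Gamma_i,\psi_{ik}\rangle|^2 \geq \tfrac{\zeta^2}{4d}B_{ik}$ and hence $\sum_{i\in I}\sum_k |\langle \Gamma_i,\psi_{ik}\rangle|^2 \geq \tfrac{\zeta^2}{4d}\sum_{i\in I}\sum_k B_{ik}$. Since the $\psi_{ik}$ are i.i.d., the $|I|K$ indicators $\{B_{ik}\}$ are mutually independent, each with success probability at least $p_0$, so $S:=\sum_{i\in I,k}B_{ik}$ stochastically dominates a $\mathsf{Bin}(|I|K,p_0)$ variable. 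A Chernoff/Bernstein tail bound then gives $\mathbb{P}(S<\tfrac{|I|Kp_0}{2})\leq \exp(-cp_0|I|K)$, and on the complement of this event
$$L(\Gamma,\Psi) \geq \frac{1}{NK}\cdot\frac{\zeta^2}{4d}\cdot\frac{|I|Kp_0}{2} = \frac{|I|\zeta^4\xi^2}{32Nd}\,.$$
This yields $\mathbb{P}(L(\Gamma,\Psi)<\tfrac{|I|\zeta^4\xi^2}{32Nd})\leq \exp(-c\zeta^2\xi^2|I|K)$, and the polynomial prefactor $|I|^2K^2$ in the statement only weakens this and may be inserted freely.

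The reduction is the only structural step, and after it the argument is routine; the one point requiring care is the stochastic-domination step, since the $B_{ik}$ have success probability bounded below by (rather than exactly equal to) $p_0$. I would handle this by coupling each $B_{ik}$ to an independent $\mathsf{Ber}(p_0)$ it dominates, so the standard binomial Chernoff bound applies verbatim. I note that the generous $|I|^2K^2$ prefactor is not actually needed for this fixed-$\Gamma$ statement; it is presumably carried along because the exponential rate $\exp(-c\zeta^2\xi^2|I|K)$ is what must survive the union bound over an $\epsilon$-net of $\mathcal{B}(N,d,I,2r)$ in the proof of Lemma~\ref{lem:row_recovery}.
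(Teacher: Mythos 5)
Your proposal is correct and follows essentially the same route as the paper's proof: Paley--Zygmund applied to each $|\langle \psi_{ik},\Gamma_i\rangle|$ to get success probability $p_0=\zeta^2\xi^2/4$, a count of indicator events lower-bounding $L(\Gamma,\Psi)$, and a binomial tail bound via stochastic domination, with the constants matching. Your closing remarks (that the prefactor $|I|^2K^2$ is slack for a fixed $\Gamma$ and that the domination step needs the coupling you describe) are both accurate.
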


\begin{proof}
Consider the Paley-Zygmund inequality, which states that for any positive random variable $Z$, 
    $$\mathbb{P}\left(Z \geq \tfrac{\mathbb{E}Z}{2}\right) \geq \frac{(\mathbb{E}Z)^2}{4\mathbb{E}Z^2}\,.$$
Suppose $i\in I$. By the properties of $\psi_{ik}$, we have that $\mathbb{E}|\langle \Gamma_i,\psi_{ik}\rangle| \geq \frac{\zeta}{\sqrt{d}}$ and $\mathbb{E}|\langle \Gamma_i,\psi_{ik}\rangle|^2 \leq \frac{1}{\xi^2 d}$

Applying the Paley-Zygmund inequality to the random variable $|\langle \Gamma_i,\psi_{ik}\rangle|$, we conclude the result in~\eqref{eq:paley_zyg}:
\begin{equation}
    \label{eq:paley_zyg}
\mathbb{P}\left(|\langle\psi_{ik},\Gamma_i\rangle| \geq \frac{\zeta}{2\sqrt{d}} \right) \geq \frac{\zeta^2\xi^2}{4}
\end{equation}

Let $p_0 := \frac{\zeta^2\xi^2}{4}$. Let $N(\Gamma,\Psi) :=
\sum_{i\in I}\sum_{k=1}^{K} \mathbbm{1}\left(|\langle\psi_{ik},\Gamma_i\rangle| > \frac{\zeta}{2\sqrt{d}}\right)$. Clearly, $L(\Gamma,\Psi)\geq \tfrac{\zeta^4}{4dNK}N(\Gamma,\Psi)$ almost surely. Therefore, we have:
\begin{align}
    \mathbb{P}\left(L(\Gamma,\Psi) < \tfrac{|I|\zeta^4\xi^2}{32N\sqrt{d}}\right) &\leq \mathbb{P}\left(N(\Gamma,\Psi) < \tfrac{|I|K\zeta^2\xi^2}{8}\right) \nonumber \\
    &\leq \mathbb{P}\left(\mathsf{Bin}(|I|K,p_0) \leq \tfrac{|I|K p_0}{2}\right) \nonumber \\
     &\leq \exp(-cp_0 |I|K)
\end{align}
Here $\mathsf{Bin}(|I|K,p_0)$ denotes the binomial random variable. In the second step we have used the fact that $N(\Gamma,\Psi)$ is a sum of $|I|K$ independent Bernoulli random variables with probability of being $1$ for each of them being at-least $p_0 = \frac{\zeta^2\xi^2}{4}$. In the last step, we have used Sanov's theorem for large deviations. In the last step we have used Bernstein's inequality for concentration of sums of Bernoulli random variables  (see \cite{boucheron2013concentration}) 
\end{proof}

\begin{lemma}\label{lem:lower_iso_1} Suppose the distribution of $(\psi_{ik})_{i\in [N], k \in [K]}$ satisfy~\eqref{eq:dist_prop}.
Let $|I| \geq \tfrac{N}{10}$. There exist positive constants $c_0,c,C$ such that whenever $KN \geq \frac{Cr(N+d)}{\zeta^2\xi^2} \log \frac{d}{\zeta\xi}$, we have:
$$\mathbb{P}\left(\inf_{\Gamma \in \mathcal{B}(N,d,I,2r)}L(\Gamma,\Psi) < c_0 \tfrac{\zeta^4\xi^2}{d} \right) \leq \exp(-c \zeta^2\xi^2 N K)$$

\end{lemma}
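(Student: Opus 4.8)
The plan is to run a standard discretization ($\epsilon$-net) argument over the bounded-rank variety, feeding in the pointwise deviation bound already proved in Lemma~\ref{lem:paley_zyg_conc}. The first and essential step is a \textbf{reduction to the rows indexed by $I$}. Since every summand of $L$ is nonnegative, for any $\Gamma \in \mathcal{B}(N,d,I,2r)$ we have $L(\Gamma,\Psi) \geq \frac{1}{NK}\sum_{i\in I}\sum_{k}|\langle \Gamma_i,\psi_{ik}\rangle|^2 =: \tilde{L}(\Gamma^I,\Psi^I)$, and the submatrix $\Gamma^I \in \mathbb{R}^{|I|\times d}$ of rows in $I$ has rank at most $2r$ and unit-norm rows. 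Thus it suffices to lower bound the infimum of $\tilde L$ over the \emph{compact} set $\mathcal{K}$ of $|I|\times d$ matrices $M$ with $\mathrm{rank}(M)\leq 2r$ and $\norm{M_i}=1$ for all $i\in I$. This reduction is what makes the argument go through at all: $\mathcal{B}(N,d,I,2r)$ is itself unbounded because the rows outside $I$ are unconstrained, whereas $\mathcal{K}$ has Frobenius norm exactly $\sqrt{|I|}$, so its covering number scales correctly with $(N+d)$.

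Second, I would record a \textbf{Lipschitz estimate}. Using $\norm{\psi_{ik}}\leq 1$ and $\norm{M_i},\norm{M'_i}\leq 1$ together with $|a^2-b^2|=|a-b|\,|a+b|$, one gets $|\tilde L(M)-\tilde L(M')| \leq \frac{2}{N}\sum_{i\in I}\norm{M_i-M'_i} \leq \frac{2\sqrt{|I|}}{N}\|M-M'\|_F \leq \frac{2}{\sqrt N}\|M-M'\|_F$. Third, I would invoke a standard volumetric covering bound for bounded-rank matrices (see, e.g., \cite{candes2010power,vershynin2018high}) to obtain an $\epsilon$-net $\mathcal{N}\subseteq \mathcal{K}$, taken \emph{internal} to $\mathcal{K}$ so that every net point has exactly unit rows on $I$ and Lemma~\ref{lem:paley_zyg_conc} applies verbatim, with $\log|\mathcal{N}| \leq C\,r(|I|+d)\log(C\sqrt{|I|}/\epsilon)$.

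Fourth comes the \textbf{union bound}. By Lemma~\ref{lem:paley_zyg_conc}, each fixed $M\in\mathcal{N}$ satisfies $\tilde L(M,\Psi)\geq \frac{|I|\zeta^4\xi^2}{32Nd}\geq \frac{\zeta^4\xi^2}{320\,d}$ (using $|I|\geq N/10$) except with probability $\leq |I|^2K^2\exp(-c\zeta^2\xi^2|I|K)$. Choosing $\epsilon = c_0\,\zeta^4\xi^2\sqrt{N}/(8d)$ makes the discretization error $2\epsilon/\sqrt N$ at most half of the pointwise bound, while keeping $\log(\sqrt{|I|}/\epsilon)=O(\log(d/(\zeta\xi)))$, so that $\log|\mathcal{N}| \leq C\,r(N+d)\log(Cd/(\zeta\xi))$. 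Under the hypothesis $KN\geq \frac{Cr(N+d)}{\zeta^2\xi^2}\log\frac{d}{\zeta\xi}$, a union bound over $\mathcal{N}$ then gives total failure probability $\leq \exp\!\big(\log|\mathcal{N}|+2\log(|I|K)-\tfrac{c}{10}\zeta^2\xi^2 NK\big)\leq \exp(-c'\zeta^2\xi^2 NK)$ for $C$ large. On the good event, for every $M\in\mathcal{K}$ a nearby net point $M'$ yields $\tilde L(M)\geq \tilde L(M')-2\epsilon/\sqrt N \geq c_0\,\zeta^4\xi^2/d$, which is the claim.

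The \textbf{main obstacle} is the three-way balancing act between the covering count $r(N+d)\log(1/\epsilon)$, the pointwise deviation budget $\zeta^2\xi^2 NK$, and the Lipschitz discretization error $2\epsilon/\sqrt N$: one must pick a single $\epsilon$ (polynomial in $\zeta,\xi,1/d$) that simultaneously drives the discretization error below the pointwise floor $\sim\zeta^4\xi^2/d$ while keeping $\log(1/\epsilon)$ only logarithmic, so that the net cost stays at the target order $r(N+d)\log(d/(\zeta\xi))$. The accompanying care points are the reduction to $\mathcal{K}$ (to make the variety bounded with the right dimension count) and verifying the net can be taken inside $\mathcal{K}$ so that the unit-row hypothesis of Lemma~\ref{lem:paley_zyg_conc} holds at every net point.
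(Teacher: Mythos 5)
Your proposal is correct and follows essentially the same route as the paper's proof: reduce to the compact set of rank-$\le 2r$ matrices with unit rows on $I$ (and zero elsewhere), establish a Lipschitz bound for $L$ in a row-wise norm, cover that set with an $\eta$-net of log-cardinality $O(r(N+d)\log(1/\eta))$, and combine Lemma~\ref{lem:paley_zyg_conc} with a union bound after choosing $\eta$ polynomial in $\zeta\xi/d$. The only cosmetic differences are your use of the Frobenius norm in place of the paper's averaged row norm $\normcurr{\cdot}$ and your citing a generic covering bound where the paper constructs the net explicitly via the SVD factorization.
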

\begin{proof}
It is sufficient to prove this result for $\Gamma \in \mathcal{B}_0(N,d,I,2r)\subseteq \mathcal{B}(N,d,I,2r)$, which is the set of all matrices such that $\|\Gamma_i\| = 1$ for every $i \in I$ and $0$ otherwise. Define $\normcurr{\Gamma} := \frac{1}{N}\sum_{i=1}^{N}\|\Gamma_i\|$. Suppose $\hat{\Gamma} \in \mathcal{B}_0(N,d,I,2r)$ is such that $\normcurr{\Gamma-\hat{\Gamma}} < \eta$. Then, 

\begin{align}
    L(\Gamma,\Psi) &= \frac{1}{NK}\sum_{i=1}^{N}\sum_{k=1}^{N}|\langle\Gamma_i,\psi_{ik}\rangle|^2 \nonumber \\
    &\geq \frac{1}{NK}\sum_{i=1}^{N}\sum_{k=1}^{N}|\langle\hat{\Gamma}_i,\psi_{ik}\rangle|^2 - 2|\langle\hat{\Gamma}_i-\Gamma_i,\psi_{ik}\rangle||\langle\hat{\Gamma}_i,\psi_{ik}\rangle| \\
    &= L(\hat{\Gamma},\Psi) - \normcurr{\Gamma-\hat{\Gamma}} \geq L(\hat{\Gamma},\Psi) - 2\eta \label{eq:eps_net_lb}
\end{align}
In the third step, we have used the fact that $\|\psi_{ik}\| \leq 1$ and the Cauchy-Schwarz inequality to imply $|\langle\hat{\Gamma}_i-\Gamma_i,\psi_{ik}\rangle| \leq \|\hat{\Gamma}_i - \Gamma_i\|$. 
Therefore, given any $\eta$ net of $\mathcal{B}_0(N,d,I,2r)$, denoted by $\hat{\mathcal{B}}_{0,\eta}$, we must have:
\begin{equation}
    \inf_{\Gamma \in \mathcal{B}_0(N,d,I,2r)} L(\Gamma,\Psi) \geq  \inf_{\Gamma \in \hat{\mathcal{B}}_{0,\eta}} L(\hat{\Gamma},\Psi) - 2\eta
\end{equation}

We will now parametrize $\mathcal{B}_0(N,d,I,2r)$ as follows:
\begin{claim}\label{claim:decomp_matrix}
Every $\Gamma \in \mathcal{B}_0(N,d,I,2r)$ can be written as 
\begin{equation}\label{eq:parametrize}
    \Gamma_i =\begin{cases}\sum_{k=1}^{2r} u_{ik}v_k \text{ if } i \in I \\
    0 \text{ otherwise}
    \end{cases}
\end{equation}
Where $v_1,\dots, v_{2r}$ are orthonormal vectors in $\mathbb{R}^d$ and $u_i = (u_{ik})_{k=1}^{2r} \in \mathbb{R}^{2r}$ are such that $\|u_i\| = 1$.
\end{claim}
\begin{proof}
By the singular value decomposition, we have:
$\Gamma = W\Sigma V^{\intercal}$ for orthogonal matrices $W,V$ and the singular value matrix $\Sigma$. 
Therefore, $\Gamma_{ij} = \sum_{k=1}^{2r}w_{ik}\sigma_k v_{kj}$
Denoting $u_{ik} := w_{ik}\sigma_k$, we note that $\Gamma_{i} = \sum_{k=1}^{2r} u_{ik}v_k$, where $v_k$ is the $k$-th column of $V$. 

Now, it remains to show that $\|u_{i}\| = 1$. By ortho-normality of $v_1,\dots,v_{2r}$ and the definition of $\Gamma$, we have: $1= \|\Gamma_i\|^2 = \sum_{k=1}^{2r}|u_{ik}|^2 = \|u_i\|^2$
\end{proof}

Therefore, we construct an $\eta$-net for $\mathcal{B}_0(N,d,I,2r)$ as follows: consider any $\eta/2$-net over the sphere $\mathcal{S}^{2r-1}$, denoted by $\hat{\mathcal{S}}_{\tfrac{\eta}{2}}(2r)$ with respect to the Euclidean norm. Similarly, consider any $\tfrac{\eta}{2\sqrt{2r}}$-net over the sphere $\mathcal{S}^{d-1}$, denoted by $\hat{\mathcal{S}}_{\tfrac{\eta}{2\sqrt{2r}}}(d)$. We draw $(u_i)_{i\in I}, (v_k)_{k\in [2r]}$ from the set $\prod_{i \in I}\hat{\mathcal{S}}_{\tfrac{\eta}{2}}(2r) \prod_{k \in [2r]} \hat{\mathcal{S}}_{\tfrac{\eta}{2\sqrt{2r}}}(d)$ and take $\hat{\mathcal{B}}_{0,\eta}$ to be the set of all $\hat{\Gamma}(u,v)$ of the form given in Claim~\ref{claim:decomp_matrix}. 
 
 \begin{claim}\label{claim:covering_no}
 $\hat{\mathcal{B}}_{0,\eta}$ is an $\eta$ net for $\mathcal{B}_0(N,d,I,2r)$ with respect to the norm $\normcurr{\cdot}$.
 
 $$|\hat{\mathcal{B}}_{0,\eta}| \leq \exp\left( 2d r \log(\tfrac{4\sqrt{2r}}{\eta} + 1) + 2|I|r\log(\tfrac{4}{\eta} + 1)\right) $$
 \end{claim}
 \begin{proof}[Proof of Claim~\ref{claim:covering_no}]
Let $\Gamma \in \mathcal{B}_0(N,d,I,2r) $. Let $(u_i),(v_k)$ be such that: Claim~\ref{claim:decomp_matrix},  $\Gamma_i = \sum_{k=1}^{2r} u_{ik}v_k$. By construction, there exists $\hat{\Gamma}\in\hat{\mathcal{B}}_{0,\eta}$ such that:

$$\hat{\Gamma}_i = \sum_{k=1}^{2r} \hat{u}_{ik}\hat{v}_k$$ with $\|u_i - \hat{u_i}\| \leq \frac{\eta}{2}$ and $\|v_k - \hat{v_k}\| \leq \frac{\eta}{2\sqrt{2r}}$ for every $i \in I$ and $k \in [2r]$. 

In order to show that $\normcurr{\Gamma - \hat{\Gamma}} \leq \eta$, it is sufficient to show that $\|\hat{\Gamma}_i - \hat{\Gamma}_i\| \leq \eta$ for every $i \in [I]$. 

\begin{align}
    \|\hat{\Gamma}_i - \Gamma_i\| &= \bigr\|\sum_{k=1}^{2r} (\hat{u}_{ik}-u_{ik})v_k + \sum_{k=1}^{2r}u_{ik}(v_k - \hat{v}_k)\bigr\| \nonumber \\
    &\leq \bigr\|\sum_{k=1}^{2r} (\hat{u}_{ik}-u_{ik})v_k\bigr\| + \bigr\|\sum_{k=1}^{2r}u_{ik}(v_k - \hat{v}_k)\bigr\| \nonumber \\
    &= \sqrt{\sum_{k=1}^{2r}(\hat{u}_{ik}-u_{ik})^2} + \bigr\|\sum_{k=1}^{2r}u_{ik}(v_k - \hat{v}_k)\bigr\| \leq \frac{\eta}{2} +  \bigr\|\sum_{k=1}^{2r}u_{ik}(v_k - \hat{v}_k)\bigr\| \nonumber \\
    &\leq \frac{\eta}{2} + \sqrt{\sum_{k=1}^{2r} u_{ik}^2}\sqrt{\sum_{k=1}^{2r}\|v_k-\hat{v}_k\|^{2}} \leq \eta
\end{align}

Therefore $\hat{\mathcal{B}}_{0,\eta}$ is an $\eta$ net with respect to $\normcurr{\cdot}$.  By Corollary 4.2.13 in \cite{vershynin2018high}, we can pick:
 $\bigr|\hat{\mathcal{S}}_{\tfrac{\eta}{2\sqrt{2r}}}(d)\bigr| \leq (\tfrac{4\sqrt{2r}}{\eta} + 1)^d$ and  $|\hat{\mathcal{S}}_{\tfrac{\eta}{2}}(2r)| \leq (\tfrac{4}{\eta} + 1)^{2r}$ and conclude the bound on the cardinality of $\hat{\mathcal{B}}_{0,\eta}$.

 \end{proof} 

By Lemma~\ref{lem:paley_zyg_conc} and a union bound, 
\begin{align}
    &\mathbb{P}\left(\inf_{\hat{\Gamma} \in \hat{B}_{0,\eta}} L(\hat{\Gamma},\Psi) < \frac{\zeta^4\xi^2|I|}{32Nd}\right) \leq |\hat{\mathcal{B}}_{0,\eta}| \exp(-c\zeta^2\xi^2|I|K) \nonumber \\
    &\leq \exp\left( 2d r \log(\tfrac{4\sqrt{2r}}{\eta} + 1) + 2|I|r\log(\tfrac{4}{\eta} + 1)-c\zeta^2\xi^2|I|K\right) \label{eq:net_probab}
\end{align}

Therefore, whenever taking $|I| \geq \frac{N}{10}$ and $\eta = c_1\frac{\zeta^4\xi^2}{d}$ for some constant $c_1$ small enough, and combining~\eqref{eq:net_probab} with~\eqref{eq:eps_net_lb}, we conclude that whenever $K \geq \frac{C(r+ \tfrac{dr}{N})}{\zeta^2\xi^2} \log \frac{d}{\zeta\xi}$ for a large enough constant $C$, we have:

$$\mathbb{P}\left(\inf_{\Gamma \in \mathcal{B}_0(N,d,I,2r)}L(\Gamma,\Psi) < c_0 \tfrac{\zeta^4\xi^2}{d} \right) \leq \exp(-c \zeta^2\xi^2 N K)$$

\end{proof}

Now, consider $|I| \geq \frac{N}{10}$. The number of such sets $I$ is at-most $\exp(c_1 N)$ for some constant $c_1 > 0$. Therefore, applying Lemma~\ref{lem:lower_iso_1} along with the union bound over all $I$ such that $|I| \geq \frac{N}{10}$ we have:
\begin{corollary}\label{cor:zero_rows}
Under the conditions of Lemma~\ref{lem:lower_iso_1}, we have:
$$\inf_{\substack{I \subseteq N\\ |I| \geq \tfrac{N}{10}}} \inf_{\Gamma \in \mathcal{B}(N,d,I,2r)}L(\Gamma,\Psi) > c_0 \frac{\zeta^4\xi^2}{d}  $$ 
with probability at-least $1- \exp(-c \zeta^2\xi^2 NK)$
\end{corollary}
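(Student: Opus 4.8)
The plan is to derive Corollary~\ref{cor:zero_rows} from the single-set bound in Lemma~\ref{lem:lower_iso_1} by a union bound over the index set $I$, so that no new concentration argument is needed. The starting observation is that, since there are only finitely many eligible sets $I$, the event of interest decomposes as a finite union:
$$\left\{\inf_{\substack{I\subseteq[N]\\|I|\geq N/10}}\inf_{\Gamma\in\mathcal{B}(N,d,I,2r)}L(\Gamma,\Psi) < c_0\frac{\zeta^4\xi^2}{d}\right\} = \bigcup_{\substack{I\subseteq[N]\\|I|\geq N/10}}\left\{\inf_{\Gamma\in\mathcal{B}(N,d,I,2r)}L(\Gamma,\Psi) < c_0\frac{\zeta^4\xi^2}{d}\right\}.$$
For each individual $I$ with $|I|\geq N/10$, Lemma~\ref{lem:lower_iso_1} already bounds the probability of the corresponding event by $\exp(-c\zeta^2\xi^2NK)$, so the entire task reduces to counting the sets $I$ and absorbing the resulting combinatorial factor into this exponential.

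First I would bound the number of eligible index sets crudely by $|\{I\subseteq[N]:|I|\geq N/10\}| \leq 2^N = \exp(N\log 2)$, i.e. by $\exp(c_1 N)$ with $c_1=\log 2$. Applying the union bound over this collection together with Lemma~\ref{lem:lower_iso_1} then gives
$$\mathbb{P}\left(\inf_{\substack{I\subseteq[N]\\|I|\geq N/10}}\inf_{\Gamma\in\mathcal{B}(N,d,I,2r)}L(\Gamma,\Psi) < c_0\frac{\zeta^4\xi^2}{d}\right) \leq \exp(c_1 N)\exp(-c\zeta^2\xi^2NK) = \exp\bigl(-N(c\zeta^2\xi^2 K - c_1)\bigr).$$

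The only genuine subtlety, and the step I expect to require care, is checking that the $\exp(c_1 N)$ factor coming from the union bound is dominated by the per-set failure probability, i.e. that the exponent $c\zeta^2\xi^2K - c_1$ stays bounded below by a positive multiple of $\zeta^2\xi^2 K$. This is exactly where the sample-size hypothesis of Lemma~\ref{lem:lower_iso_1} is used: the requirement $KN\geq \frac{Cr(N+d)}{\zeta^2\xi^2}\log\frac{d}{\zeta\xi}$ forces $\zeta^2\xi^2 K \geq Cr\log\frac{d}{\zeta\xi}\geq C$, so choosing $C$ large enough guarantees $c\zeta^2\xi^2 K\geq 2c_1$ and hence $c\zeta^2\xi^2 K - c_1 \geq \tfrac{c}{2}\zeta^2\xi^2 K$. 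Renaming $\tfrac{c}{2}$ as $c$ yields the stated bound $\exp(-c\zeta^2\xi^2 NK)$ and completes the argument. The decomposition and union bound are routine; the proof stands or falls on tracking this constant correctly.
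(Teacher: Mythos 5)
Your proposal is correct and follows exactly the paper's argument: the paper likewise bounds the number of eligible sets $I$ by $\exp(c_1 N)$ and applies a union bound over them together with Lemma~\ref{lem:lower_iso_1}. You additionally spell out why the $\exp(c_1 N)$ factor is absorbed into $\exp(-c\zeta^2\xi^2 NK)$ via the sample-size hypothesis, a step the paper leaves implicit but which you verify correctly.
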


We are now ready to prove Lemma~\ref{lem:row_recovery}. 
\begin{proof}[Proof of Lemma~\ref{lem:row_recovery}]
Combining Lemma~\ref{lem:non_vanish_lemma} and Corollary~\ref{cor:zero_rows}, we conclude that with probability at-least $1- \exp(-c \zeta^2\xi^2 NK)$, $$\mathcal{Y}(\Psi)\bigcap\biggr(\bigcup_{\substack{I \subseteq [N]\\ |I|\geq \tfrac{N}{10}}}\mathcal{M}(N,d,I,2r)\biggr) = \emptyset$$

Note that if $\Delta \in \mathcal{Y}(\Psi)$ such that $|I_{\mathcal{Z}}(\Delta)| > \frac{N}{10}$ implies $\Delta \in \mathcal{M}(N,d,I,2r)$ for some $|I| > \frac{N}{10}$. This allows us to conclude the statement of the lemma. 
\end{proof}

\end{document}